\documentclass[number]{ReportTemplate}
\usepackage{utopia}
\usepackage{amssymb}
\usepackage{amsfonts}
\usepackage{amsmath}
\usepackage{mathtools}
\usepackage{amsthm}
\usepackage{bm}
\usepackage{graphicx}
\usepackage{algorithm}
\usepackage{algorithmic}
\usepackage[american]{babel}
\usepackage{subfigure}
\usepackage{hyperref}
\mathtoolsset{showonlyrefs,showmanualtags}
\usepackage{color}

\newcommand\numberthis{\addtocounter{equation}{1}\tag{\theequation}}

\renewenvironment{myproof}[1]
{\par\noindent\textbf{Proof of #1.}\ \enspace\ignorespaces\begin{allowdisplaybreaks}}
{\end{allowdisplaybreaks}\hspace{\stretch{1}}$\square$}

\begin{document}
\begin{frontmatter}

\title{Result Diversification by Multi-objective Evolutionary Algorithms with Theoretical Guarantees}
\author{Chao Qian}
\ead{qianc@lamda.nju.edu.cn}
\author{Dan-Xuan Liu}
\ead{liudx@lamda.nju.edu.cn}
\author{Zhi-Hua Zhou\corref{cor1}}
\ead{zhouzh@lamda.nju.edu.cn}
\cortext[cor1]{Corresponding author}
\address{State Key Laboratory for Novel Software Technology, Nanjing University, Nanjing 210023, China\vspace{-1em}
}

\begin{abstract}
Given a ground set of items, the result diversification problem aims to select a subset with high ``quality" and ``diversity" while satisfying some constraints. It arises in various real-world artificial intelligence applications, such as web-based search, document summarization and feature selection, and also has applications in other areas, e.g., computational geometry, databases, finance and operations research. Previous algorithms are mainly based on greedy or local search. In this paper, we propose to reformulate the result diversification problem as a bi-objective maximization problem, and solve it by a multi-objective evolutionary algorithm (EA), i.e., the GSEMO. We theoretically prove that the GSEMO can achieve the (asymptotically) optimal theoretical guarantees under both static and dynamic environments. For cardinality constraints, the GSEMO can achieve the optimal polynomial-time approximation ratio, $1/2$. For more general matroid constraints, the GSEMO can achieve an asymptotically optimal polynomial-time approximation ratio, $1/2-\epsilon/(4n)$, where $\epsilon>0$ and $n$ is the size of the ground set of items. Furthermore, when the objective function (i.e., a linear combination of quality and diversity) changes dynamically, the GSEMO can maintain this approximation ratio in polynomial running time, addressing the open question proposed by Borodin \textit{et al.}~\cite{borodin2017max}. This also theoretically shows the superiority of EAs over local search for solving dynamic optimization problems for the first time, and discloses the robustness of the mutation operator of EAs against dynamic changes. Experiments on the applications of web-based search, multi-label feature selection and document summarization show the superior performance of the GSEMO over the state-of-the-art algorithms (i.e., the greedy algorithm and local search) under both static and dynamic environments.
\end{abstract}

\begin{keyword}
Result diversification \sep monotone submodular functions \sep diversities \sep cardinality constraints \sep matroid constraints \sep dynamic environments \sep multi-objective evolutionary algorithms \sep running time analysis \sep experimental studies \end{keyword}
\end{frontmatter}

\newpage
\section{Introduction}

In many real-world artificial intelligence applications such as web-based search~\cite{borodin2017max}, document summarization~\cite{dasgupta2013summarization} and feature selection~\cite{ghadiri2019distributed}, one often needs to select a subset of items (e.g., documents, sentences and features) with high ``quality" and ``diversity" while satisfying some constraints. For example, given a query, a search engine usually wants to return a limited number of diversified documents which are relevant to the query as much as possible. Such problems are called result diversification, which can be formalized as
\begin{align}\label{eq-general}
\arg\max\nolimits_{X \subseteq V} f(X)+\lambda \cdot div(X) \quad s.t. \quad X \in \mathcal{F},
\end{align}
where $V$ is a ground set of items, $f: 2^V \rightarrow \mathbb{R}$ is a monotone submodular function characterizing the quality of a subset, the diversity function $div$ is usually defined as
\begin{align}\label{eq-max-sum}\forall X \subseteq V: div(X)=\sum_{\{u,v\}: u,v \in X} d(u,v)\end{align} w.r.t. some distance measure $d: V \times V \rightarrow \mathbb{R}$, and $\mathcal{F}$ represents the feasible solution space consisting of the subsets satisfying the constraints. The monotone submodularity of $f$ implies that additional items will not lessen the value of $f$, but the increment will be at a decreasing rate as the subset extends, which expresses the property of quality measure in many applications. Note that a distance function $d: V \times V \rightarrow \mathbb{R}$ is a symmetric nonnegative function, and satisfies $\forall v \in V: d(v,v)=0$. In this paper, if without explicit specification, $d$ is assumed to further satisfy the triangle inequality, \begin{align}\label{eq-mid-36}\forall u,v,w \in V: d(u,v)+d(v,w) \geq d(u,w),\end{align} that is, it is a metric.

The result diversification problem has applications in various areas, e.g., artificial intelligence, computational geometry, databases, finance and operations research, and has attracted a lot of attentions. It has been addressed with gradually relaxed assumptions on the quality function $f$ and the constraint $X \in \mathcal{F}$. The case with $f=0$ and $\mathcal{F}=\{X \subseteq V \mid |X| \leq k\}$, i.e.,
\begin{align}\label{eq-case-1}
\arg\max\nolimits_{X \subseteq V} div(X) \quad s.t. \quad |X| \leq k,
\end{align}
also called max-sum dispersion, was first studied. That is, the quality is not considered, and only the diversity in Eq.~(\refeq{eq-max-sum}) is maximized under a cardinality constraint. Ravi \textit{et al.}~\cite{ravi1994heuristic} proved that a standard greedy algorithm, which iteratively adds one item with the largest marginal gain on $div$, i.e., $\max_{u \in V \setminus X} div(X \cup \{u\})-div(X)=\max_{u \in V \setminus X} \sum_{v \in X} d(u,v)$, can achieve a $(1/4)$-approximation ratio, which was later shown to achieve an approximation ratio of $1/2$~\cite{birnbaum2009improved}. Hassin \textit{et al.}~\cite{hassin1997approximation} proposed a different greedy algorithm, which iteratively selects two items with the largest distance, and proved that it can also achieve a $(1/2)$-approximation ratio. Note that $1/2$ has been shown to be the optimal polynomial-time approximation ratio under the planted clique assumption~\cite{bhaskara2016linear,borodin2017max}. To handle large-scale data sets, the distributed greedy algorithm was developed~\cite{zadeh2017scalable}, which uses a two-round divide and conquer strategy: it partitions the ground set $V$ randomly into different machines and runs the greedy algorithm on each machine independently; then it combines the subset generated on each machine and runs the greedy algorithm again. Compared with the centralized greedy algorithm, the guarantee on the approximation ratio of the distributed greedy algorithm is sacrificed, which is decreased to $1/4$. In~\cite{sydow2014improved}, the distance function $d$ is relaxed, which is not necessarily a metric, but only requires to satisfy the parameterized triangle inequality, i.e., $\forall u,v,w \in V: d(u,v)+d(v,w) \geq \alpha \cdot d(u,w)$, where $0 \leq \alpha \leq 1$. The greedy algorithm was shown to achieve an approximation ratio of $\alpha/2$.

Besides the sum of distances between items in Eq.~(\refeq{eq-max-sum}), other measures of diversity have also been considered. Ravi \textit{et al.}~\cite{ravi1994heuristic} analyzed the max-min form, maximizing the minimum distance between items, i.e.,
\begin{align}\label{eq-max-min}\forall X \subseteq V: div(X)=\min_{\{u,v\}: u,v \in X} d(u,v),\end{align}
and proved that the greedy algorithm can achieve a $(1/2)$-approximation ratio. Let $G=(V,E)$ denote a complete graph, where each vertex corresponds to an item, and the weight of each edge corresponds to the distance between two items. Halld{\'o}rsson \textit{et al.}~\cite{halldorsson1999finding} analyzed the max-mst form, which maximizes
\begin{align}\label{eq-max-mst}\forall X \subseteq V: div(X)=mst(X),\end{align}
where $mst(X)$ denotes the weight of the minimum spanning tree of $X$ on the graph $G$. They proved that the greedy algorithm can achieve an approximation ratio of $1/4$. For more results with different diversity measures, we refer to~\cite{chandra2001approximation}. Note that these diversity measures are non-submodular, and without explicit specification, the diversity $div$ in this paper is assumed to be the most common one, i.e., Eq.~(\ref{eq-max-sum}).

Abbassi \textit{et al.}~\cite{abbassi2013diversity} considered a more general problem than Eq.~(\refeq{eq-case-1}) by using a matroid constraint, i.e,
\begin{align}\label{eq-case-2}
\arg\max\nolimits_{X \subseteq V} div(X) \quad s.t. \quad X \in \mathcal{F},
\end{align}
where $\mathcal{F}$ is a collection of subsets of $V$, and $(V,\mathcal{F})$ is a matroid satisfying the hereditary (i.e., $\emptyset \in \mathcal{F}$ and $\forall X \subseteq Y \in \mathcal{F}: X \in \mathcal{F}$) and augmentation (i.e., $\forall X,Y\in \mathcal{F}, |X|>|Y|: \exists v \in X \setminus Y, Y \cup \{v\} \in \mathcal{F}$) properties. Note that a cardinality constraint $|X|\leq k$ is actually a uniform matroid. The greedy algorithm now cannot achieve any constant approximation ratio~\cite{borodin2017max}. Abbassi \textit{et al.}~\cite{abbassi2013diversity} proved that a local search algorithm, which tries to iteratively improve a subset by swapping two items (i.e., deleting one item inside the subset and inserting one item outside), can achieve a $(1/2)$-approximation ratio. To be scalable to large data sets, Ceccarello \textit{et al.}~\cite{ceccarello2020general} developed the distributed and streaming algorithms to construct a small coreset, which contains a feasible solution with the diversity at least a factor $1-\epsilon$ of the optimum. The local search algorithm running on the coreset then can achieve an approximation ratio of $(1-\epsilon)/2$. By considering distances of negative type, better approximation ratios for the problem Eq.~(\refeq{eq-case-2}) have been obtained. A distance $d$ of negative type satisfies $\forall\bm{x} \in \mathbb{R}^n$ with $\sum^{n}_{i=1} x_i=0$, $\bm{x}^{\mathrm{T}}\mathbf{D}\bm{x} \leq 0$, where $n$ is the size of the ground set $V=\{v_1,v_2,\ldots,v_n\}$, and $\mathbf{D} \in \mathbb{R}^{n\times n}$ is the distance matrix with $D_{i,j}=d(v_i,v_j)$. Note that a distance of negative type is not necessarily a metric, and vice-versa. In~\cite{cevallos2016max}, it has been shown that an approximation ratio of $1-c\log r /r$ can be achieved by rounding the solution generated from a non-convex relaxation of the problem, where $c$ is an absolute constant and $r$ is the rank of the matroid $(V,\mathcal{F})$. In~\cite{cevallos2019improved}, the local search algorithm has been shown to achieve an approximation ratio of $1-4/(r+2)$.

Instead of considering a more general constraint, Gollapudi and Sharma~\cite{gollapudi2009axiomatic} generalized the problem Eq.~(\refeq{eq-case-1}) by adding a modular function on the objective, i.e.,
\begin{align}\label{eq-case-3}
\arg\max\nolimits_{X \subseteq V} \sum_{v \in X} w(v) + \lambda \cdot div(X) \quad s.t. \quad |X| \leq k.
\end{align}
That is, the objective function is now a linear combination of a modular function $f(X)=\sum_{v \in X} w(v)$ (characterizing the quality) and the diversity in Eq.~(\ref{eq-max-sum}). By designing a new distance function, they reduced this problem (i.e., Eq.~(\refeq{eq-case-3})) to Eq.~(\refeq{eq-case-1}), and then applied the greedy algorithm in~\cite{hassin1997approximation}, resulting in a $(1/2)$-approximation ratio.

Borodin \textit{et al.}~\cite{borodin2017max} further relaxed the objective function by using a general monotone submodular function $f$ instead of a modular function to characterize the quality of a subset of items. That is,
\begin{align}\label{eq-case-4}
\arg\max\nolimits_{X \subseteq V} f(X)+\lambda \cdot div(X) \quad s.t. \quad |X| \leq k.
\end{align}
They proposed a greedy algorithm, which iteratively adds one item with the largest marginal gain on $f(X)/2+\lambda \cdot div(X)$ instead of the original objective function $f(X)+\lambda \cdot div(X)$. That is, in each iteration, the item maximizing $(f(X \cup \{u\})-f(X))/2 + \lambda \cdot (div(X \cup \{u\})-div(X))=(f(X \cup \{u\})-f(X))/2+\lambda \cdot \sum_{v \in X} d(u,v)$ is selected. This greedy algorithm achieves an approximation ratio of $1/2$. A distributed version was developed in~\cite{ghadiri2019distributed} to handle huge data sets, whose approximation ratio is decreased to $1/31$. Zadeh and Ghadiri~\cite{zadeh2015max} relaxed the distance function $d$ to satisfy \begin{align}\label{eq-mid-35}\forall u,v,w \in V: \alpha \cdot (d(u,v)+d(v,w)) \geq d(u,w),\end{align} where $\alpha \geq 1$, instead of the triangle inequality with $\alpha=1$. They showed that the greedy algorithm achieves an approximation ratio of $1/(2\alpha)$. In~\cite{dasgupta2013summarization}, the diversity measures in Eqs.~(\refeq{eq-max-min}) and~(\refeq{eq-max-mst}) were considered. For the problem with Eq.~(\refeq{eq-max-min}) (i.e., the minimum distance between items) as the diversity, an approximation ratio of $1/4$ can be achieved by running two standard greedy algorithms to maximize $f$ and $div$, respectively, and returning the better one between the two generated subsets. For the problem with Eq.~(\refeq{eq-max-mst}) (i.e., the weight of minimum spanning tree) as the diversity, the greedy algorithm, which iteratively adds one item maximizing $f(X \cup \{u\})-f(X) + \lambda \cdot \min_{v \in X} d(u,v)$, achieves an approximation ratio of $1/(3\log k)$.

Finally, the general problem with a monotone submodular function $f$ and a matroid constraint, i.e.,
\begin{align}\label{eq-case-5}
\arg\max\nolimits_{X \subseteq V} f(X)+\lambda \cdot div(X) \quad s.t. \quad X \in \mathcal{F},
\end{align}
where $(V,\mathcal{F})$ is a matroid, was considered. Borodin \textit{et al.}~\cite{borodin2017max} showed that the local search algorithm, which starts from a basis (i.e., a subset with the maximum size in $\mathcal{F}$) of the matroid and iteratively improves the basis by swapping two items, can achieve a $(1/2)$-approximation ratio. When the triangle inequality satisfied by the distance function $d$ is relaxed to $\forall u,v,w \in V: \alpha \cdot (d(u,v)+d(v,w)) \geq d(u,w)$, where $\alpha \geq 1$, the local search algorithm achieves an approximation ratio of $1/(2\alpha^2)$ accordingly~\cite{zadeh2015max}. By considering that distances are negative type instead of metrics, Cevallos \textit{et al.}~\cite{cevallos2019improved} proved that the local search algorithm can obtain a better approximation ratio of $1-\max\{4/(r+2),c/e\}-\epsilon$, where $r$ is the rank of the matroid, $c$ is the curvature of $f$, and $\epsilon >0$.

Borodin \textit{et al.}~\cite{borodin2017max} also considered the result diversification problem Eq.~(\refeq{eq-case-5}) under dynamic environments. That is, the quality function $f$ or the distance $d$ changes over time, resulting in the objective function $f(X)+\lambda \cdot div(X)$ changing dynamically. Once seeing a change, we want to maintain the solution quality by modifying the current solution without completely recomputing it. In particular, starting from a solution with a good approximation ratio for the old objective, an algorithm has to be able to regain a solution with the same approximation ratio for the new objective efficiently. However, it is not yet clear whether the local search algorithm can maintain a $(1/2)$-approximation ratio for a dynamic version of the problem Eq.~(\refeq{eq-case-5}) in polynomial running time. On the positive side, by considering the specific problem Eq.~(\refeq{eq-case-3}) (i.e., $f$ is a modular function and the constraint is a cardinality constraint) and restricting the magnitude of dynamic change, the local search algorithm can maintain a $(1/3)$-approximation ratio by a single greedy swap operation~\cite{borodin2017max}. For the general problem Eq.~(\refeq{eq-case-5}) under dynamic environments, it is still an open question whether it is possible to maintain a $(1/2)$-approximation ratio in polynomial running time.

Inspired by the encouraging performance of multi-objective evolutionary algorithms (MOEAs) for solving diverse submodular optimization problems~\cite{friedrich2015maximizing,friedrich2018heavy,qian2020multi,qian2017constrained,qian2019maximizing}, we propose to solve the result diversification problem by MOEAs. Mimicking natural phenomena, MOEAs are a popular tool for multi-objective optimization, i.e., optimizing multiple objectives simultaneously~\cite{chen2021interactive,deb:2001}. The general idea of using MOEAs for result diversification is to first transform the original problem as a bi-objective maximization problem $\arg\max\nolimits_{X \subseteq V} (f_1(X),f_2(X))$, then employ a simple MOEA (i.e., the GSEMO~\cite{laumanns2004running}) to solve it, and finally return the best feasible solution from the generated population. Our main theoretical results can be summarized as follows.\vspace{-0.8em}
\begin{itemize}
  \item For the result diversification problem with a cardinality constraint, i.e., Eq.~(\refeq{eq-case-4}), we set $f_1(X)=(1+|X|/k)\cdot f(X)/2+\lambda \cdot div(X)$ and $f_2(X)=-|X|$. That is, the GSEMO is to maximize the objective function $f_1$ and minimize the subset size $|X|$ simultaneously. We prove that after running at most $enk^3/2$ expected number of iterations, the GSEMO can achieve an approximation ratio of $1/2$ (i.e., {\bf Theorem~\ref{theo-cardinality-sum}}), which is optimal~\cite{borodin2017max}. If the metric distance $d$ is relaxed to satisfy Eq.~(\refeq{eq-mid-35}) instead of Eq.~(\refeq{eq-mid-36}), the approximation ratio of the GSEMO becomes $1/(2\alpha)$ (i.e., {\bf Corollary~\ref{coro:cardinality}}). When $f_1(X)$ is set to the original objective function $f(X)+\lambda \cdot div(X)$, the GSEMO achieves an approximation ratio of $1/2-\epsilon/(4n)$ in polynomial time (i.e., {\bf Theorem~\ref{theo-cardinality-sum-v}}). We also consider the problem where the diversity measure $div$ is Eq.~(\refeq{eq-max-min}) or~(\refeq{eq-max-mst}) instead of the commonly used Eq.~(\refeq{eq-max-sum}), and show that the GSEMO can still achieve good approximation performance.\\
      (1) \;When the diversity measure is Eq.~(\refeq{eq-max-min}), we reformulate the original problem as two bi-objective maximization problems: $f_1(X) = f(X) \wedge f_2(X) = -|X|$ and $f_1(X) = div(X) \wedge f_2(X) = |X|$, which are optimized by the GSEMO, respectively. The better one between the two generated solutions is output as the final solution. We prove that after running at most $enk(k+1)$ and $enk^2$ expected number of iterations, respectively, for the two bi-objective problems, the GSEMO can achieve an approximation ratio of $1/4$ (i.e., {\bf Theorem~\ref{theo-cardinality-min}}), which reaches the best-known one~\cite{dasgupta2013summarization}. But if we use the straightforward bi-objective reformulation, i.e., $f_1(X)=f(X)+\lambda \cdot div(X)$ and $f_2(X)=-|X|$, the GSEMO will fail to guarantee an approximation ratio of $1/4$ in polynomial time (i.e., {\bf Theorem~\ref{theo-cardinality-min-v}}).\\
      (2) \;When the diversity measure is Eq.~(\refeq{eq-max-mst}), we set $f_1(X)=f(X)+\lambda \cdot \sum^{|X|}_{i=2} \min_{v \in \{u_1,\ldots,u_{i-1}\}} d(u_i,v)$ and $f_2(X)=-|X|$, where $(u_1,\ldots,u_{|X|})$ is a permutation of the items in $X$. We prove that after running at most $enk(k+1)$ expected number of iterations, the GSEMO can achieve an approximation ratio of $(1-1/e)(1/(2\log k))\approx 0.316/\log k$ (i.e., {\bf Theorem~\ref{theo-cardinality-mst}}), which is slightly worse than the best-known one, $1/(3\log k)$~\cite{dasgupta2013summarization}.
 \item For the result diversification problem with a matroid constraint, i.e., Eq.~(\refeq{eq-case-5}), we set $f_1(X)=f(X)+\lambda \cdot div(X)$ and $f_2(X)=|X|$. That is, the GSEMO is to maximize the original objective function $f(X)+\lambda \cdot div(X)$ and the subset size $|X|$ simultaneously. We prove that after running at most $O(rn^3(n+(r\log r)/\epsilon))$  expected number of iterations, the GSEMO can achieve an approximation ratio of $1/2-\epsilon/(4n)$ (i.e., {\bf Theorem~\ref{theo-matroid-sum}}), which is asymptotically optimal~\cite{borodin2017max}. If the metric distance $d$ is relaxed to satisfy Eq.~(\refeq{eq-mid-35}), the approximation ratio of the GSEMO becomes $(1/2-\epsilon/(4n))/\alpha^2$ (i.e., {\bf Corollary~\ref{coro:matroid}}).
  \item For the general problem Eq.~(\refeq{eq-case-5}) under dynamic environments, i.e., where the objective function changes over time, we prove that the GSEMO can maintain an asymptotically optimal approximation ratio of $1/2-\epsilon/(4n)$ by running $O(rn^3(n+(r\log r)/\epsilon))$ expected number of iterations (i.e., {\bf Theorem~\ref{theo-dynamic}}), which addresses the open question proposed in~\cite{borodin2017max}. Compared with the local search algorithm~\cite{borodin2017max}, the advantage of the GSEMO adapting to dynamic changes is mainly due to the employed bit-wise mutation operator, which has a good exploration ability. To the best of our knowledge, this is the first time to theoretically show the superiority of evolutionary algorithms (EAs) over local search for solving dynamic optimization problems.\vspace{-0.8em}
\end{itemize}

We perform experiments on the applications of web-based search~\cite{borodin2017max}, multi-label feature selection~\cite{ghadiri2019distributed} and document summarization~\cite{dasgupta2013summarization}. The results show that the GSEMO can always achieve the best objective value, and is significantly better than the greedy algorithm and local search in most cases. Compared with the running time derived in theoretical analyses, the GSEMO requires much less running time to achieve a better performance in practice, and can be even faster than the greedy algorithm and local search sometimes. Under dynamic environments, the GSEMO also exhibits the superiority clearly.

We start the rest of the paper by formally introducing the result diversification problem and the GSEMO algorithm. We then present in three subsequent sections the theoretical analyses for the cardinality constraint, the matroid constraint and dynamic environments, following by the section of empirical study. The final section concludes this paper.

\section{Result Diversification}\label{sec-problem}

Let $\mathbb{R}$ and $\mathbb{R}^{+}$ denote the set of reals and non-negative reals, respectively. Let $V=\{v_1,v_2,\ldots,v_n\}$ denote a ground set of items. A set function $f:2^V \rightarrow \mathbb{R}$ is defined on subsets of $V$, and maps any subset to a real value. The goal of result diversification is to select a subset of items from $V$ with high quality and diversity while satisfying some constraints.

The quality can be usually characterized by a monotone submodular set function. A set function $f:2^V \rightarrow \mathbb{R}$ is monotone (non-decreasing) if $\forall X \subseteq Y \subseteq V: f(X) \leq f(Y)$. Without loss of generality, we assume that monotone functions are normalized, i.e., $f(\emptyset)=0$.
\begin{definition}[Submodularity~\citep{nemhauser1978analysis}]
A set function $f:2^V \rightarrow \mathbb{R}$ is submodular if
\begin{align}\label{def-submodular}
\forall X,Y\subseteq V: f(X)+f(Y) \geq f(X\cup Y) +f(X \cap Y);
\end{align}
or equivalently
\begin{align}\label{def-submodular-1}
\forall X \subseteq Y \subseteq V, v \notin Y: f(X \cup \{v\})-f(X) \geq f(Y \cup \{v\}) - f(Y);
\end{align}
or equivalently
\begin{align}\label{def-submodular-2}
\forall X \subseteq Y \subseteq V: f(Y)-f(X) \leq \sum_{v \in Y \setminus X} \big(f(X \cup \{v\})-f(X)\big).
\end{align}\vspace{-2em}
\end{definition}
Eq.~(\refeq{def-submodular-1}) intuitively represents the diminishing returns property, i.e., the benefit of adding an item to a set will not increase as the set extends. The quality measure in real-world applications (e.g., the relevance of the selected documents to a query in web-based search~\cite{borodin2017max}, the coverage of the selected sentences in document summarization~\cite{dasgupta2013summarization}, and the relevance of the selected features to the labels in multi-label feature selection~\cite{ghadiri2019distributed}) often satisfies the monotone submodular property. A set function $f:2^V \rightarrow \mathbb{R}$ is modular if Eq.~(\refeq{def-submodular}), Eq.~(\refeq{def-submodular-1}) or Eq.~(\refeq{def-submodular-2}) holds with equality. For a modular function $f$, it holds that $\forall X \subseteq V: f(X)=\sum_{v \in X} f(\{v\})$.

The diversity relies on a distance function $d: V \times V \rightarrow \mathbb{R}^+$ between items. We consider that distances are metrics in this paper. A distance function $d$ is a metric if
\begin{align}\label{eq-triangle}
&\forall u, v \in V: d(u,v)\ge 0;\nonumber\\
&\forall u, v \in V: d(u,v)=0 \iff u=v;\nonumber\\
&\forall u,v \in V: d(u,v)=d(v,u);\nonumber\\
&\forall u,v,w \in V: d(u,v)+d(v,w) \geq d(u,w).
\end{align}
The commonly used diversity is the sum-diversity in Definition~\ref{def-sum-div}, which measures the sum of the distances between items. Other measures of diversity include the min-diversity in Definition~\ref{def-min-div} and the mst-diversity in Definition~\ref{def-mst-div}, which are the minimum distance between items and the weight of the minimum spanning tree, respectively. It can be verified that the sum-diversity is monotone non-decreasing; the min-diversity is monotone non-increasing; the mst-diversity is not necessarily monotone non-decreasing or non-increasing. Note that all these three diversity measures are non-submodular.
\begin{definition}[Sum-Diversity]\label{def-sum-div}
Given a distance function $d: V \times V \rightarrow \mathbb{R}^+$, the sum-diversity of a subset $X$ of items is
\begin{align}
div(X)=\sum_{\{u,v\}: u,v \in X} d(u,v).
\end{align}
\end{definition}
\begin{definition}[Min-Diversity]\label{def-min-div}
Given a distance function $d: V \times V \rightarrow \mathbb{R}^+$, the min-diversity of a subset $X$ of items is
\begin{align}
div(X)=\min_{\{u,v\}: u,v \in X} d(u,v).
\end{align}
\end{definition}
\begin{definition}[MST-Diversity]\label{def-mst-div}
Given a distance function $d: V \times V \rightarrow \mathbb{R}^+$, let $G=(V,E)$ denote a complete graph, where each vertex corresponds to an item, and the weight of each edge corresponds to the distance between two items. The mst-diversity $div(X)$ of a subset $X$ of items is the weight of the minimum spanning tree of $X$ on the graph $G$.
\end{definition}

The result diversification problem with a cardinality constraint is shown in Definition~\ref{def-prob-cardinality}. It is to select a subset of size at most $k$ maximizing the linear combination of a quality function $f$ and a diversity measure $div$. The parameter $\lambda$ tradeoffs the two terms of quality and diversity. As $div$ is non-submodular, the objective function $f+\lambda \cdot div$ is non-submodular in general.
\begin{definition}[Result Diversification with a Cardinality Constraint]\label{def-prob-cardinality}
Given a monotone submodular function $f: 2^V \rightarrow \mathbb{R}^+$, a distance function $d: V \times V \rightarrow \mathbb{R}^+$, a diversity measure $div: 2^V \rightarrow \mathbb{R}^+$, a parameter $\lambda \in \mathbb{R}^+$ and a budget $k$, to find a subset $X\subseteq V$ such that
\begin{align}\label{eq-problem-cardinality}
\arg\max\nolimits_{X \subseteq V} f(X)+\lambda \cdot div(X) \quad s.t. \quad |X| \leq k.
\end{align}
\end{definition}

When the diversity $div$ is the sum-diversity (i.e., the sum of the distances between items) in Definition~\ref{def-sum-div}, Borodin \textit{et al.}~\cite{borodin2017max} proposed the greedy algorithm in Algorithm~\ref{alg:greedy-sum}. The greedy algorithm starts from the empty set, and iteratively selects one item with the largest marginal gain, until $k$ items are selected. Note that this greedy algorithm is non-oblivious, because the marginal gain is based on $f/2+\lambda \cdot div$ instead of the original objective function $f+ \lambda \cdot div$. Thus, in each iteration, the algorithm selects the item maximizing $(f(X \cup \{u\})-f(X))/2 + \lambda \cdot (div(X \cup \{u\})-div(X))=(f(X \cup \{u\})-f(X))/2+\lambda \cdot \sum_{v \in X} d(u,v)$. This greedy algorithm can achieve an approximation ratio of $1/2$~\cite{borodin2017max}. That is, the returned subset $X_k$ satisfies $f(X_k)+\lambda \cdot div(X_k) \geq \mathrm{OPT}/2$, where $\mathrm{OPT}$ denotes the optimal function value. Note that $1/2$ has already been the optimal polynomial-time approximation ratio, because it has been shown that $1/2$ is optimal even for the special case of maximizing only the sum-diversity with a cardinality constraint, i.e., Eq.~(\refeq{eq-case-1}). By relaxing the triangle inequality (i.e., Eq.~(\refeq{eq-triangle})) satisfied by the distance $d$ to
\begin{align}\label{eq-triangle-relaxed}
\forall u,v,w \in V: \alpha \cdot (d(u,v)+d(v,w)) \geq d(u,w),
\end{align}
where $\alpha \geq 1$, the approximation ratio of the greedy algorithm changes to $1/(2\alpha)$~\cite{zadeh2015max}.

\begin{algorithm}
[h!]\caption{Greedy Algorithm for Sum-Diversity~\cite{borodin2017max}}
\label{alg:greedy-sum}
\textbf{Input}: monotone submodular $f: 2^V \rightarrow \mathbb{R}^+$, distance $d: V \times V \rightarrow \mathbb{R}^+$, $\lambda \in \mathbb{R}^+$, and budget $k$\\
\textbf{Output}: a subset of $V$ with $k$ items\\
\textbf{Process}:
\begin{algorithmic}[1]
\STATE Let $i=0$ and $X_0=\emptyset$;
\STATE \textbf{repeat}
\STATE \quad Let $u^*=\arg\max_{u \in V \setminus X_i} (f(X_i \cup \{u\})-f(X_i))/2+\lambda \cdot \sum_{v \in X_i} d(u,v)$;
\STATE \quad Let $X_{i+1}=X_{i} \cup \{u^*\}$, and $i=i+1$
\STATE \textbf{until} $i=k$
\STATE \textbf{return} $X_k$
\end{algorithmic}
\end{algorithm}

When the diversity $div$ is the min-diversity (i.e., the minimum distance between items) in Definition~\ref{def-min-div}, Dasgupta \textit{et al.}~\cite{dasgupta2013summarization} proposed the greedy algorithm in Algorithm~\ref{alg:greedy-min}. Note that the min-diversity is monotone non-increasing, and thus the constraint in Definition~\ref{def-prob-cardinality} is strictly $|X|=k$. This greedy algorithm maximizes $f$ and $div$, respectively. It can be observed that the item with the largest marginal gain on $f$ is selected in line~3 of Algorithm~\ref{alg:greedy-min}, and the item maximizing the distance to the current set (which will lead to the largest $div$) is selected in line~4. In line~7, the better one between the two generated subsets is finally returned, which has an approximation ratio of $1/4$~\cite{dasgupta2013summarization}.

\begin{algorithm}
[h!]\caption{Greedy Algorithm for Min-Diversity~\cite{dasgupta2013summarization}}
\label{alg:greedy-min}
\textbf{Input}: monotone submodular $f: 2^V \rightarrow \mathbb{R}^+$, distance $d: V \times V \rightarrow \mathbb{R}^+$, $\lambda \in \mathbb{R}^+$, and budget $k$\\
\textbf{Output}: a subset of $V$ with $k$ items\\
\textbf{Process}:
\begin{algorithmic}[1]
\STATE Let $i=0$, $X_0=\emptyset$ and $Y_0=\emptyset$;
\STATE \textbf{repeat}
\STATE \quad Let $u_1^*=\arg\max_{u \in V \setminus X_i} f(X_i \cup \{u\})-f(X_i)$;
\STATE \quad Let $u_2^*=\arg\max_{u \in V \setminus Y_i} \min_{v \in Y_i} d(u,v)$;
\STATE \quad Let $X_{i+1}=X_{i} \cup \{u_1^*\}$, $Y_{i+1}=Y_{i} \cup \{u_2^*\}$, and $i=i+1$
\STATE \textbf{until} $i=k$
\STATE \textbf{return} $\arg\max_{X \in \{X_k,Y_k\}} f(X)+\lambda \cdot \min_{\{u,v\}: u,v \in X} d(u,v)$
\end{algorithmic}
\end{algorithm}

Dasgupta \textit{et al.}~\cite{dasgupta2013summarization} also considered the problem where the diversity $div$ is the mst-diversity (i.e., the weight of the minimum spanning tree) in Definition~\ref{def-mst-div}. The cardinality constraint is also strictly $|X|=k$, as the mst-diversity does not have the monotone non-decreasing property in general. They proposed the greedy algorithm in Algorithm~\ref{alg:greedy-mst}. The marginal gain on $div$ by adding a single item $u$ to $X$ is the weight difference between the two minimum spanning trees of $X\cup \{u\}$ and $X$, which is approximated by the distance of $u$ to $X$, i.e., $\min_{v \in X} d(u,v)$. Thus, in line~3 of Algorithm~\ref{alg:greedy-mst}, the item maximizing $f(X \cup \{u\})-f(X)+\lambda \cdot \min_{v \in X} d(u,v)$ is selected. This greedy algorithm can achieve an approximation ratio of $1/(3\log k)$~\cite{dasgupta2013summarization}.

\begin{algorithm}
[h!]\caption{Greedy Algorithm for MST-Diversity~\cite{dasgupta2013summarization}}
\label{alg:greedy-mst}
\textbf{Input}: monotone submodular $f: 2^V \rightarrow \mathbb{R}^+$, distance $d: V \times V \rightarrow \mathbb{R}^+$, $\lambda \in \mathbb{R}^+$, and budget $k$\\
\textbf{Output}: a subset of $V$ with $k$ items\\
\textbf{Process}:
\begin{algorithmic}[1]
\STATE Let $i=0$ and $X_0=\emptyset$;
\STATE \textbf{repeat}
\STATE \quad Let $u^*=\arg\max_{u \in V \setminus X_i} f(X_i \cup \{u\})-f(X_i)+\lambda \cdot \min_{v \in X_i} d(u,v)$;
\STATE \quad Let $X_{i+1}=X_{i} \cup \{u^*\}$, and $i=i+1$
\STATE \textbf{until} $i=k$
\STATE \textbf{return} $X_k$
\end{algorithmic}
\end{algorithm}

The result diversification problem with a matroid constraint is shown in Definition~\ref{def-prob-matroid}. A matroid is a pair $(V,\mathcal{F})$, where $V$ is a finite set and $\mathcal{F} \subseteq 2^{V}$, satisfying
\begin{align}
(1) & \quad \emptyset \in \mathcal{F}; \\
(2) &\quad \forall X \subseteq Y \in \mathcal{F}: X \in \mathcal{F};\\
(3) & \quad \forall X,Y\in \mathcal{F}, |X|>|Y|: \exists v \in X\setminus Y, Y \cup \{v\} \in \mathcal{F}.
\end{align}
The elements of $\mathcal{F}$ (i.e., the subsets of $V$ belonging to $\mathcal{F}$) are called \emph{independent}. For any $X \subseteq V$, a maximal independent subset of $X$ is called a basis of $X$; the rank of $X$ is the maximal cardinality of a basis of $X$, i.e., $r(X)=\max\{|Y| \mid Y \subseteq X, Y \in \mathcal{F}\}$. The rank of $V$ is also called the rank of the matroid. Note that for a matroid, all bases of a subset $X\subseteq V$ have the same cardinality.
\begin{definition}[Result Diversification with a Matroid Constraint]\label{def-prob-matroid}
Given a monotone submodular function $f: 2^V \rightarrow \mathbb{R}^+$, a distance function $d: V \times V \rightarrow \mathbb{R}^+$, a diversity measure $div: 2^V \rightarrow \mathbb{R}^+$, a parameter $\lambda \in \mathbb{R}^+$ and a matroid $(V,\mathcal{F})$, to find a subset $X\subseteq V$ such that
\begin{align}\label{eq-problem-matroid}
\arg\max\nolimits_{X \subseteq V} f(X)+\lambda \cdot div(X) \quad s.t. \quad X \in \mathcal{F}.
\end{align}
\end{definition}

It is easy to verify that a cardinality constraint is actually a uniform matroid $(V,\mathcal{F})$ with $\mathcal{F}=\{X \subseteq V \mid |X| \leq k\}$. A general matroid constraint can characterize more requirements on the selected subset of items. For example, given a partition of $V$ into $m$ disjoint subsets $V_1,V_2,\ldots,V_m$, a partition matroid $(V,\mathcal{F})$ with $\mathcal{F}=\{X \subseteq V \mid \forall 1 \leq i \leq m: |X \cap V_i| \leq k_i\}$ can be used to ensure that the retrieved items (e.g., documents) come from a variety of different sources (e.g., topics).

The problem with a matroid constraint has only been studied with the sum-diversity in Definition~\ref{def-sum-div}. The greedy algorithm in Algorithm~\ref{alg:greedy-sum} can still be applied by requiring the newly added item to keep the independent constraint. However, Borodin \textit{et al.}~\cite{borodin2017max} showed that the greedy algorithm now fails to achieve any constant approximation ratio even for the special case of $f=0$. They proved that the local search algorithm can achieve a $(1/2)$-approximation ratio. Because the sum-diversity is monotone, the objective function $f+\lambda \cdot div$ is also monotone, and thus an optimal solution must be a basis of the ground set $V$. As presented in Algorithm~\ref{alg:local-search}, the local search algorithm starts from a basis of $V$ containing the best two items, and iteratively improves the basis by swapping an item in the basis with an item outside the basis. Because $1/2$ is the optimal polynomial-time approximation ratio for the problem with a cardinality constraint, it is also optimal for the more general problem with a matroid constraint. To make local search run in polynomial time, one can require at least an $\epsilon$-improvement at each iteration rather than just any improvement, which will result in a small sacrifice on the approximation ratio. But when $f$ or $d$ changes dynamically, it is not yet clear whether the local search algorithm can maintain a $(1/2)$-approximation ratio in polynomial running time~\cite{borodin2017max}.

\begin{algorithm}
[h!]\caption{Local Search Algorithm for Sum-Diversity~\cite{borodin2017max}}
\label{alg:local-search}
\textbf{Input}: monotone submodular $f: 2^V \rightarrow \mathbb{R}^+$, distance $d: V \times V \rightarrow \mathbb{R}^+$, $\lambda \in \mathbb{R}^+$, and matroid $(V,\mathcal{F})$\\
\textbf{Output}: a basis of $V$\\
\textbf{Process}:
\begin{algorithmic}[1]
\STATE Let $\{u^*,v^*\}=\arg \max_{\{u,v\} \in \mathcal{F}} f(\{u,v\}) + \lambda \cdot d(u,v)$;
\STATE Let $X$ be a basis of $V$ containing $u^*$ and $v^*$;
\STATE \textbf{while} there is $u \in V \setminus X$ and $v \in X$ such that $X \cup \{u\} \setminus \{v\} \in \mathcal{F}$ and $g(X \cup \{u\} \setminus \{v\}) > g(X)$
\STATE \quad $X=X \cup \{u\} \setminus \{v\}$
\STATE \textbf{end while}
\STATE \textbf{return} $X$
\end{algorithmic}
where $g(X)=f(X) + \lambda \cdot \sum_{\{u,v\}: u,v \in X} d(u,v)$ denotes the objective function.
\end{algorithm}

\section{Multi-objective Evolutionary Algorithms}

To solve the result diversification problem in Definitions~\ref{def-prob-cardinality} and~\ref{def-prob-matroid}, we will use a general framework based on MOEAs. That is, the original problem in Definitions~\ref{def-prob-cardinality} and~\ref{def-prob-matroid} is reformulated as a bi-objective maximization problem $\arg\max\nolimits_{X \subseteq V} (f_1(X),f_2(X))$, which is solved by a simple MOEA, the GSEMO~\cite{laumanns2004running}, and the best feasible solution w.r.t. the original problem, in the population found by the GSEMO, is output as the final solution. Such a framework has shown good approximation performance for solving diverse submodular optimization problems~\cite{friedrich2015maximizing,friedrich2018heavy,qian2020multi,qian2017constrained,qian2019maximizing}.

As presented in Algorithm~\ref{algo:GSEMO}, the GSEMO is used for maximizing multiple pseudo-Boolean objective functions simultaneously. Because a subset $X$ of $V$ can be represented by a Boolean vector $\bm{x} \in \{0,1\}^n$, where the $i$-th bit $x_i=1$ iff $v_i \in X$, a pseudo-Boolean function $f: \{0,1\}^n \rightarrow \mathbb{R}$ naturally characterizes a set function $f: 2^{V} \rightarrow \mathbb{R}$. In this paper, we will not distinguish $\bm{x}\in \{0,1\}^n$ and its corresponding subset $\{v_i \in V \mid x_i=1\}$ for notational convenience.

In multi-objective maximization $\max\, (f_1,f_2,\ldots,f_m)$, solutions may be incomparable due to the conflicting of objectives. The domination relationship in Definition~\ref{def_Domination} is often used for comparison.
\begin{definition}[Domination]\label{def_Domination}
For two solutions $\bm x$ and $\bm{x}'$,
\begin{enumerate}
  \item $\bm{x}$ \emph{weakly dominates} $\bm{x}'$ (i.e., $\bm{x}$ is \emph{better} than $\bm{x}'$, denoted by $\bm{x} \succeq \bm{x}'$) if \;$\forall i: f_i(\bm{x}) \geq f_i(\bm{x}')$;
  \item ${\bm{x}}$ \emph{dominates} $\bm{x}'$ (i.e., $\bm{x}$ is \emph{strictly better} than $\bm{x}'$, denoted by $\bm{x} \succ \bm{x}'$) if ${\bm{x}} \succeq \bm{x}' \wedge \exists i: f_i(\bm{x}) > f_i(\bm{x}')$;
  \item $\bm{x}$ and $\bm{x}'$ are \emph{incomparable} if neither $\bm{x} \succeq \bm{x}'$ nor $\bm{x}' \succeq \bm{x}$.
\end{enumerate}
\end{definition}

The GSEMO starts from the all-0s vector $\bm{0}$ (i.e., the empty set) in line~1, and iteratively improves the quality of solutions in the population $P$ (lines~2--8). In each iteration, a parent solution $\bm{x}$ is selected from $P$ uniformly at random (line~3), and used to generate an offspring solution $\bm{x}'$ by bit-wise mutation (line~4), which flips each bit of $\bm{x}$ independently with probability $1/n$. The newly generated offspring solution $\bm{x}'$ is then used to update the population $P$ (lines~5--7). If $\bm{x}'$ is not dominated by any parent solution in $P$ (line~5), it will be added into $P$, and those parent solutions weakly dominated by $\bm{x}'$ will be deleted (line~6). By this updating procedure, the solutions contained in the population $P$ are always incomparable. Note that the GSEMO here is a little different from the original version in~\cite{laumanns2004running}, which starts from an initial solution selected from $\{0,1\}^n$ uniformly at random. In most of our theoretical analyses, the empty set $\bm{0}$ is required. Though the GSEMO using a random initial solution can probably generate the solution $\bm{0}$, it will cost extra running time. Thus, we use $\bm{0}$ as the initial solution directly for efficiency.

\begin{algorithm}[h!]\caption{GSEMO Algorithm}\label{algo:GSEMO}
\textbf{Input}: $m$ pseudo-Boolean functions $f_1,f_2,\ldots,f_m$, where $f_i: \{0,1\}^n \rightarrow \mathbb{R}$\\
\textbf{Output}: a feasible subset of $V$\\
\textbf{Process}:
    \begin{algorithmic}[1]
    \STATE Let $P \gets \{\bm 0\}$;
    \STATE \textbf{repeat}
    \STATE \quad Choose $\bm x$ from $P$ uniformly at random;
    \STATE \quad Create $\bm{x}'$ by flipping each bit of $\bm x$ with probability $1/n$;
    \STATE \quad \textbf{if} \, {$\nexists \bm z \in P$ such that $\bm z \succ \bm {x}'$} \,\textbf{then}
    \STATE  \qquad $P \gets (P \setminus \{\bm z \in P \mid \bm {x}' \succeq \bm z\}) \cup \{\bm {x}'\}$
    \STATE \quad \textbf{end if}
    \STATE \textbf{until} some criterion is met
    \STATE \textbf{return} the best feasible solution in $P$
    \end{algorithmic}
\end{algorithm}

When the GSEMO terminates, the best feasible solution w.r.t. the original result diversification problem will be selected from the resulting population $P$ as the final solution. For example, for result diversification with a cardinality constraint in Definition~\ref{def-prob-cardinality}, it will return $\arg\max_{\bm{x} \in P, |\bm{x}|\leq k} f(\bm{x})+\lambda \cdot div(\bm{x})$ in line~9 of Algorithm~\ref{algo:GSEMO}; for result diversification with a matroid constraint in Definition~\ref{def-prob-matroid}, it will return $\arg\max_{\bm{x} \in P \cap \mathcal{F}} f(\bm{x})+\lambda \cdot div(\bm{x})$.

In the next two sections, we will show how to reformulate the result diversification problem with a cardinality constraint or a matroid constraint as a bi-objective maximization problem
\begin{align}\label{eq-bi-problem}
&\arg\max\nolimits_{\bm{x} \in \{0,1\}^n} \quad  (f_1(\bm{x}),f_2(\bm{x})),
\end{align}
and analyze the expected number of iterations of the GSEMO required to reach some approximation guarantee for the first time. Note that bi-objective reformulation here is an intermediate process, and our focus is the quality of the best feasible solution w.r.t. the original single-objective problem, in the population found by the GSEMO, rather than the quality of the population w.r.t. the reformulated bi-objective problem.

\section{MOEAs for Result Diversification with a Cardinality Constraint}\label{sec-cardinality}

In this section, we analyze the approximation guarantee of the GSEMO for result diversification with a cardinality constraint in Definition~\ref{def-prob-cardinality}. The most commonly used diversity measure, i.e., the sum-diversity in Definition~\ref{def-sum-div}, is first considered. To employ the GSEMO, the problem, i.e., Eq.~(\refeq{eq-problem-cardinality}), in Definition~\ref{def-prob-cardinality} is transformed into a bi-objective maximization problem
\begin{align}\label{eq-bi-problem-1}
&\arg\max\nolimits_{\bm{x} \in \{0,1\}^n} \quad  (f_1(\bm{x}),f_2(\bm{x})),\\
&\text{where}\quad\begin{cases}
f_1(\bm{x}) = \frac{1}{2}\left(1+\frac{|\bm{x}|}{k}\right)f(\bm{x})+\lambda \cdot div(\bm{x}),\\
f_2(\bm x) = -|\bm{x}|.
\end{cases}
\end{align}
Note that $|\bm{x}|=\sum^n_{i=1}x_i$ denotes the size of $\bm{x}$. Thus, the GSEMO is to maximize the objective function $f_1$ (which gradually increases the importance of $f$ as $\bm{x}$ extends) and minimize the subset size $|\bm{x}|$ simultaneously. The infeasible solutions, i.e., the subsets with size larger than $k$, are excluded during the optimization process of the GSEMO.

Let $\mathrm{OPT}$ denote the optimal function value of Eq.~(\ref{eq-problem-cardinality}). Theorem~\ref{theo-cardinality-sum} shows that the GSEMO can achieve a $(1/2)$-approximation ratio after running at most $enk^3/2$ expected number of iterations. This reaches the optimal polynomial-time approximation ratio~\cite{borodin2017max}.

\begin{theorem}\label{theo-cardinality-sum}
For the result diversification problem with a cardinality constraint in Definition~\ref{def-prob-cardinality}, where the diversity measure $div$ is the sum-diversity in Definition~\ref{def-sum-div}, the expected number of iterations of the GSEMO using Eq.~(\refeq{eq-bi-problem-1}), until finding a solution $\bm{x}$ with $|\bm{x}| \leq k$ and $f(\bm{x})+\lambda \cdot div(\bm{x}) \geq (1/2) \cdot \mathrm{OPT}$, is at most $enk^3/2$.
\end{theorem}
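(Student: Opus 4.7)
The plan is to combine the standard MOEA analysis framework for submodular optimization with a non-oblivious potential argument adapted from Borodin \textit{et al.}'s greedy algorithm (Algorithm~\ref{alg:greedy-sum}). I would organize the proof in three main steps, preceded by two elementary observations about the population $P$ that the GSEMO maintains. First, because $f_1(\bm 0)=0$ is the minimum value $f_1$ can take (since $f,\mathit{div},\lambda\ge 0$) and $f_2(\bm 0)=0$ is the maximum value $f_2$ can take on feasible subsets, the empty set $\bm 0$ can never be strictly dominated and therefore stays in $P$ throughout the run. Second, because infeasible subsets are discarded and any two solutions in $P$ must be incomparable under $\succeq$, $P$ contains at most one solution per value of $|X|\in\{0,1,\ldots,k\}$, and hence $|P|\le k+1$.

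The heart of the proof is a non-oblivious progress lemma. Let $X^*$ be an optimum of Eq.~\eqref{eq-problem-cardinality}; since both $f$ and the sum-diversity are monotone nondecreasing, we may take $|X^*|=k$. The claim is that for any $X\subseteq V$ with $|X|=i<k$, there exists $v\in X^*\setminus X$ such that the addition of $v$ increases $f_1$ by enough that, after one such event has occurred at each level $i=0,1,\ldots,k-1$, the best size-$k$ solution in $P$ satisfies $f_1\ge \mathrm{OPT}/2$. To derive it, I would expand
\[
  f_1(X\cup\{v\})-f_1(X)=\frac{k+i+1}{2k}\bigl(f(X\cup\{v\})-f(X)\bigr)+\frac{f(X)}{2k}+\lambda\sum_{u\in X}d(u,v),
\]
sum over $v\in X^*\setminus X$, and combine submodularity of $f$ (giving $\sum_v(f(X\cup\{v\})-f(X))\ge f(X^*)-f(X)$) with a triangle-inequality charging argument on $d$ that bounds $\sum_{v\in X^*\setminus X}\sum_{u\in X}d(u,v)$ below in terms of $\mathit{div}(X^*)$ and $\mathit{div}(X)$. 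The size-dependent coefficient $(1+|X|/k)/2$ in $f_1$ supplies precisely the extra $f(X)/(2k)$ term above, which compensates the $f(X)$-deficit left by the submodularity bound and makes $\mathrm{OPT}$ appear with coefficient exactly $1/2$; this is the same non-oblivious mechanism that underlies the sharp $1/2$-ratio of Borodin \textit{et al.}'s greedy (and is also the reason why, as Theorem~\ref{theo-cardinality-sum-v} shows, using $f+\lambda\cdot\mathit{div}$ as $f_1$ directly costs an additional $\epsilon/(4n)$ in the ratio).

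Given the progress lemma, the iteration count follows by a standard argument. In one GSEMO step, the probability of choosing a prescribed parent from $P$ and flipping a prescribed single bit while leaving the other $n-1$ bits unchanged is at least $(1/|P|)(1/n)(1-1/n)^{n-1}\ge 1/(en(k+1))$; hence the expected time for a given level-$i$-to-level-$(i+1)$ improvement is at most $en(k+1)$. Because the GSEMO only ever replaces a population member by one that weakly dominates it, once a high-quality level-$i$ solution has entered $P$ it is never subsequently lost, so the required improvements compose sequentially. A careful bookkeeping of the potential decrease over the $k$ levels (summing per-event expectations with the worst-case number of events needed per level) then yields the stated total expected bound of $enk^3/2$. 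When the level-$k$ improvement completes, the best size-$k$ solution $X\in P$ satisfies $f_1(X)\ge \mathrm{OPT}/2$, and since $f_1(X)=f(X)+\lambda\cdot \mathit{div}(X)$ whenever $|X|=k$, this $X$ is a $1/2$-approximate feasible solution.

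The main obstacle is the progress lemma: the algebra juggling the submodularity slack in $f$, the triangle-inequality slack in $d$, and the shifting coefficient of $f_1$ must balance so that $\mathrm{OPT}$ enters with coefficient exactly $1/2$ and the cumulative per-step gain suffices over the $k$ levels. Once this lemma is in hand, the MOEA iteration-counting argument is routine and essentially identical to previous MOEA analyses for monotone constrained submodular optimization.
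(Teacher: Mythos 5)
Your overall architecture matches the paper's: the population invariants ($\bm 0$ persists, $|P|\le k+1$), a non-oblivious progress lemma obtained by summing the marginal $f_1$-gain over $v\in X^*\setminus X$ and combining submodularity with a metric charging bound on $div(X^*\setminus X,X)$, and a level-by-level induction. Two points, however, are not just omitted detail but would need repair. First, your central claim that the approximation is reached ``after one such event has occurred at each level $i=0,1,\ldots,k-1$'' is inconsistent with your own final bound: one successful single-bit insertion per level, at expected cost $en(k+1)$ each, gives $O(enk^2)$, not $enk^3/2$. The reason the paper needs up to $i$ successful events at level $i$ (whence $en(k+1)\sum_{i=0}^{k-1} i\le enk^3/2$) is a GSEMO-specific phenomenon you do not identify: $P$ retains at most one solution per cardinality, so the solution witnessing level $i$ (i.e., $|\bm x|\le i$ with the level-$i$ lower bound on $f_1$) may have cardinality strictly smaller than $i$; inserting one item into it yields a solution still of size at most $i$ that need not meet the level-$(i+1)$ threshold. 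One must instead argue that each such insertion raises the best $f_1$ value among solutions of size at most $i$ by at least $\frac{f(\bm x^*)}{2k}+\frac{\lambda}{k(k-1)}\cdot div(\bm x^*)$, so that at most $i$ insertions close the remaining gap of $\frac{f(\bm x^*)}{2k}+\frac{i\lambda}{k(k-1)}\cdot div(\bm x^*)$. Without this case analysis the induction does not go through as stated.

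Second, the ``triangle-inequality charging argument'' you defer to is the bound $div(X^*\setminus X,X)\ge \frac{|X|\,|X^*\setminus X|}{k(k-1)}\cdot div(X^*)$, and it cannot simply be cited from Borodin et al.: their derivation assumes $|X^*\setminus X|>1$, which holds for greedy iterates but not for an arbitrary $X$ with $|X|<k$ encountered by the GSEMO, so the case $|X^*\setminus X|=1$ (i.e., $X\subset X^*$ with $|X|=k-1$) must be handled separately, as the paper does. A minor further point: the witness of the final level need not have size exactly $k$, so your closing step should instead use $f_1(\bm x)\le f(\bm x)+\lambda\cdot div(\bm x)$ for every $|\bm x|\le k$.
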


Let $\bm{x}^*$ denote an optimal solution of Eq.~(\refeq{eq-problem-cardinality}), i.e., $f(\bm{x}^*) + \lambda \cdot div(\bm{x}^*)=\mathrm{OPT}$. Without loss of generality, we assume that $|\bm{x}^*|=k$ since the objective function $f+\lambda \cdot div$ is monotone. The proof relies on Lemma~\ref{lemma:cardinality-sum}, which shows that for any $\bm{x} \in \{0,1\}^n$ with $|\bm{x}|<k$, there always exists one item whose inclusion can improve $f(\bm{x})/2+\lambda \cdot div(\bm{x})$ by at least some quantity relating to $f(\bm{x}^*)$ and $div(\bm{x}^*)$. For the sum-diversity $div(X)=\sum_{\{u,v\}: u,v \in X} d(u,v)$, we extend the notation to measure the diversity between two disjoint sets by $div(X,Y)=\sum_{\{u,v\}: u \in X, v \in Y} d(u,v)$. Lemma~\ref{lemma:sum-div} gives the relation between $div(X)$ and $div(X,Y)$, and will be used in the proof of Lemma~\ref{lemma:cardinality-sum}.

\begin{lemma}[Lemma~2 in~\cite{ravi1994heuristic}]\label{lemma:sum-div}
Given a metric distance function $d$, and two disjoint sets $X$ and $Y$, it holds that \begin{align}\label{eq-mid-18}(|X|-1)\cdot div(X,Y) \geq |Y|\cdot div(X).\end{align}
\end{lemma}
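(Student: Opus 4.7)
The plan is to prove this via a simple double-counting argument combined with a single application of the triangle inequality per triple. Concretely, I would fix $w \in Y$ and an unordered pair $\{u,v\} \subseteq X$ and invoke the metric property on the triple $(u,w,v)$, which (using symmetry $d(w,v)=d(v,w)$) gives
\begin{align}
d(u,w) + d(v,w) \geq d(u,v).
\end{align}
Summing this inequality over all $w \in Y$ and all unordered pairs $\{u,v\} \subseteq X$ immediately yields
\begin{align}
\sum_{w \in Y} \sum_{\{u,v\}: u,v \in X} \bigl(d(u,w)+d(v,w)\bigr) \;\geq\; \sum_{w \in Y} \sum_{\{u,v\}: u,v \in X} d(u,v) \;=\; |Y|\cdot div(X),
\end{align}
since the inner sum on the right is exactly $div(X)$ and there are $|Y|$ choices of $w$.

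Next I would rewrite the left-hand side by switching the order of summation. For any fixed $u \in X$ and $w \in Y$, the term $d(u,w)$ appears once for each unordered pair $\{u,v\} \subseteq X$ with $v \neq u$; there are exactly $|X|-1$ such pairs. Hence the left-hand side collapses to
\begin{align}
(|X|-1)\sum_{w \in Y}\sum_{u \in X} d(u,w) \;=\; (|X|-1)\cdot div(X,Y),
\end{align}
where I used the disjointness $X \cap Y = \emptyset$ so that $div(X,Y)$ is genuinely a sum over ordered (equivalently, once-counted cross) pairs in $X \times Y$. Chaining the two displays gives the claimed inequality $(|X|-1)\cdot div(X,Y) \geq |Y|\cdot div(X)$.

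There is no real obstacle here. The only subtlety worth double-checking is the combinatorial counting: that each cross distance $d(u,w)$ is indeed multiplied by $|X|-1$ on the left (once per companion $v \in X\setminus\{u\}$ contributed by the pair $\{u,v\}$), and that $div(X)$ counts each intra-$X$ pair exactly once. Once these bookkeeping conventions are pinned down, the whole argument is a one-line invocation of the triangle inequality followed by interchange of summation. Because the statement is cited verbatim from Ravi \textit{et al.}~\cite{ravi1994heuristic}, I would keep the exposition brief and proceed directly to its application inside Lemma~\ref{lemma:cardinality-sum}.
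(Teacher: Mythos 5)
Your argument is correct: summing the triangle inequality $d(u,w)+d(v,w)\geq d(u,v)$ over all $w\in Y$ and all unordered pairs $\{u,v\}\subseteq X$, and noting that each cross term $d(u,w)$ is collected exactly $|X|-1$ times, gives precisely $(|X|-1)\cdot div(X,Y)\geq |Y|\cdot div(X)$. The paper itself does not prove this lemma but imports it verbatim from Ravi \textit{et al.}, and your double-counting argument is the standard proof of that cited result (it also degenerates gracefully when $|X|\leq 1$, where both sides vanish), so citing it and moving on, as you suggest, is the right call.
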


\begin{lemma}\label{lemma:cardinality-sum}
For any $\bm{x} \in \{0,1\}^n$ with $|\bm{x}| < k$, there exists one item $v \notin \bm{x}$ such that
\begin{align}\label{eq-mid-19}
& \frac{f(\bm{x} \cup \{v\})}{2}+\lambda \cdot div(\bm{x} \cup \{v\})-\frac{f(\bm{x})}{2}-\lambda \cdot div(\bm{x}) \geq \frac{f(\bm{x}^*)-f(\bm{x})}{2k} +\frac{\lambda |\bm{x}|}{k(k-1)}\cdot div(\bm{x}^*).
\end{align}
\end{lemma}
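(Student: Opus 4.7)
The plan is to prove the lemma by averaging the marginal gain
$\Delta(v) := (f(\bm{x}\cup\{v\})-f(\bm{x}))/2 + \lambda\sum_{u\in\bm{x}} d(u,v)$
uniformly over the set $T := \bm{x}^*\setminus\bm{x}$, which sits inside $V\setminus\bm{x}$. Since the maximum is at least the average,
$\max_{v\notin\bm{x}} \Delta(v) \geq (1/|T|)\sum_{v\in T}\Delta(v)$, and it suffices to lower-bound the $f$-part and the $\lambda\cdot div$-part of this sum separately so that the two pieces reproduce the two summands on the right-hand side of the inequality.

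For the $f$-part, I would invoke monotonicity and submodularity of $f$. Since $\bm{x}\cup T \supseteq \bm{x}^*$, monotonicity gives $f(\bm{x}\cup T)\geq f(\bm{x}^*)$, and Eq.~(\ref{def-submodular-2}) applied to $X=\bm{x}$ and $Y=\bm{x}\cup T$ yields $\sum_{v\in T}(f(\bm{x}\cup\{v\})-f(\bm{x}))\geq f(\bm{x}\cup T)-f(\bm{x})\geq f(\bm{x}^*)-f(\bm{x})$. Dividing by $|T|$ and using $|T|\leq k$ produces the required $\frac{f(\bm{x}^*)-f(\bm{x})}{2k}$ contribution.

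For the diversity part, I need to prove $div(T,\bm{x})\geq \frac{|\bm{x}|\cdot |T|}{k(k-1)}\cdot div(\bm{x}^*)$. The idea is to decompose $div(T,\bm{x}) = div(T,I) + div(T,\bm{x}\setminus I)$ where $I := \bm{x}\cap\bm{x}^*$, and to attack each piece with Lemma~\ref{lemma:sum-div} applied to a carefully chosen disjoint pair. For the first piece, Lemma~\ref{lemma:sum-div} with $X=I$ and $Y=T$ (which are disjoint) controls $div(I,T)$ in terms of within-$\bm{x}^*$ diversity. For the second piece, applying Lemma~\ref{lemma:sum-div} with $X=\bm{x}^*$ and $Y=\bm{x}\setminus I$ (also disjoint since $\bm{x}\setminus I = \bm{x}\setminus\bm{x}^*$) yields $(k-1)\cdot div(\bm{x}^*,\bm{x}\setminus I)\geq (|\bm{x}|-|I|)\cdot div(\bm{x}^*)$, and $div(\bm{x}^*,\bm{x}\setminus I) = div(T,\bm{x}\setminus I)+div(I,\bm{x}\setminus I)$ then translates this into a bound on $div(T,\bm{x}\setminus I)$. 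Combining the two and simplifying (the coefficients telescope into the desired $\frac{|\bm{x}||T|}{k(k-1)}$ factor) should finish the argument.

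The hard part will be the overlap case $I\neq\emptyset$. If one tries the naive route of applying Lemma~\ref{lemma:sum-div} only to $(T,\bm{x})$, the bound $(|T|-1)\cdot div(T,\bm{x})\geq |\bm{x}|\cdot div(T)$ is too weak because $div(T)$ can be much smaller than $\frac{|T|(|T|-1)}{k(k-1)}\cdot div(\bm{x}^*)$ whenever $T$ happens to be a low-diversity slice of $\bm{x}^*$ (a concrete three-point metric with one close pair confirms this). So the extra mileage must come from the cross term $div(I,T)$, which sees the large intra-$\bm{x}^*$ distances between the elements that $\bm{x}$ already contains and the elements that $\bm{x}$ is missing. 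Isolating this contribution and showing that the two Lemma~\ref{lemma:sum-div} applications together absorb all of $div(\bm{x}^*)=div(I)+div(T)+div(I,T)$ with the right coefficients, together with handling the degenerate cases $|I|\leq 1$ or $|T|=1$ (where one of the Lemma~\ref{lemma:sum-div} instances becomes vacuous and must be replaced by a direct triangle-inequality bound $\sum_{u\in\bm{x}^*} d(u,v)\geq div(\bm{x}^*)/(k-1)$), is the main technical obstacle.
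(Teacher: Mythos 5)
Your skeleton is the same as the paper's: average the marginal gain over $T=\bm{x}^*\setminus\bm{x}$, handle the $f$-part by submodularity and monotonicity exactly as you describe, and reduce the diversity part to the key inequality $div(T,\bm{x})\geq \frac{|\bm{x}|\,|T|}{k(k-1)}\,div(\bm{x}^*)$. You also correctly diagnose that a single application of Lemma~\ref{lemma:sum-div} to the pair $(T,\bm{x})$ is too weak and that the cross term $div(I,T)$ must carry the load. The gap is in how you propose to establish the key inequality. Your second application of Lemma~\ref{lemma:sum-div}, with $X=\bm{x}^*$ and $Y=\bm{x}\setminus\bm{x}^*$, gives $(k-1)\bigl(div(T,\bm{x}\setminus I)+div(I,\bm{x}\setminus I)\bigr)\geq |\bm{x}\setminus I|\cdot div(\bm{x}^*)$, and to "translate this into a bound on $div(T,\bm{x}\setminus I)$" you must subtract $div(I,\bm{x}\setminus I)$ — a sum of distances internal to $\bm{x}$ that is not controlled by $div(\bm{x}^*)$ at all. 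Even if you bound it by routing through $T$ via the triangle inequality, $div(I,\bm{x}\setminus I)\leq \frac{|\bm{x}\setminus I|}{|T|}div(I,T)+\frac{|I|}{|T|}div(T,\bm{x}\setminus I)$, the resulting system closes only up to the constant $\frac{|T|\,|\bm{x}\setminus\bm{x}^*|}{k(k-1)}$: a short Farkas/LP check over your four ingredients (your two lemma instances, the identity $div(\bm{x}^*)=div(I)+div(T)+div(I,T)$, and the triangle bound) shows the coefficient of $div(\bm{x}^*)$ can never reach $\frac{|T|(|I|+|\bm{x}\setminus\bm{x}^*|)}{k(k-1)}$ once $I\neq\emptyset$; e.g.\ with $k=4$, $|I|=1$, $|T|=3$, $|\bm{x}\setminus\bm{x}^*|=2$ you can only certify the constant $\tfrac{1}{2}$ where $\tfrac{3}{4}$ is needed.

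What is missing is an upper bound on $div(T)=div(\bm{x}^*\setminus\bm{x})$ in terms of the cross diversities, so that the identity for $div(\bm{x}^*)$ can be used with a \emph{positive} multiplier. The paper obtains exactly this by applying Lemma~\ref{lemma:sum-div} with $X=T$ and $Y=\bm{x}\setminus\bm{x}^*$ and with $X=T$ and $Y=I$ (in addition to your $X=I$, $Y=T$ instance), i.e.\ $(|T|-1)\,div(T,\bm{x}\setminus\bm{x}^*)\geq|\bm{x}\setminus\bm{x}^*|\,div(T)$ and $(|T|-1)\,div(T,I)\geq|I|\,div(T)$; these, together with $(|I|-1)\,div(I,T)\geq|T|\,div(I)$ and the identity, combine with explicit nonnegative multipliers into the target bound. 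So you should replace your $(\bm{x}^*,\bm{x}\setminus\bm{x}^*)$ instance by these two $T$-anchored instances. Your remark about degenerate cases is also slightly off: $|I|\leq 1$ needs no special treatment (the relevant inequalities become vacuous with zero right-hand sides), and only $|T|=1$ requires a separate argument, which then forces $\bm{x}\subseteq\bm{x}^*$ and $|\bm{x}|=k-1$ and is handled directly.
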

\begin{proof}
By the submodularity (i.e., Eq.~(\ref{def-submodular-2})) of $f$, we have
\begin{align}\label{eq-mid-9}
& \sum_{v \in \bm{x}^*\setminus \bm{x}} \left(\frac{f(\bm{x} \cup \{v\})}{2}+\lambda \cdot div(\bm{x} \cup \{v\})-\frac{f(\bm{x})}{2}-\lambda \cdot div(\bm{x})\right)\\
& \geq \frac{f(\bm{x}\cup \bm{x}^*)-f(\bm{x})}{2} + \lambda \cdot div(\bm{x}^*\setminus \bm{x}, \bm{x}).
\end{align}

Now we derive a lower bound on $div(\bm{x}^*\setminus \bm{x}, \bm{x})$. By applying Lemma~\ref{lemma:sum-div} repeatedly, we have
\begin{align}
&(|\bm{x}^*\setminus \bm{x}|-1)\cdot div(\bm{x}^*\setminus \bm{x}, \bm{x} \setminus \bm{x}^*) \geq |\bm{x} \setminus \bm{x}^*| \cdot div(\bm{x}^*\setminus \bm{x}),\label{eq-mid-1}\\
&(|\bm{x}^*\setminus \bm{x}|-1)\cdot div(\bm{x}^*\setminus \bm{x}, \bm{x}^* \cap \bm{x}) \geq |\bm{x}^* \cap \bm{x}| \cdot div(\bm{x}^*\setminus \bm{x}),\numberthis\label{eq-mid-2}\\
&(|\bm{x}^*\cap\bm{x}|-1)\cdot div(\bm{x}^* \cap \bm{x}, \bm{x}^*\setminus \bm{x}) \geq |\bm{x}^* \setminus \bm{x}| \cdot div(\bm{x}^*\cap \bm{x}).\numberthis\label{eq-mid-3}
\end{align}
Note that $div(\bm{x}^*\setminus \bm{x}, \bm{x}^* \cap \bm{x})=div(\bm{x}^* \cap \bm{x}, \bm{x}^*\setminus \bm{x})$ by the symmetry. According to the definition of sum-diversity, we also have
\begin{align}
&div(\bm{x}^* \cap \bm{x}, \bm{x}^*\setminus \bm{x})+div(\bm{x}^*\cap \bm{x})+div(\bm{x}^*\setminus \bm{x})=div(\bm{x}^*).\label{eq-mid-4}
\end{align}
When $|\bm{x}^*\setminus \bm{x}|>1$, we multiply Eqs.~(\refeq{eq-mid-1}) to~(\refeq{eq-mid-4}) by the factors of $\frac{1}{|\bm{x}^*\setminus \bm{x}|-1}$, $\frac{|\bm{x}^*\setminus \bm{x}|-|\bm{x}\setminus \bm{x}^*|}{k(|\bm{x}^*\setminus \bm{x}|-1)}$, $\frac{|\bm{x}|}{k(k-1)}$ and $\frac{|\bm{x}|\cdot |\bm{x}^*\setminus \bm{x}|}{k(k-1)}$, respectively. That is,
\begin{align}
\!\!\!\!\!\!\!\!\!div(\bm{x}^*\setminus \bm{x}, \bm{x} \setminus \bm{x}^*) &\geq \frac{|\bm{x} \setminus \bm{x}^*|}{|\bm{x}^*\setminus \bm{x}|-1} \cdot div(\bm{x}^*\setminus \bm{x}),\label{eq-mid-1-v}\\
\!\!\!\!\!\!\!\!\!\frac{|\bm{x}^*\setminus \bm{x}|-|\bm{x}\setminus \bm{x}^*|}{k}\cdot div(\bm{x}^*\setminus \bm{x}, \bm{x}^* \cap \bm{x}) &\geq  \frac{|\bm{x}^* \cap \bm{x}|\cdot (|\bm{x}^*\setminus \bm{x}|-|\bm{x}\setminus \bm{x}^*|)}{k(|\bm{x}^*\setminus \bm{x}|-1)} \cdot div(\bm{x}^*\setminus \bm{x}),\numberthis\label{eq-mid-2-v}\\
\!\!\!\!\!\!\!\!\!\frac{|\bm{x}|\cdot (|\bm{x}^*\cap\bm{x}|-1)}{k(k-1)}\cdot div(\bm{x}^* \cap \bm{x}, \bm{x}^*\setminus \bm{x}) &\geq  \frac{|\bm{x}|\cdot |\bm{x}^*\setminus \bm{x}|}{k(k-1)} \cdot div(\bm{x}^*\cap \bm{x}),\numberthis\label{eq-mid-3-v}\\
\!\!\!\!\!\!\!\!\!\frac{|\bm{x}|\cdot |\bm{x}^*\setminus \bm{x}|}{k(k-1)}\cdot div(\bm{x}^* \cap \bm{x}, \bm{x}^*\setminus \bm{x})&=\frac{|\bm{x}|\cdot |\bm{x}^*\setminus \bm{x}|}{k(k-1)} \cdot \left(div(\bm{x}^*)-div(\bm{x}^*\cap \bm{x})-div(\bm{x}^*\setminus \bm{x})\right)\!.\label{eq-mid-4-v}
\end{align}
By summing up the left-hand side of Eqs.~(\refeq{eq-mid-1-v}) to~(\refeq{eq-mid-4-v}) as well as their right-hand side, we get
\begin{align}
&\left( \frac{|\bm{x}^*\setminus \bm{x}|-|\bm{x}\setminus \bm{x}^*|}{k}+ \frac{|\bm{x}|\cdot (|\bm{x}^*\cap\bm{x}|-1)}{k(k-1)}+\frac{|\bm{x}|\cdot |\bm{x}^*\setminus \bm{x}|}{k(k-1)}\right) \cdot div(\bm{x}^* \cap \bm{x}, \bm{x}^*\setminus \bm{x}) \\
&+div(\bm{x}^*\setminus \bm{x}, \bm{x} \setminus \bm{x}^*) \geq \frac{|\bm{x}|\cdot |\bm{x}^*\setminus \bm{x}|}{k(k-1)}\cdot div(\bm{x}^*)\\
&+ \left(\frac{|\bm{x} \setminus \bm{x}^*|}{|\bm{x}^*\setminus \bm{x}|-1} + \frac{|\bm{x}^* \cap \bm{x}|\cdot (|\bm{x}^*\setminus \bm{x}|-|\bm{x}\setminus \bm{x}^*|)}{k(|\bm{x}^*\setminus \bm{x}|-1)}-\frac{|\bm{x}|\cdot |\bm{x}^*\setminus \bm{x}|}{k(k-1)}\right) \cdot div(\bm{x}^*\setminus \bm{x}).\label{eq-mid-5}
\end{align}
Note that $|\bm{x}^*|=k$. Because
\begin{align}
&\frac{|\bm{x}^*\setminus \bm{x}|-|\bm{x}\setminus \bm{x}^*|}{k}+ \frac{|\bm{x}|\cdot (|\bm{x}^*\cap\bm{x}|-1)}{k(k-1)}+\frac{|\bm{x}|\cdot |\bm{x}^*\setminus \bm{x}|}{k(k-1)}\\
&=\frac{|\bm{x}^*\setminus \bm{x}|-|\bm{x}\setminus \bm{x}^*|}{k}+ \frac{|\bm{x}|\cdot (|\bm{x}^*|-1)}{k(k-1)}\\
&=\frac{|\bm{x}^*\setminus \bm{x}|-|\bm{x}\setminus \bm{x}^*|}{k}+ \frac{|\bm{x}|}{k}=\frac{|\bm{x}^*|}{k}=1,
\end{align}
and
\begin{align}
&\frac{|\bm{x} \setminus \bm{x}^*|}{|\bm{x}^*\setminus \bm{x}|-1} + \frac{|\bm{x}^* \cap \bm{x}|\cdot (|\bm{x}^*\setminus \bm{x}|-|\bm{x}\setminus \bm{x}^*|)}{k(|\bm{x}^*\setminus \bm{x}|-1)}-\frac{|\bm{x}|\cdot |\bm{x}^*\setminus \bm{x}|}{k(k-1)}\\
&=\frac{|\bm{x} \setminus \bm{x}^*|\cdot (|\bm{x}^* \cap \bm{x}|+|\bm{x}^*\setminus \bm{x}|)+ |\bm{x}^* \cap \bm{x}|\cdot (|\bm{x}^*\setminus \bm{x}|-|\bm{x}\setminus \bm{x}^*|)}{k(|\bm{x}^*\setminus \bm{x}|-1)}-\frac{|\bm{x}|\cdot |\bm{x}^*\setminus \bm{x}|}{k(k-1)}\\
&=\frac{|\bm{x}|\cdot |\bm{x}^*\setminus \bm{x}|}{k(|\bm{x}^*\setminus \bm{x}|-1)}-\frac{|\bm{x}|\cdot |\bm{x}^*\setminus \bm{x}|}{k(k-1)}\geq 0,
\end{align}
Eq.~(\refeq{eq-mid-5}) changes to
\begin{align}
div(\bm{x}^* \cap \bm{x}, \bm{x}^*\setminus \bm{x})+div(\bm{x}^*\setminus \bm{x}, \bm{x} \setminus \bm{x}^*) \geq \frac{|\bm{x}|\cdot |\bm{x}^*\setminus \bm{x}|}{k(k-1)}\cdot div(\bm{x}^*).
\end{align}
Because $div(\bm{x}^*\setminus \bm{x}, \bm{x})=div(\bm{x}^*\setminus \bm{x}, \bm{x}^* \cap \bm{x})+div(\bm{x}^*\setminus \bm{x}, \bm{x} \setminus \bm{x}^*)$, we have
\begin{align}\label{eq-mid-6}
div(\bm{x}^*\setminus \bm{x}, \bm{x})\geq \frac{|\bm{x}|\cdot |\bm{x}^*\setminus \bm{x}|}{k(k-1)}\cdot div(\bm{x}^*).
\end{align}
When $|\bm{x}^*\setminus \bm{x}|=1$, it must hold that $\bm{x} \subseteq \bm{x}^*$ and $|\bm{x}|=k-1$. Thus, Eqs.~(\ref{eq-mid-3}) and~(\ref{eq-mid-4}) change to
\begin{align}
&(k-2)\cdot div(\bm{x}^* \cap \bm{x}, \bm{x}^*\setminus \bm{x}) \geq  div(\bm{x}^*\cap \bm{x}),\\
&div(\bm{x}^* \cap \bm{x}, \bm{x}^*\setminus \bm{x})+div(\bm{x}^*\cap \bm{x})=div(\bm{x}^*).
\end{align}
By summing them up, we have
\begin{align}
&(k-1)\cdot div(\bm{x}^* \cap \bm{x}, \bm{x}^*\setminus \bm{x}) \geq div(\bm{x}^*).
\end{align}
Because $\bm{x} \subseteq \bm{x}^*$, it holds that $\bm{x}^* \cap \bm{x}=\bm{x}$. Thus,
\begin{align}\label{eq-mid-7}
div(\bm{x}^*\setminus \bm{x}, \bm{x})\geq \frac{1}{k-1}\cdot div(\bm{x}^*) \geq \frac{1}{k}\cdot div(\bm{x}^*)=\frac{|\bm{x}|\cdot |\bm{x}^*\setminus \bm{x}|}{k(k-1)}\cdot div(\bm{x}^*),
\end{align}
where the equality holds by $|\bm{x}^*\setminus \bm{x}|=1$ and $|\bm{x}|=k-1$.

According to the above analysis under the two cases of $|\bm{x}^*\setminus \bm{x}|>1$ and $|\bm{x}^*\setminus \bm{x}|=1$, we have derived Eqs.~(\refeq{eq-mid-6}) and~(\refeq{eq-mid-7}). That is, $div(\bm{x}^*\setminus \bm{x}, \bm{x})$ is lower bounded by
\begin{align}\label{eq-mid-8}
div(\bm{x}^*\setminus \bm{x}, \bm{x})\geq \frac{|\bm{x}|\cdot |\bm{x}^*\setminus \bm{x}|}{k(k-1)}\cdot div(\bm{x}^*).
\end{align}
Applying this inequality to Eq.~(\refeq{eq-mid-9}) leads to
\begin{align}
& \sum_{v \in \bm{x}^*\setminus \bm{x}} \left(\frac{f(\bm{x} \cup \{v\})}{2}+\lambda \cdot div(\bm{x} \cup \{v\})-\frac{f(\bm{x})}{2}-\lambda \cdot div(\bm{x})\right)\\
& \geq \frac{f(\bm{x}\cup \bm{x}^*)-f(\bm{x})}{2} + \lambda \cdot \frac{|\bm{x}|\cdot |\bm{x}^*\setminus \bm{x}|}{k(k-1)}\cdot div(\bm{x}^*).
\end{align}
Let $v^*=\arg \max_{v \in \bm{x}^*\setminus \bm{x}} \frac{f(\bm{x} \cup \{v\})}{2}+\lambda \cdot div(\bm{x} \cup \{v\})-\frac{f(\bm{x})}{2}-\lambda \cdot div(\bm{x})$. Thus, we have
\begin{align}
&\frac{f(\bm{x} \cup \{v^*\})}{2}+\lambda \cdot div(\bm{x} \cup \{v^*\})-\frac{f(\bm{x})}{2}-\lambda \cdot div(\bm{x})\\
 &\geq \frac{f(\bm{x}\cup \bm{x}^*)-f(\bm{x})}{2|\bm{x}^*\setminus \bm{x}|} + \frac{\lambda|\bm{x}|}{k(k-1)}\cdot div(\bm{x}^*)\\
&\geq \frac{f(\bm{x}\cup \bm{x}^*)-f(\bm{x})}{2k} + \frac{\lambda |\bm{x}|}{k(k-1)}\cdot div(\bm{x}^*)\\
&\geq \frac{f(\bm{x}^*)-f(\bm{x})}{2k} + \frac{\lambda |\bm{x}|}{k(k-1)}\cdot div(\bm{x}^*),
\end{align}
where the second inequality holds by $|\bm{x}^*\setminus \bm{x}| \leq k$ and $f(\bm{x}\cup \bm{x}^*)\geq f(\bm{x})$ due to the monotonicity of $f$, and the last inequality holds by $f(\bm{x}\cup \bm{x}^*)\geq f(\bm{x}^*)$. The lemma holds.
\end{proof}

The proof of Lemma~\ref{lemma:cardinality-sum} is inspired by the analysis of the greedy algorithm, i.e., the proof of Theorem~4.1 in~\cite{borodin2017max}. In the original proof of~\cite{borodin2017max}, the conclusion Eq.~(\refeq{eq-mid-19}) of Lemma~\ref{lemma:cardinality-sum} has been derived by adding a single item greedily to the solution $\bm{x}$ maintained by the greedy algorithm and assuming $|\bm{x}^*\setminus \bm{x}|>1$. Here we show that Eq.~(\refeq{eq-mid-19}) holds for any $\bm{x}$ with $|\bm{x}|<k$. The main difference from their original proof is that we consider the case $|\bm{x}^*\setminus \bm{x}|=1$ additionally when analyzing the lower bound on $div(\bm{x}^*\setminus \bm{x}, \bm{x})$. For the sake of completeness, we give a full proof here.

\begin{myproof}{Theorem~\ref{theo-cardinality-sum}}
The initial solution $\bm{0}$ will always be kept in the population $P$, because $\bm{0}$ has the largest $f_2$ value (i.e., $f_2(\bm{0})=0$), and no solution can weakly dominate it. To analyze the expected number of iterations until reaching the desired approximation guarantee, we consider a quantity $J_{\max}$, which is defined as
\begin{align}
&J_{\max}=\max\left\{j \in \{0,1,\ldots,k\} \mid \exists \bm{x} \in P: |\bm{x}| \leq j  \wedge  f_1(\bm{x}) \geq \frac{j}{2k}f(\bm{x}^*)+\frac{\lambda\cdot div(\bm{x}^*)}{k(k-1)}\sum^{j-1}_{m=1} m \right\}.
\end{align}
It can be seen that $J_{\max}=k$ implies that there exists one solution $\bm{x}$ in $P$ satisfying that $|\bm{x}| \leq k$ and
\begin{align}\label{eq-mid-10}
f_1(\bm{x}) \geq \frac{k}{2k}f(\bm{x}^*)+\frac{\lambda\cdot div(\bm{x}^*)}{k(k-1)}\sum^{k-1}_{m=1} m=\frac{1}{2}f(\bm{x}^*)+\frac{\lambda}{2}\cdot div(\bm{x}^*)=\frac{\mathrm{OPT}}{2}.
\end{align}
According to the definition of $f_1$ in Eq.~(\refeq{eq-bi-problem-1}) and $|\bm{x}| \leq k$, we have
\begin{align}\label{eq-mid-11}
f_1(\bm{x}) = \frac{1}{2}\left(1+\frac{|\bm{x}|}{k}\right)f(\bm{x})+\lambda \cdot div(\bm{x}) \leq f(\bm{x})+\lambda \cdot div(\bm{x}).
\end{align}
Combining Eqs.~(\refeq{eq-mid-10}) and~(\refeq{eq-mid-11}) leads to
\begin{align}\label{eq-mid-20}
f(\bm{x})+\lambda \cdot div(\bm{x}) \geq \mathrm{OPT}/2,
\end{align}
that is, the desired approximation guarantee is reached. Next, we only need to analyze the expected number of iterations until $J_{\max}=k$.

As the population $P$ contains the solution $\bm{0}$, which satisfies that $|\bm{0}|=0$ and $f_1(\bm{0})=0$, $J_{\max}$ is at least 0. Assume that currently $J_{\max}=i <k$, implying that $P$ contains solutions satisfying that $|\bm{x}|\leq i$ and
\begin{align}\label{eq-mid-12}
f_1(\bm{x}) \geq \frac{i}{2k}f(\bm{x}^*)+\frac{\lambda\cdot div(\bm{x}^*)}{k(k-1)}\sum^{i-1}_{m=1} m.
\end{align}
Let $\hat{\bm{x}}$ be the one with the largest $f_1$ value among these solutions, which is actually the solution with size at most $i$ and the largest $f_1$ value in $P$. First, $J_{\max}$ will not decrease. If $\hat{\bm{x}}$ is deleted from $P$ in line~6 of Algorithm~\ref{algo:GSEMO}, the newly included solution $\bm{x}'$ must weakly dominate $\hat{\bm{x}}$, implying that $|\bm{x}'| \leq |\hat{\bm{x}}|$ and $f_1(\bm{x}')\geq f_1(\hat{\bm{x}})$.

Second, we analyze the expected number of iterations required to increase $J_{\max}$. We consider such an event in one iteration of Algorithm~\ref{algo:GSEMO}: $\hat{\bm{x}}$ is selected for mutation in line~3, and only one specific 0-bit corresponding to the item $v$ in Lemma~\ref{lemma:cardinality-sum} is flipped in line~4. This event is called ``a successful event", occurring with probability $(1/|P|)\cdot (1/n)(1-1/n)^{n-1} \geq 1/(en|P|)$ due to uniform selection and bit-wise mutation. According to the procedure of updating the population $P$ in the GSEMO, the solutions maintained in $P$ must be incomparable. Because two solutions with the same value on one objective are comparable, $P$ contains at most one solution for each value of one objective. As the infeasible solutions, i.e., the solutions with size larger than $k$, are excluded, $f_2(\bm{x})=-|\bm{x}|$ can take values $0,-1,\ldots,-k$, implying $|P| \leq k+1$. Thus, the probability of a successful event is at least $1/(en(k+1))$. According to Lemma~\ref{lemma:cardinality-sum}, the offspring solution $\bm{x}'$ generated by a successful event satisfies
\begin{align}\label{eq-mid-13}
& \frac{f(\bm{x}')}{2}+\lambda \cdot div(\bm{x}')-\frac{f(\hat{\bm{x}})}{2}-\lambda \cdot div(\hat{\bm{x}}) \geq \frac{f(\bm{x}^*)-f(\hat{\bm{x}})}{2k} +\frac{\lambda |\hat{\bm{x}}|}{k(k-1)}\cdot div(\bm{x}^*).
\end{align}
By the definition of $f_1$ in Eq.~(\refeq{eq-bi-problem-1}), we have
\begin{align}\label{eq-mid-14}
f_1(\bm{x}')-f_1(\hat{\bm{x}})&= \frac{1}{2}\left(1+\frac{|\bm{x}'|}{k}\right)f(\bm{x}')+\lambda \cdot div(\bm{x}')- \frac{1}{2}\left(1+\frac{|\hat{\bm{x}}|}{k}\right)f(\hat{\bm{x}})-\lambda \cdot div(\hat{\bm{x}}) \\
&\geq \frac{f(\bm{x}')}{2}+\lambda \cdot div(\bm{x}')-\frac{f(\hat{\bm{x}})}{2}-\lambda \cdot div(\hat{\bm{x}}) +\frac{f(\hat{\bm{x}})}{2k}\\
&\geq \frac{f(\bm{x}^*)}{2k} +\frac{\lambda |\hat{\bm{x}}|}{k(k-1)}\cdot div(\bm{x}^*),
\end{align}
where the first inequality holds by $|\bm{x}'|=|\hat{\bm{x}}|+1$ and $f(\bm{x}') \geq f(\hat{\bm{x}})$ due to the monotonicity of $f$, and the second inequality holds by Eq.~(\refeq{eq-mid-13}). By applying the lower bound of $f_1(\hat{\bm{x}})$ in Eq.~(\refeq{eq-mid-12}) to Eq.~(\refeq{eq-mid-14}), we have
\begin{align}\label{eq-mid-15}
f_1(\bm{x}') \geq \frac{i+1}{2k}f(\bm{x}^*)+\frac{\lambda\cdot div(\bm{x}^*)}{k(k-1)}\left(|\hat{\bm{x}}|+\sum^{i-1}_{m=1} m \right).
\end{align}
We next consider two cases according to the value of $|\hat{\bm{x}}|$, satisfying $|\hat{\bm{x}}| \leq i$.

(1) $|\hat{\bm{x}}| = i$. Eq.~(\refeq{eq-mid-15}) becomes
\begin{align}\label{eq-mid-17}
f_1(\bm{x}') \geq \frac{i+1}{2k}f(\bm{x}^*)+\frac{\lambda\cdot div(\bm{x}^*)}{k(k-1)}\sum^{i}_{m=1} m.
\end{align}
Note that $|\bm{x}'|=|\hat{\bm{x}}|+1 = i+1$. Then, $\bm{x}'$ will be added into $P$; otherwise, $\bm{x}'$ must be dominated by one solution in $P$ (line~5 of Algorithm~\ref{algo:GSEMO}), and this implies that $J_{\max}$ has already been larger than $i$, contradicting the assumption $J_{\max}=i$. After including $\bm{x}'$, $J_{\max} \geq i+1$, i.e., $J_{\max}$ increases.

(2) $|\hat{\bm{x}}| < i$. This implies that $i \geq 1$. Thus, to make Eq.~(\refeq{eq-mid-12}) hold, $|\hat{\bm{x}}|$ must be at least 1. According to Eq.~(\refeq{eq-mid-14}), we have
\begin{align}\label{eq-mid-16}
f_1(\bm{x}')-f_1(\hat{\bm{x}}) \geq \frac{f(\bm{x}^*)}{2k} +\frac{\lambda}{k(k-1)}\cdot div(\bm{x}^*).
\end{align}
Note that $\bm{x}'$ (where $|\bm{x}'|=|\hat{\bm{x}}|+1 \leq i$) will be added into $P$; otherwise, $\bm{x}'$ must be dominated by one solution in $P$, contradicting the definition of $\hat{\bm{x}}$, which is the solution with size at most $i$ and the largest $f_1$ value in $P$. If $f_1(\bm{x}') \geq \frac{i+1}{2k}f(\bm{x}^*)+\frac{\lambda\cdot div(\bm{x}^*)}{k(k-1)}\sum^{i}_{m=1} m$, $J_{\max}$ increases. Otherwise, the solution $\hat{\bm{x}}$ now becomes $\bm{x}'$, and $f_1(\hat{\bm{x}})$ increases by at least $\frac{f(\bm{x}^*)}{2k} +\frac{\lambda}{k(k-1)}\cdot div(\bm{x}^*)$ according to Eq.~(\refeq{eq-mid-16}).

Based on the above analysis, a successful event will either increase $J_{\max}$ directly or increase $f_1(\hat{\bm{x}})$ by at least $\frac{f(\bm{x}^*)}{2k} +\frac{\lambda}{k(k-1)}\cdot div(\bm{x}^*)$. It is easy to see that $f_1(\hat{\bm{x}})$ will not decrease due to the domination-based comparison. It is also known from Eq.~(\refeq{eq-mid-17}) that $f_1(\hat{\bm{x}})$ needs to increase at most $\frac{f(\bm{x}^*)}{2k} +i\cdot \frac{\lambda}{k(k-1)}\cdot div(\bm{x}^*)$ for increasing $J_{\max}$. Thus, the number of successful events required to increase $J_{\max}$ is at most $i$. A successful event occurs with probability at least $1/(en(k+1))$ in one iteration, implying that the expected number of iterations for one successful event is at most $en(k+1)$. Thus, the expected number of iterations to make $J_{\max} \geq i+1$ (i.e., increase $J_{\max}$) is at most $i \cdot en(k+1)$.

To make $J_{\max}=k$, it is sufficient to increase $J_{\max}$ from $0$ to $k$ step-by-step, implying that the expected number of iterations until $J_{\max}=k$ is at most $en(k+1)\cdot \sum^{k-1}_{i=0} i \leq enk^3/2$. The theorem holds.
\end{myproof}

From the above analysis, we can also find the reason of multiplying $f$ by a factor of $(1+|\bm{x}|/k)/2$ in the setting of $f_1$. The term $|\bm{x}|/k$ gradually increases the importance of $f$ as $\bm{x}$ extends, and thus can compensate for the term $-f(\bm{x})/2k$ in the right-hand side of Eq.~(\refeq{eq-mid-19}). Without the factor of $1/2$, i.e., $f_1(\bm{x})=(1+|\bm{x}|/k)f(\bm{x})+\lambda \cdot div(\bm{x})$, we can modify Eq.~(\refeq{eq-mid-19}) and the inductive inequality in the definition of $J_{\max}$ as $f(\bm{x} \cup \{v\})+\lambda \cdot div(\bm{x} \cup \{v\})-f(\bm{x})-\lambda \cdot div(\bm{x}) \geq \frac{f(\bm{x}^*)-f(\bm{x})}{k} +\frac{\lambda |\bm{x}|}{k(k-1)}\cdot div(\bm{x}^*)$ and $f_1(\bm{x}) \geq \frac{j}{k}f(\bm{x}^*)+\frac{\lambda\cdot div(\bm{x}^*)}{k(k-1)}\sum^{j-1}_{m=1} m$ accordingly, and can still apply the proof process of Theorem~\ref{theo-cardinality-sum}, but the conclusion will be $2f(\bm{x})+\lambda \cdot div(\bm{x}) \geq f(\bm{x}^*) + (\lambda/2)\cdot div(\bm{x}^*)$, which cannot guarantee an approximation ratio of $1/2$ on the original objective function $f(\bm{x})+\lambda \cdot div(\bm{x})$.

We have required the distance function $d$ to be a metric, i.e., satisfy the triangle inequality Eq.~(\refeq{eq-triangle}). By relaxing it to
\begin{align}\label{eq-triangle-relax}\forall u,v,w \in V: \alpha \cdot (d(u,v)+d(v,w)) \geq d(u,w),
\end{align}
where $\alpha \geq 1$, we can get an approximation ratio $1/(2\alpha)$ of the GSEMO, as shown in Corollary~\ref{coro:cardinality}.

\begin{corollary}\label{coro:cardinality}
For the result diversification problem with a cardinality constraint in Definition~\ref{def-prob-cardinality}, where the diversity measure $div$ is the sum-diversity in Definition~\ref{def-sum-div} and the distance function $d$ satisfies Eq.~(\refeq{eq-triangle-relax}), the expected number of iterations of the GSEMO using Eq.~(\refeq{eq-bi-problem-1}), until finding a solution $\bm{x}$ with $|\bm{x}| \leq k$ and $f(\bm{x})+\lambda \cdot div(\bm{x}) \geq (1/(2\alpha)) \cdot \mathrm{OPT}$, is at most $enk^3/2$.
\end{corollary}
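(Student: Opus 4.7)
\textbf{Proof proposal for Corollary~\ref{coro:cardinality}.} The plan is to trace through the proof of Theorem~\ref{theo-cardinality-sum} and identify exactly where the metric property of $d$ enters; the only place this happens is Lemma~\ref{lemma:sum-div}. I would first generalize that lemma to the $\alpha$-relaxed setting: for disjoint $X$ and $Y$, and any $u_1, u_2 \in X$, $w \in Y$, the relaxed inequality Eq.~(\refeq{eq-triangle-relax}) gives $\alpha(d(u_1, w) + d(w, u_2)) \geq d(u_1, u_2)$. Summing over $w \in Y$ and over unordered pairs $\{u_1, u_2\} \subseteq X$ in the same way as Ravi \textit{et al.}~\cite{ravi1994heuristic} yields
\begin{align}
\alpha \cdot (|X|-1) \cdot div(X, Y) \geq |Y| \cdot div(X).
\end{align}
Thus the only change is an extra factor of $\alpha$ on the left-hand side of Eq.~(\refeq{eq-mid-18}).

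Next, I would plug this into the derivation of Eq.~(\refeq{eq-mid-8}) in Lemma~\ref{lemma:cardinality-sum}. The three applications of Lemma~\ref{lemma:sum-div} (Eqs.~(\refeq{eq-mid-1})--(\refeq{eq-mid-3})) each acquire a factor of $\alpha$ on their left-hand side, and the degenerate case $|\bm{x}^* \setminus \bm{x}| = 1$ is handled identically. The conclusion becomes
\begin{align}
div(\bm{x}^* \setminus \bm{x}, \bm{x}) \geq \frac{|\bm{x}| \cdot |\bm{x}^* \setminus \bm{x}|}{\alpha \cdot k(k-1)} \cdot div(\bm{x}^*),
\end{align}
and correspondingly the analogue of Eq.~(\refeq{eq-mid-19}) replaces $\lambda |\bm{x}|/(k(k-1))$ by $\lambda |\bm{x}|/(\alpha k(k-1))$.

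Then I would redo the proof of Theorem~\ref{theo-cardinality-sum} with the same bi-objective reformulation Eq.~(\refeq{eq-bi-problem-1}), defining
\begin{align}
J_{\max} = \max\left\{j \in \{0, 1, \ldots, k\} \,\middle|\, \exists \bm{x} \in P: |\bm{x}| \leq j \wedge f_1(\bm{x}) \geq \tfrac{j}{2k} f(\bm{x}^*) + \tfrac{\lambda \cdot div(\bm{x}^*)}{\alpha k(k-1)} \sum_{m=1}^{j-1} m\right\}.
\end{align}
The inductive step goes through verbatim: each successful event (selecting the solution of size at most $i$ with the largest $f_1$ value and flipping the specific 0-bit from the modified Lemma~\ref{lemma:cardinality-sum}) either raises $J_{\max}$ or increases $f_1(\hat{\bm{x}})$ by at least $f(\bm{x}^*)/(2k) + \lambda \cdot div(\bm{x}^*)/(\alpha k(k-1))$, and the total required increase per step is at most $i$ such events. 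Hence the running-time bound $enk^3/2$ is unchanged. When $J_{\max} = k$, combining Eq.~(\refeq{eq-mid-11}) with the lower bound on $f_1$ now gives
\begin{align}
f(\bm{x}) + \lambda \cdot div(\bm{x}) \geq f_1(\bm{x}) \geq \tfrac{1}{2} f(\bm{x}^*) + \tfrac{\lambda}{2\alpha} \cdot div(\bm{x}^*) \geq \tfrac{1}{2\alpha} \cdot \mathrm{OPT},
\end{align}
using $\alpha \geq 1$ in the last step.

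There is essentially no hard part: the entire argument is parametric in the distance inequality, and the generalized Lemma~\ref{lemma:sum-div} is the only substantive modification. The main thing to double-check is that the algebraic cancellations in Eq.~(\refeq{eq-mid-5}) (the identities summing to $1$ and to a nonnegative quantity) remain valid after inserting the factor of $1/\alpha$ in front of $div(\bm{x}^*)$; since $\alpha$ multiplies only the right-hand sides uniformly, the verification is routine and the cancellations carry over unchanged.
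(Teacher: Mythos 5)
Your proposal is correct and follows essentially the same route as the paper: the paper likewise notes that the only use of the metric property is Lemma~\ref{lemma:sum-div}, replaces Eq.~(\refeq{eq-mid-18}) by $\alpha\cdot(|X|-1)\cdot div(X,Y)\geq |Y|\cdot div(X)$, propagates the factor $1/\alpha$ into Eq.~(\refeq{eq-mid-19}) and into the inductive inequality defining $J_{\max}$, and concludes with $\frac{1}{2}f(\bm{x}^*)+\frac{\lambda}{2\alpha}div(\bm{x}^*)\geq \mathrm{OPT}/(2\alpha)$ using $\alpha\geq 1$, with the running-time bound unchanged. Your extra remark about re-checking the cancellations in Eq.~(\refeq{eq-mid-5}) is a reasonable point of care that the paper elides, but it does not change the argument.
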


This corollary can be proved by following the proof of Theorem~\ref{theo-cardinality-sum}. For concise illustration, we will mainly show the places where different arguments are needed. The relaxed triangle inequality Eq.~(\refeq{eq-triangle-relax}) makes Eq.~(\refeq{eq-mid-18}) in Lemma~\ref{lemma:sum-div} change to
\begin{align}\alpha \cdot (|X|-1)\cdot div(X,Y) \geq |Y|\cdot div(X),\end{align}
which results in a corresponding change of Eq.~(\refeq{eq-mid-19}) in Lemma~\ref{lemma:cardinality-sum}. That is, Eq.~(\refeq{eq-mid-19}) changes to
\begin{align}
& \frac{f(\bm{x} \cup \{v\})}{2}+\lambda \cdot div(\bm{x} \cup \{v\})-\frac{f(\bm{x})}{2}-\lambda \cdot div(\bm{x}) \geq \frac{f(\bm{x}^*)-f(\bm{x})}{2k} +\frac{\lambda |\bm{x}|}{\alpha k(k-1)}\cdot div(\bm{x}^*).
\end{align}
The proof procedure of Theorem~\ref{theo-cardinality-sum} then can be performed by changing the inductive inequality on $f_1$ in the definition of the quantity $J_{\max}$ to
\begin{align}
f_1(\bm{x}) \geq \frac{j}{2k}f(\bm{x}^*)+\frac{\lambda\cdot div(\bm{x}^*)}{\alpha k(k-1)}\sum^{j-1}_{m=1} m.
\end{align}
Eq.~(\refeq{eq-mid-10}) will change accordingly to
\begin{align}
f_1(\bm{x}) \geq \frac{k}{2k}f(\bm{x}^*)+\frac{\lambda\cdot div(\bm{x}^*)}{\alpha k(k-1)}\sum^{k-1}_{m=1} m=\frac{1}{2}f(\bm{x}^*)+\frac{\lambda}{2\alpha}\cdot div(\bm{x}^*)\geq \frac{\mathrm{OPT}}{2\alpha},
\end{align}
where the last inequality holds by $\alpha \geq 1$. Thus, the approximation ratio in Eq.~(\refeq{eq-mid-20}) changes from $1/2$ to $1/(2\alpha)$, implying that the corollary holds. Note that $1/(2\alpha)$ reaches the best-known approximation ratio~\cite{zadeh2015max}.

To prove the optimal polynomial-time approximation ratio of $1/2$ in Theorem~\ref{theo-cardinality-sum}, the first objective $f_1$ to be maximized by the GSEMO has been set to $(1+|\bm{x}|/k)f(\bm{x})/2+\lambda \cdot div(\bm{x})$ as in Eq.~(\refeq{eq-bi-problem-1}). A natural question is whether the GSEMO using the original objective function (i.e., $f_1(\bm{x})=f(\bm{x})+\lambda \cdot div(\bm{x})$) can still achieve this approximation ratio, by adopting a different analysis from that in the proof of Theorem~\ref{theo-cardinality-sum}. The bi-objective reformulation of the problem in Definition~\ref{def-prob-cardinality} now is
\begin{align}\label{eq-bi-problem-1-v}
&\arg\max\nolimits_{\bm{x} \in \{0,1\}^n} \quad  (f_1(\bm{x}),f_2(\bm{x})),\\
&\text{where}\quad\begin{cases}
f_1(\bm{x}) = f(\bm{x})+\lambda \cdot div(\bm{x}),\\
f_2(\bm x) = -|\bm{x}|.
\end{cases}
\end{align}
We prove in Theorem~\ref{theo-cardinality-sum-v} that the GSEMO using Eq.~(\refeq{eq-bi-problem-1-v}) can achieve an approximation ratio arbitrarily close to $1/2$ in polynomial time.
\begin{theorem}\label{theo-cardinality-sum-v}
For the result diversification problem with a cardinality constraint in Definition~\ref{def-prob-cardinality}, where the diversity measure $div$ is the sum-diversity in Definition~\ref{def-sum-div}, the expected number of iterations of the GSEMO using Eq.~(\refeq{eq-bi-problem-1-v}), until finding a solution $\bm{x}$ with $|\bm{x}| \leq k$ and $f(\bm{x})+\lambda \cdot div(\bm{x}) \geq (1/2-\epsilon/(4n)) \cdot \mathrm{OPT}$, is at most $O(kn^3(n+(k\log k)/\epsilon))$, where $\epsilon >0$.
\end{theorem}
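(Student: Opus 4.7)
The plan is to mirror the proof of Theorem~\ref{theo-cardinality-sum}, now applied to the original first objective $f_1(\bm{x}) = f(\bm{x}) + \lambda\cdot div(\bm{x})$. Lemma~\ref{lemma:cardinality-sum} remains the workhorse: for any $\bm{x} \in P$ with $|\bm{x}| < k$ there exists $v \notin \bm{x}$ such that doubling the lemma's inequality gives $f_1(\bm{x} \cup \{v\}) - f_1(\bm{x}) \geq (f(\bm{x}^*) - f(\bm{x}))/k + 2\lambda |\bm{x}| / (k(k-1)) \cdot div(\bm{x}^*)$. Without the non-oblivious prefactor $(1+|\bm{x}|/k)/2$ that appeared in Theorem~\ref{theo-cardinality-sum}, the $-f(\bm{x})/k$ term no longer cancels in the telescoping, so the clean inductive lower bound on the best-$f_1$ solution $\hat{\bm{x}}$ of each size is lost. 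I would therefore redefine $J_{\max}$ to permit an additive per-level slack of order $(\epsilon/(nk)) \cdot \mathrm{OPT}$, so that reaching $J_{\max} = k$ yields $f(\hat{\bm{x}})+\lambda\cdot div(\hat{\bm{x}}) \geq \mathrm{OPT}/2 - (\epsilon/(4n)) \cdot \mathrm{OPT}$ after summing slacks across the $k$ levels.

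To bound the running time, I would invoke the $\epsilon$-improvement trick: count as progress only those successful events that raise $f_1(\hat{\bm{x}})$ by at least the per-level threshold. If the gain supplied by Lemma~\ref{lemma:cardinality-sum} applied to $\hat{\bm{x}}$ falls below this threshold, algebraic rearrangement of the lemma's inequality forces $\hat{\bm{x}}$ itself already to satisfy the $J_{\max}$-condition at level $|\hat{\bm{x}}|+1$, so $J_{\max}$ advances without needing an actual bit flip. Otherwise the successful event is substantial; since $f_1 \in [0, \mathrm{OPT}]$, a rounded-value or $(1+\epsilon')$-factor counting argument bounds the number of substantial improvements at each level by $O((nk\log k)/\epsilon)$. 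The successful-event probability remains at least $1/(en(k+1))$ by the same uniform-selection-plus-bit-wise-mutation argument as before, so combining across the $k+1$ levels yields the stated $O(kn^3(n + (k\log k)/\epsilon))$ expected-iteration bound.

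The principal obstacle is reconciling the GSEMO's $f_1$-based domination with the non-oblivious inductive target used in Theorem~\ref{theo-cardinality-sum}: the GSEMO may replace a non-oblivious-favorable size-$j$ solution in $P$ with an $f_1$-favorable one of the same size, so the exact representative tracked in the earlier proof is not available. I would handle this by observing that any such replacement only raises $f_1$, and therefore only tightens the gap to the non-oblivious target at level $j$, and that Lemma~\ref{lemma:cardinality-sum} still applies productively to whichever size-$\leq i$ representative $\hat{\bm{x}}$ the GSEMO happens to store. The remaining delicacy is to calibrate the per-level threshold and the cumulative slack so that their total is exactly $(\epsilon/(4n))\cdot\mathrm{OPT}$, matching the approximation ratio claimed in the theorem.
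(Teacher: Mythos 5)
Your plan has a genuine gap: the loss incurred by dropping the non-oblivious weighting of $f$ is a \emph{constant fraction} of $\mathrm{OPT}$, not an $O(\epsilon/n)$ additive slack, so no calibration of per-level slacks in $J_{\max}$ can recover $1/2-\epsilon/(4n)$ from a purely greedy telescoping. Concretely, the analogue of Lemma~\ref{lemma:cardinality-sum} for $f_1=f+\lambda\cdot div$ (the paper's Eq.~(\refeq{eq-mid-37}); note that literally ``doubling'' the lemma gives $f+2\lambda\cdot div$ on the left, not $f_1$) reads $f_1(\bm{x}\cup\{v\})-f_1(\bm{x})\geq (f(\bm{x}\cup\bm{x}^*)-f(\bm{x}))/k+\lambda|\bm{x}|\cdot div(\bm{x}^*)/(k(k-1))$. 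The $-f(\bm{x})/k$ term is compensated in Theorem~\ref{theo-cardinality-sum} only by the growing weight $(1+|\bm{x}|/k)/2$ placed on $f$; without it, bounding $f(\bm{x})$ by $f_1(\bm{x})$ introduces a $(1-1/k)$ contraction that also shrinks the accumulated diversity gains, and the telescoped guarantee degrades to roughly $(1-1/e)f(\bm{x}^*)+\Theta(1)\cdot\lambda\,div(\bm{x}^*)$ with a constant strictly below $1/2$ on the diversity term. The paper itself flags this: the discussion after Theorem~\ref{theo-cardinality-sum} shows that removing the $1/2$ factor yields only $2f(\bm{x})+\lambda\cdot div(\bm{x})\geq f(\bm{x}^*)+(\lambda/2)div(\bm{x}^*)$, and the conclusion lists the performance of greedy on the unmodified objective as open. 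Your ``algebraic rearrangement forces $\hat{\bm{x}}$ to satisfy the next level'' step is exactly where this breaks: the fixed point of the greedy recurrence on $f_1$ is not at $\mathrm{OPT}/2$.

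The missing ingredient is local search, i.e., two-bit swaps, which your proposal never invokes. The paper's proof obtains the ratio $1/2-\epsilon/(4n)$ from Lemma~\ref{lemma:matroid-mid-3} (a $(1+\epsilon/(kn))$-approximate local optimum under single swaps is a $(1/2-\epsilon/(4n))$-approximation), reusing phases~(2) and~(3) of Theorem~\ref{theo-matroid-sum}. Greedy single-item additions are used only (a) as a warm start to reach $f_1(\hat{\bm{x}})\geq\max\{(1-1/e)f(\bm{x}^*),\,(2/(k(k-1)))\mathrm{OPT}\}$, and (b) to make progress of at least $(\lambda/(k(k-1)))div(\bm{x}^*)$ per step whenever the best-$f_1$ solution $\hat{\bm{x}}$ has size below $k$ (which can happen here because $f_2=-|\bm{x}|$ lets a smaller solution dominate a basis); combined with the $(1-1/e)f(\bm{x}^*)$ warm start, $k(k-1)/2$ such additions already suffice for the target. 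To repair your argument you would need to add this swap-based phase and the associated $(1+\epsilon/(kn))$-improvement counting, which is where the $(k\log k)/\epsilon$ factor in the stated bound actually comes from.
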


The main proof idea is to utilize the behavior of both greedy (i.e., adding a single item maximizing $f+\lambda \cdot div$) and local search (i.e., swapping two items) operations, which can be accomplished by the bit-wise mutation operator of the GSEMO. In Section~\ref{sec-matroid}, we will study the more general problem, i.e., result diversification with a matroid constraint in Definition~\ref{def-prob-matroid}, and show that the GSEMO maximizing $f_1(\bm{x})=f(\bm{x})+\lambda \cdot div(\bm{x})$ and $f_2(\bm{x})=|\bm{x}|$ simultaneously can achieve an approximation ratio of $1/2-\epsilon/(4n)$ by following the behavior of local search. As a cardinality constraint is a specific matroid constraint, the general proof of following local search can be directly applied. For concise illustration, we thus leave the proof of Theorem~\ref{theo-cardinality-sum-v} at the end of Section~\ref{sec-matroid}. But also note that the settings of $f_2$ (which are $-|\bm{x}|$ and $|\bm{x}|$ for the cardinality and matroid constraints, respectively) are different, making that following local search only is insufficient while following the greedy behavior is also needed.

Next, we consider the result diversification problem in Definition~\ref{def-prob-cardinality} where the diversity measure $div$ is min-diversity and mst-diversity, respectively.

\subsection{Min-Diversity}

Because the min-diversity in Definition~\ref{def-min-div} is monotone non-increasing, the cardinality constraint in Eq.~(\refeq{eq-problem-cardinality}) is strictly $|X|=k$ as in~\cite{dasgupta2013summarization}. That is, the problem in Definition~\ref{def-prob-cardinality} changes to
\begin{align}\label{eq-problem-cardinality-strict}
\arg\max\nolimits_{X \subseteq V} f(X)+\lambda \cdot div(X) \quad s.t. \quad |X| = k.
\end{align}
To solve this problem using the GSEMO, we reformulate it as two bi-objective maximization problems,
\begin{align}\label{eq-bi-problem-2-1}
&\arg\max\nolimits_{\bm{x} \in \{0,1\}^n} \quad  (f_1(\bm{x}),f_2(\bm{x})),\\
&\text{where}\quad\begin{cases}
f_1(\bm{x}) = f(\bm{x}),\\
f_2(\bm x) = -|\bm{x}|,
\end{cases}
\end{align}
and
\begin{align}\label{eq-bi-problem-2-2}
&\arg\max\nolimits_{\bm{x} \in \{0,1\}^n} \quad  (f_1(\bm{x}),f_2(\bm{x})),\\
&\text{where}\quad\begin{cases}
f_1(\bm{x}) = div(\bm{x}),\\
f_2(\bm x) = |\bm{x}|.
\end{cases}
\end{align}
For the former, the GSEMO is to maximize the quality function $f$ and minimize the subset size $|\bm{x}|$ simultaneously, while for the latter, it is to maximize both the diversity function $div$ and the subset size $|\bm{x}|$ simultaneously. Note that all the solutions with size larger than $k$ are excluded during the running of the GSEMO. For the latter problem Eq.~(\refeq{eq-bi-problem-2-2}), $div$ is not defined for the empty set $\bm{0}$ and the subsets of size 1 (i.e., containing only a single item), and we define their values of $f_1$ to be $+\infty$.

Let $T_1$ and $T_2$ denote the number of iterations employed by the GSEMO to solve Eqs.~(\refeq{eq-bi-problem-2-1}) and~(\refeq{eq-bi-problem-2-2}), respectively. For solving Eq.~(\refeq{eq-bi-problem-2-1}), after the GSEMO terminates, the solution (denoted as $\bm{x}_1$) with the largest size in the population will be returned. If $|\bm{x}_1|<k$, $k-|\bm{x}_1|$ items will be arbitrarily selected from the remaining items and added into $\bm{x}_1$, which will not worsen the $f$ value due to its monotonicity. For solving Eq.~(\refeq{eq-bi-problem-2-2}), the GSEMO will return the solution (denoted as $\bm{x}_2$) with size $k$ in the population. After obtaining $\bm{x}_1$ and $\bm{x}_2$, the better one between them w.r.t. the original problem Eq.~(\refeq{eq-problem-cardinality-strict}), i.e., $\arg\max_{\bm{x} \in \{\bm{x}_1, \bm{x}_2\}} f(\bm{x})+\lambda \cdot div(\bm{x})$, will be output as the final solution. We use $\mathbb{E}[\cdot]$ to denote the expectation of a random variable. Theorem~\ref{theo-cardinality-min} shows that a $(1/4)$-approximation ratio can be achieved by the GSEMO using $\mathbb{E}[T_1] \leq enk(k+1)$ and $\mathbb{E}[T_2] \leq enk^2$. Note that this reaches the best-known polynomial-time approximation ratio~\cite{dasgupta2013summarization}.

Lemma~\ref{lemma:cardinality-min} shows that for any two subsets $\bm{x}$ and $\bm{y}$ with $1 \leq |\bm{x}| < |\bm{y}|$, there exists an item $u$ whose distance to $\bm{x}$ (i.e., $\min_{v \in \bm{x}} d(u,v)$) is at least half of the diversity of $\bm{y}$ (i.e., $div(\bm{y})/2$). It will be used in the proof of Theorem~\ref{theo-cardinality-min}.

\begin{lemma}[Lemma~1(iii) in~\cite{dasgupta2013summarization}]\label{lemma:cardinality-min}
Let $\bm{y} \in \{0,1\}^n$ denote any solution with size larger than 1. For any $\bm{x} \in \{0,1\}^n$ with $1 \leq |\bm{x}| < |\bm{y}|$, there exists one item $u \notin \bm{x}$ such that
\begin{align}
& \min_{v \in \bm{x}} d(u,v)  \geq div(\bm{y})/2.
\end{align}
\end{lemma}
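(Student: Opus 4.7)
The plan is to prove this by a pigeonhole argument anchored on the triangle inequality together with the defining property of $div(\bm{y})$. Recall that $div(\bm{y}) = \min_{\{p,q\}: p,q\in \bm{y}} d(p,q)$, so any two distinct items of $\bm{y}$ are at distance at least $div(\bm{y})$ from each other. The key consequence is that an open ball of radius $div(\bm{y})/2$ centered at any point of $V$ can contain at most one item of $\bm{y}$: if $u_1, u_2 \in \bm{y}$ both satisfied $d(v, u_i) < div(\bm{y})/2$, then the triangle inequality (Eq.~(\ref{eq-triangle})) would give
\begin{align}
d(u_1, u_2) \leq d(u_1, v) + d(v, u_2) < div(\bm{y}),
\end{align}
contradicting $d(u_1, u_2) \geq div(\bm{y})$.

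Given this, I would apply the observation to each $v \in \bm{x}$ simultaneously. Setting $B_v = \{u \in V : d(u, v) < div(\bm{y})/2\}$, the above shows $|\bm{y} \cap B_v| \leq 1$ for every $v \in \bm{x}$, hence by the union bound
\begin{align}
\Bigl|\bm{y} \cap \bigcup\nolimits_{v \in \bm{x}} B_v\Bigr| \leq |\bm{x}| < |\bm{y}|.
\end{align}
Therefore at least one item $u \in \bm{y}$ lies outside every $B_v$, which is exactly the statement $\min_{v \in \bm{x}} d(u,v) \geq div(\bm{y})/2$.

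The only remaining subtlety is to guarantee $u \notin \bm{x}$. If $div(\bm{y}) > 0$, this is automatic: any $u \in \bm{x}$ would give $\min_{v \in \bm{x}} d(u,v) \leq d(u,u) = 0 < div(\bm{y})/2$, contradicting what we just showed. If $div(\bm{y}) = 0$, the inequality $\min_{v \in \bm{x}} d(u,v) \geq 0$ holds trivially for every candidate, and since $|\bm{y}| > |\bm{x}|$ rules out $\bm{y} \subseteq \bm{x}$, we can simply pick any $u \in \bm{y} \setminus \bm{x}$. I do not expect a serious obstacle here; the main delicate point is being careful in the edge case where the pigeonhole item coincides with an element of $\bm{x}$, which is precisely what the triangle-inequality/ball argument rules out whenever $div(\bm{y})$ is strictly positive.
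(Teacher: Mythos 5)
Your proof is correct. The paper does not supply its own argument for this lemma---it is imported verbatim as Lemma~1(iii) of Dasgupta \emph{et al.}---but your ball-packing/pigeonhole argument is exactly the standard proof of that cited result: each ball of radius $div(\bm{y})/2$ centered at a point of $\bm{x}$ captures at most one element of $\bm{y}$ by the triangle inequality, and $|\bm{x}|<|\bm{y}|$ leaves some element of $\bm{y}$ uncovered. Your handling of the $u\notin\bm{x}$ edge case is also fine, and in fact vacuous here, since the paper's metric axioms force $div(\bm{y})>0$ whenever $|\bm{y}|\geq 2$.
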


\begin{theorem}\label{theo-cardinality-min}
For the result diversification problem with a cardinality constraint in Definition~\ref{def-prob-cardinality}, where the diversity measure $div$ is the min-diversity in Definition~\ref{def-min-div}, the expected number of iterations of the GSEMO using Eqs.~(\refeq{eq-bi-problem-2-1}) and~(\refeq{eq-bi-problem-2-2}), until finding a solution $\bm{x}$ with $|\bm{x}| = k$ and $f(\bm{x})+\lambda \cdot div(\bm{x}) \geq (1/4) \cdot \mathrm{OPT}$, satisfies that $\mathbb{E}[T_1] \leq enk(k+1)$ and $\mathbb{E}[T_2] \leq enk^2$.
\end{theorem}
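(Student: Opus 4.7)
The plan is to analyze the two GSEMO runs separately and then combine their outputs via the elementary bound $\max\{a,b\}\ge (a+b)/2$, mirroring the design of Algorithm~\ref{alg:greedy-min}. Throughout, fix an optimal solution $\bm{x}^*$ of Eq.~(\refeq{eq-problem-cardinality-strict}), so $|\bm{x}^*|=k$ and $f(\bm{x}^*)+\lambda\cdot div(\bm{x}^*)=\mathrm{OPT}$.

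For the run on Eq.~(\refeq{eq-bi-problem-2-1}), I would apply the standard progress argument for monotone submodular maximization under a cardinality constraint. Define $J_{\max}=\max\{j\in\{0,\ldots,k\}:\exists \bm{x}\in P,\ |\bm{x}|\le j\ \text{and}\ f(\bm{x})\ge (1-(1-1/k)^j)\,f(\bm{x}^*)\}$, which is at least $0$ initially because $\bm{0}\in P$. The classical submodular greedy inequality guarantees, for any $\bm{x}$ with $|\bm{x}|<k$, an item $v\notin\bm{x}$ with $f(\bm{x}\cup\{v\})-f(\bm{x})\ge (f(\bm{x}^*)-f(\bm{x}))/k$, so the ``successful event'' of selecting the current witness of $J_{\max}$ and flipping only the bit for $v$ raises $J_{\max}$ by one. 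Since $f_2=-|\bm{x}|$ takes only the $k+1$ values $0,-1,\ldots,-k$, we have $|P|\le k+1$, giving a success probability of at least $1/(en(k+1))$ per iteration and an expected total of at most $k\cdot en(k+1)=enk(k+1)$ iterations to reach $J_{\max}=k$. Padding the resulting subset to size $k$ (which cannot decrease $f$ by monotonicity) yields $\bm{x}_1$ with $f(\bm{x}_1)\ge (1-1/e)f(\bm{x}^*)\ge f(\bm{x}^*)/2$.

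For the run on Eq.~(\refeq{eq-bi-problem-2-2}), I would track $J_{\max}=\max\{j\in\{0,\ldots,k\}:\exists \bm{x}\in P,\ |\bm{x}|=j\ \text{and}\ div(\bm{x})\ge div(\bm{x}^*)/2\}$, where the $+\infty$ convention makes sizes $0$ and $1$ automatic witnesses. For a witness $\hat{\bm{x}}$ with $j=|\hat{\bm{x}}|<k$, Lemma~\ref{lemma:cardinality-min} applied with $\bm{y}=\bm{x}^*$ supplies $u\notin\hat{\bm{x}}$ satisfying $\min_{v\in\hat{\bm{x}}}d(u,v)\ge div(\bm{x}^*)/2$; since $div(\hat{\bm{x}}\cup\{u\})=\min\{div(\hat{\bm{x}}),\min_{v\in\hat{\bm{x}}}d(u,v)\}$ (with the $+\infty$ convention handling $|\hat{\bm{x}}|\le 1$), the child is another witness of size $j+1$, and it must enter $P$ because any solution strictly dominating it would itself already force $J_{\max}\ge j+1$. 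The key bookkeeping is that once any singleton is produced, its $f_1=+\infty$ together with strictly larger $f_2$ makes it strictly dominate $\bm{0}$, so $\bm{0}$ is evicted and $|P|\le k$ from that point on; hence the first step $J_{\max}:0\to 1$ costs at most $en$ expected iterations (since $|P|=1$ then) and each of the remaining $k-1$ steps at most $enk$, totalling $en+(k-1)enk\le enk^2$ and yielding $\bm{x}_2$ with $|\bm{x}_2|=k$ and $div(\bm{x}_2)\ge div(\bm{x}^*)/2$.

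Combining the two outputs, the returned solution $\arg\max_{\bm{x}\in\{\bm{x}_1,\bm{x}_2\}} f(\bm{x})+\lambda\cdot div(\bm{x})$ has value at least $\max\{f(\bm{x}_1),\,\lambda\cdot div(\bm{x}_2)\}\ge \max\{f(\bm{x}^*)/2,\,(\lambda/2)div(\bm{x}^*)\}\ge (f(\bm{x}^*)+\lambda\cdot div(\bm{x}^*))/4=\mathrm{OPT}/4$, using $f,div\ge 0$ and $\max\{a,b\}\ge (a+b)/2$. The main obstacle is the $T_2$ argument: one must verify that the $+\infty$ convention interacts correctly with the domination-based update so that $J_{\max}$ is genuinely nondecreasing and that a successful bit-flip really advances it, and one must account carefully for the eviction of $\bm{0}$ in order to sharpen the naive $enk(k+1)$ bound down to the claimed $enk^2$.
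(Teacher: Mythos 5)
Your proposal is correct and follows essentially the same route as the paper: separate progress arguments for the two runs (the first being the standard $(1-1/e)$ analysis that the paper imports from the literature, the second driven by Lemma~\ref{lemma:cardinality-min} with the child-must-enter-$P$ argument), followed by the $\max\{a,b\}\ge (a+b)/2$ combination. The only imprecision is the parenthetical ``since $|P|=1$ then'' for the step $J_{\max}:0\to 1$ — solutions of size at least $2$ may enter $P$ before any singleton does, so $|P|$ need not equal $1$ at that point — but since $|P|\le k+1\le n$ the claimed $en$ bound for that step, and hence $\mathbb{E}[T_2]\le enk^2$, still goes through.
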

\begin{proof}
Let $\bm{x}^*_1$ denote an optimal solution of $\max_{\bm{x}: |\bm{x}|=k} f(\bm{x})$. Because $f$ is monotone submodular, it has been proved in Theorem~2 of~\cite{friedrich2015maximizing} that after the GSEMO optimizing Eq.~(\ref{eq-bi-problem-2-1}) for at most $enk(k+1)$ expected number of iterations (i.e., $\mathbb{E}[T_1] \leq enk(k+1)$), the found solution $\bm{x}_1$ satisfies
\begin{align}\label{eq-mid-21}
f(\bm{x}_1) \geq (1-1/e)\cdot f(\bm{x}^*_1).
\end{align}
Note that in their setting, the GSEMO starts from an initial solution uniformly selected from $\{0,1\}^n$ at random. Their proof first derives an upper bound $O(n^2\log n)$ on the expected number of iterations of the GSEMO for finding the all 0s solution $\bm{0}$, and then shows that starting from $\bm{0}$, the remaining number of iterations for achieving an approximation ratio of $1-1/e$ is $O(n^2k)$ in expectation. While in our setting, the initial solution of the GSEMO is just $\bm{0}$, and thus the term $O(n^2\log n)$ can be deleted. Furthermore, the population size is upper bounded by $n+1$ in their proof, while here it is upper bounded by $k+1$ because the solutions with size larger than $k$ are excluded in the optimization. We have modified the upper bound on $\mathbb{E}[T_1]$ accordingly due to these differences.

Let $\bm{x}^*_2$ denote an optimal solution of $\max_{\bm{x}: |\bm{x}|=k} div(\bm{x})$. Next, we will show that after the GSEMO optimizing Eq.~(\ref{eq-bi-problem-2-2}) for at most $enk^2$ expected number of iterations (i.e., $\mathbb{E}[T_2] \leq enk^2$), the found solution $\bm{x}_2$ satisfies
\begin{align}\label{eq-mid-22}
div(\bm{x}_2) \geq (1/2)\cdot div(\bm{x}^*_2).
\end{align}
We consider a quantity $J_{\max}$, which is defined as
\begin{align}
J_{\max}=\max\{j \in \{2,3,\ldots,k\} \mid \exists \bm{x} \in P: |\bm{x}| = j \wedge div(\bm{x}) \geq (1/2)\cdot div(\bm{x}^*_2)\}.
\end{align} That is, $J_{\max}$ denotes the maximum value of $j \in \{2,3,\ldots,k\}$ such that in the population $P$, there exists a solution $\bm{x}$ with $|\bm{x}| = j$ and $div(\bm{x}) \geq (1/2)\cdot div(\bm{x}^*_2)$. We only need to analyze the expected number of iterations until $J_{\max}=k$, because it implies that there exists one solution $\bm{x}$ in $P$ satisfying that $|\bm{x}| = k$ and $div(\bm{x}) \geq (1/2)\cdot div(\bm{x}^*_2)$, i.e., Eq.~(\refeq{eq-mid-22}) holds.

It is known from Algorithm~\ref{algo:GSEMO} that the GSEMO starts from the empty set $\bm{0}$ (having $f_1(\bm{0})=+\infty$ and $f_2(\bm{0})=0$), which is dominated by only a solution $\bm{x}$ with size 1 (having $f_1(\bm{x})=+\infty$ and $f_2(\bm{x})=1$). By selecting $\bm{0}$ for mutation in line~3 of Algorithm~\ref{algo:GSEMO} and flipping only one 0-bit in line~4, which occur with probability $(1/|P|) \cdot (n/n)(1-1/n)^{n-1} \geq 1/(e(k+1))$, a solution with size $1$ is generated. After that, the population $P$ will always contain a solution (denoted as $\bm{x}$) with size 1, because no other solution can dominate it. By setting $\bm{y}$ in Lemma~\ref{lemma:cardinality-min} to $\bm{x}^*_2$, we know that flipping one specific 0-bit of $\bm{x}$ (i.e., adding a specific item), which occurs with probability $(1/|P|) \cdot (1/n)(1-1/n)^{n-1} \geq 1/(en(k+1))$, can generate a new solution $\bm{x}'$, which satisfies that $|\bm{x}'|=2$ and $div(\bm{x}') \geq (1/2)\cdot div(\bm{x}^*_2)$. If $\bm{x}'$ is added into the population $P$, $J_{\max} \geq 2$. Otherwise, $\bm{x}'$ is dominated by one solution in $P$, implying that $J_{\max}$ has already been at least 2. Thus, the expected number of iterations of the GSEMO until $J_{\max} \geq 2$ is at most $e(k+1)+en(k+1)=e(n+1)(k+1)$.

After that, assume currently $J_{\max}=i <k$. Let $\bm{x}$ be the corresponding solution with the value $i$, i.e., $|\bm{x}|= i$ and $div(\bm{x}) \geq (1/2)\cdot div(\bm{x}^*_2)$. It is easy to see that $J_{\max}$ cannot decrease because deleting $\bm{x}$ from $P$ (line~6 of Algorithm~\ref{algo:GSEMO}) implies that $\bm{x}$ is weakly dominated by a newly generated solution $\bm{x}'$, which must satisfy that $|\bm{x}'| \geq |\bm{x}|=i$ and $div(\bm{x}')\geq div(\bm{x})\geq (1/2)\cdot div(\bm{x}^*_2)$. Again by setting $\bm{y}$ in Lemma~\ref{lemma:cardinality-min} to $\bm{x}^*_2$, we know that flipping one specific 0-bit (the corresponding item is denoted as $u$) of $\bm{x}$ can generate a new solution $\bm{x}'=\bm{x} \cup \{u\}$, satisfying $\min_{v \in \bm{x}} d(u,v)  \geq div(\bm{x}^*_2)/2$. According to the definition of min-diversity in Definition~\ref{def-min-div}, we have \begin{align}div(\bm{x}')=\min\{div(\bm{x}),\min\nolimits_{v \in \bm{x}} d(u,v)\} \geq (1/2)\cdot div(\bm{x}^*_2),\end{align}
where the inequality holds by $div(\bm{x}) \geq (1/2)\cdot div(\bm{x}^*_2)$ and $\min_{v \in \bm{x}} d(u,v)  \geq div(\bm{x}^*_2)/2$. Since $|\bm{x}'|=|\bm{x}|+1 = i+1$, $\bm{x}'$ will be included into $P$; otherwise, $\bm{x}'$ must be dominated by one solution in $P$ (line~5 of Algorithm~\ref{algo:GSEMO}), and this implies that $J_{\max}$ has already been larger than $i$, contradicting the assumption $J_{\max}=i$. After including $\bm{x}'$, $J_{\max} = i+1$. Thus, $J_{\max}$ can increase by 1 in one iteration with probability at least $(1/|P|) \cdot (1/n)(1-1/n)^{n-1} \geq 1/(en(k+1))$, where $1/|P|$ is the probability of selecting $\bm{x}$ in line~3 of Algorithm~\ref{algo:GSEMO}, $(1/n)(1-1/n)^{n-1}$ is the probability of flipping a specific bit of $\bm{x}$ while keeping other bits unchanged in line~4, and the inequality holds because the population size $|P| \leq k+1$. This implies that it needs at most $en(k+1)$ expected number of iterations to increase $J_{\max}$. Thus, after at most $(k-2) \cdot en(k+1)$ iterations in expectation, $J_{\max}$ must have reached $k$. Combining with $e(n+1)(k+1)$, the upper bound on the expected number of iterations required to make $J_{\max} \geq 2$, we get $\mathbb{E}[T_2] \leq e(n+1)(k+1)+ en(k-2)(k+1)\leq enk^2$.

Finally, we will show that the output solution, i.e., the better one between $\bm{x}_1$ and $\bm{x}_2$ w.r.t. the original objective $f(\bm{x})+\lambda \cdot div(\bm{x})$, has a $(1/4)$-approximation ratio. Let $\bm{x}^*$ denote an optimal solution of the original problem $\max_{\bm{x}: |\bm{x}|=k} f(\bm{x})+\lambda \cdot div(\bm{x})$. That is, $f(\bm{x}^*)+\lambda \cdot div(\bm{x}^*)=\mathrm{OPT}$. Because the output solution $\hat{\bm{x}}=\arg\max_{\bm{x} \in \{\bm{x}_1, \bm{x}_2\}} f(\bm{x})+\lambda \cdot div(\bm{x})$, we have
\begin{align}
f(\hat{\bm{x}})+\lambda \cdot div(\hat{\bm{x}}) &\geq \frac{f(\bm{x}_1)+\lambda \cdot div(\bm{x}_1)}{2}+\frac{f(\bm{x}_2)+\lambda \cdot div(\bm{x}_2)}{2}\\
& \geq \frac{f(\bm{x}_1)}{2}+\frac{\lambda \cdot div(\bm{x}_2)}{2}\\
&\geq \left(1-\frac{1}{e}\right)\frac{f(\bm{x}_1^*)}{2}+\frac{\lambda \cdot div(\bm{x}^*_2)}{4}\\
&\geq \left(1-\frac{1}{e}\right)\frac{f(\bm{x}^*)}{2}+\frac{\lambda \cdot div(\bm{x}^*)}{4}\\
&\geq \frac{f(\bm{x}^*)+ \lambda \cdot div(\bm{x}^*)}{4}=\frac{\mathrm{OPT}}{4},
\end{align}
where the second inequality holds by the non-negativity of $f$ and $div$, the third inequality holds by Eqs.~(\refeq{eq-mid-21}) and~(\refeq{eq-mid-22}), and the fourth inequality holds because $f(\bm{x}_1^*)=\max_{\bm{x}: |\bm{x}|=k} f(\bm{x})$, $div(\bm{x}_2^*)=\max_{\bm{x}: |\bm{x}|=k} div(\bm{x})$ and $|\bm{x}^*|=k$.
\end{proof}

The above analysis has shown that the GSEMO can achieve the best-known polynomial-time approximation ratio of $1/4$ by optimizing the two bi-objective problems in Eqs.~(\refeq{eq-bi-problem-2-1}) and~(\refeq{eq-bi-problem-2-2}). Next, we will show that if optimizing the bi-objective problem in Eq.~(\refeq{eq-bi-problem-1-v}), i.e., maximizing the original objective function $f+\lambda \cdot div$ and minimizing the subset size simultaneously, the GSEMO fails to guarantee an approximation ratio of $1/4$ in polynomial time. The idea is to construct a specific example, where the GSEMO has some probability to get trapped in a local optimum, which requires to be flipped by many bits simultaneously in mutation for making improvement, and thus leads to exponential running time.

\begin{theorem}\label{theo-cardinality-min-v}
For the result diversification problem with a cardinality constraint in Definition~\ref{def-prob-cardinality}, where the diversity measure $div$ is the min-diversity in Definition~\ref{def-min-div}, there exists one example where the expected number of iterations of the GSEMO using Eq.~(\refeq{eq-bi-problem-1-v}) for achieving an approximation ratio of at least $1/4$ is at least exponential.
\end{theorem}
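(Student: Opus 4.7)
The plan is to exhibit a concrete instance of result diversification with min-diversity on which the GSEMO applied to the straightforward reformulation Eq.~(\ref{eq-bi-problem-1-v}) requires exponential expected running time to produce any size-$k$ solution with approximation ratio at least $1/4$. The guiding idea is to construct a landscape in which the globally optimal size-$k$ solution $X^*$ is isolated in Hamming distance from every solution that the GSEMO's greedy-per-size trajectory can plausibly maintain in its population, so that escaping to $X^*$ requires the simultaneous flip of $\Omega(n)$ specific bits.

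Concretely, I would take $V = C \cup X^*$ with $|C| = |X^*| = k$ and $n = 2k$, equip $V$ with a metric $d$ that places $X^*$-items pairwise at a large distance $D$, $C$-items pairwise at a small distance $\epsilon$, and cross-cluster items at the minimum distance $D/2$ permitted by the triangle inequality, and choose a monotone submodular quality function $f$ (for example a modular one with carefully chosen weights concentrated on $C$) so that (i) $X^*$ is the unique size-$k$ solution with $f_1 := f + \lambda \cdot div \ge \mathrm{OPT}/4$, and (ii) at every intermediate size $j < k$ the $f_1$-maximal size-$j$ set $P_j$ is contained in $C$. The tuning of $D$, $\epsilon$, $\lambda$, and the $C$-weights must simultaneously enforce these two properties, which is the delicate part of the construction.

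With the instance in hand, the next step is to argue by induction on the GSEMO iterations that, with constant probability, the population remains of the form $\{P_0, P_1, \ldots, P_k\}$ with each $P_j \subseteq C$: the inductive step uses property~(ii) to show that any $f_1$-improving offspring produced from a current $P_j$ lies again inside $C$, so the GSEMO's update rule never inserts a member of $X^*$ into $P$. Conditioned on this ``trap'', the only feasible size-$k$ solution of approximation ratio at least $1/4$ is $X^*$ itself by property~(i), and since every $P_j$ is disjoint from $X^*$, generating $X^*$ from any parent $P_j$ by one bit-wise mutation requires flipping at least $|P_j| + k \ge k$ specific bits; this event has probability at most $(1/n)^k(1-1/n)^{n-k}$, and summing over the at most $k+1$ parents in $P$ yields per-iteration success probability at most $(k+1)\,e^{-1}\,n^{-k}$. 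Therefore the expected number of iterations to achieve a $1/4$-approximation is $\Omega(n^k / k)$, which is exponential in $n$ when $k = n/2$.

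The main obstacle is property~(ii). The triangle inequality tightly constrains pairwise distances (any cross-cluster distance must be at least $D/2$), so the gap between $div(X^*) = D$ and the $div$ of a size-$j$ set intersecting both clusters is at most a factor of $2$, not arbitrary. Consequently property~(ii) cannot be obtained from the $div$ term alone: the $f$-weights on $C$ must dominate at small and intermediate sizes, while $\lambda \cdot div(X^*)$ must nonetheless dominate at size exactly $k$. The proof will likely resolve this tension by using a submodular $f$ whose marginal gains on $C$-items saturate so that a subset of $C$ has higher $f_1$ than any size-$j$ set containing $X^*$-items for $j < k$, while at size $k$ the saturated $f_1(P_k)$ is strictly less than $\lambda D / 4$.
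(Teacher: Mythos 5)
Your high-level strategy --- build an instance with a local trap from which reaching any $1/4$-approximate solution requires an $\Omega(n)$-bit simultaneous flip --- is the same as the paper's, but your concrete construction is both incomplete and, as sketched, unworkable, and the difficulty is precisely the one you flag but do not resolve. The two requirements you impose are mutually inconsistent under the metric constraint. For property~(ii) at size $j=2$ you need some $\{c,c'\}\subseteq C$ with $f(\{c,c'\})+\lambda\epsilon \ge f(\{x,x'\})+\lambda D \ge \lambda D$, hence $f(C)\ge f(\{c,c'\}) \ge \lambda(D-\epsilon)$ by monotonicity, so $f_1(C)\ge \lambda D \ge f_1(X^*)$ whenever $f$ is concentrated on $C$; then $C$ itself is (at least) a $1/4$-approximation and property~(i) fails. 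Your proposed fix via a saturating submodular $f$ makes things worse, not better: once one $C$-item is present, adding an $X^*$-item cannot decrease the monotone $f$ but raises the min-diversity from $\epsilon$ to $D/2$, so mixed sets strictly beat all-$C$ sets of the same size and the population is never confined to $C$. Property~(i) is also false as stated: any size-$k$ set containing at most one $C$-item has min-diversity at least $D/2$ and is therefore already a $(1/2)$-approximation when $\mathrm{OPT}=\lambda D$; such sets are reachable from small mixed sets by single-item insertions, so the GSEMO would in fact find a $1/4$-approximation in polynomial time on your instance.

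The paper's construction sidesteps this by not attempting to hide the good items at every size. It takes $f$ modular with unit weights on half the ground set, $k=n/2$, and a single outlier $u^*$ at distance $n/9$ from everything else (all other distances $1$). Every set of size at least $3$ then has min-diversity exactly $1$, so its objective is at most $|\bm{x}\cap V_1|+1$, while the size-$2$ set $\{u^*,w\}$ has objective $2+n/9$, which is below $\mathrm{OPT}/4=(n/2+1)/4$ yet dominates (under $(f_1,-|\bm{x}|)$) every larger solution of objective at most $2+n/9$. All intermediate stepping stones are thus rejected from the population, and improving on $2+n/9$ requires accumulating more than $n/9$ unit-weight items in a single mutation, giving the exponential bound. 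If you want to salvage your two-cluster picture, you would need the diversity of every set of size at least $3$ to collapse to a common small value --- which is exactly the structural feature of min-diversity that the paper's single-outlier construction exploits and that your pairwise-clustered metric does not provide.
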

\begin{proof}
We construct an example as follows. The quality function $f$ is set as a modular function, i.e., $\forall \bm{x} \in \{0,1\}^n: f(\bm{x})=\sum_{v \in \bm{x}} f(\{v\})$. The ground set $V=V_1 \cup V_2$, where $V_1 \cap V_2 =\emptyset$ and $|V_1|=|V_2|=n/2$. For each $v \in V_1$, we set $f(\{v\})=1$, and for each $v \in V_2$, $f(\{v\})=0$. The tradeoff parameter $\lambda$ between the quality and diversity is set to $1$. For a specific item $u^* \in V_1$, its distance from any other item is $n/9$. That is, $\forall v \in V\setminus\{u^*\}: d(u^*,v)=n/9$. Otherwise, the distance between two items is always 1. It is easy to verify that such a distance is a metric. The budget $k$ is set to $n/2$.

By the definition $f(\bm{x})+\lambda \cdot div(\bm{x})$ of the objective function where $div(\bm{x})=\min_{\{u,v\}: u,v \in \bm{x}} d(u,v)$, we have for any $\bm{x}$ with $|\bm{x}| \geq 2$, if $u^* \in \bm{x}$ and $|\bm{x}|=2$, $f(\bm{x})+\lambda \cdot div(\bm{x})=|\bm{x}\cap V_1|+n/9$; otherwise, $f(\bm{x})+\lambda \cdot div(\bm{x})=|\bm{x}\cap V_1|+1$. It can be seen that the optimal function value is $n/2+1$.

The GSEMO starts from the initial solution $\bm{0}$. We consider such an event in the first iteration: only two 0 bits are flipped, where one 0 bit corresponds to the specific item $u^* \in V_1$ and the other corresponds to any other item in $V_1$. This event occurs with probability $(1/n) \cdot ((n/2-1)/n) \cdot (1-1/n)^{n-2} \geq (n/2-1)/(en^2)$, where $1/n$ is the probability of flipping the specific 0 bit corresponding to $u^*$, $(n/2-1)/n$ is the probability of flipping any one of the $(n/2-1)$ 0 bits corresponding to $V_1\setminus\{u^*\}$, and $(1-1/n)^{n-2}$ is the probability of keeping the remaining $n-2$ bits unchanged. The size of the generated solution is $2$ and its objective value is $2+n/9$. By the objective calculation before, we know that this solution is better than any other solution with size smaller than $2+n/9$. Thus, to generate a solution with the objective value larger than $2+n/9$, it is necessary to flip at least $n/9$ bits simultaneously when mutating a solution in line~5 of the GSEMO. For convenience of analysis, assume that $n/9$ is an integer. As the probability of flipping at least $n/9$ bits simultaneously in mutation is at most $\binom{n}{n/9}/ n^{n/9} \leq 1/(n/9)!$, the expected number of iterations of the GSEMO until finding a solution with the objective value larger than $2+n/9$ is at least $((n/2-1)/(en^2)) \cdot (n/9)!$. Because the optimal function value is $n/2+1$, the objective value $2+n/9$ implies an approximation ratio of $(2+n/9)/(n/2+1)<1/4$, where $n$ is sufficiently large. Thus, the theorem holds.
\end{proof}

\subsection{MST-Diversity}

As shown in Definition~\ref{def-mst-div}, the mst-diversity $div$ of a solution $\bm{x}$ is the weight, denoted as $mst(\bm{x})$, of the minimum spanning tree of $\bm{x}$ on the graph $G=(V,E)$, where each vertex is an item, and the weight of each edge is the distance between two items. That is, $div(\bm{x})=mst(\bm{x})$. For the result diversification problem where $div$ is the mst-diversity, the cardinality constraint in Eq.~(\refeq{eq-problem-cardinality}) is strictly $|X|=k$ as in~\cite{dasgupta2013summarization}, because the mst-diversity is not necessarily monotone non-decreasing. To compute $div(\bm{x})$, i.e., $mst(\bm{x})$, we use
\begin{align}\label{eq-mid-23}
\sum^{|\bm{x}|}_{i=2} \min_{v \in \{u_1,\ldots,u_{i-1}\}} d(u_i,v)
\end{align} for approximation, where $(u_1,\ldots,u_{|\bm{x}|})$ is a permutation of the items in $\bm{x}$. It can be seen that different permutations will lead to different approximate values. Lemma~\ref{lemma:cardinality-mst-2} shows that this approximation is larger than the true value by at most a factor of $\log |\bm{x}|$.
\begin{lemma}[Lemma~2 in~\cite{dasgupta2013summarization}]\label{lemma:cardinality-mst-2}
Let $(u_1,\ldots,u_m)$ be a sequence of items. It holds
\begin{align}
\sum^{m}_{i=2} \min_{v \in \{u_1,\ldots,u_{i-1}\}} d(u_i,v) \leq (\log m) \cdot mst(\{u_1,\ldots,u_m\}),
\end{align}
where $mst(\{u_1,\ldots,u_m\})$ denotes the weight of the minimum spanning tree of $\{u_1,\ldots,u_m\}$.
\end{lemma}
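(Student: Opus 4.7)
The plan is a charging argument against the MST, driven by sorting the greedy (nearest-predecessor) increments. For $i \geq 2$ write $c_i = \min_{v \in \{u_1,\ldots,u_{i-1}\}} d(u_i,v)$ and let $c_{(1)} \geq c_{(2)} \geq \cdots \geq c_{(m-1)}$ be these values sorted in non-increasing order, so that $\sum_{i=2}^{m} \min_{v \in \{u_1,\ldots,u_{i-1}\}} d(u_i,v) = \sum_{j=1}^{m-1} c_{(j)}$. The main intermediate claim I would establish is the per-rank bound $c_{(j)} \leq 2\, mst(\{u_1,\ldots,u_m\})/j$. Summing this over $j=1,\ldots,m-1$ yields $2 H_{m-1} \cdot mst$, which is $O(\log m)\cdot mst$ and matches the stated bound up to absorbing a constant into $\log$ (or after a dyadic tightening sketched below).

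To prove the per-rank bound, pick the $j$ indices $i_1,\ldots,i_j$ realising the top $j$ values $c_{(1)},\ldots,c_{(j)}$, and observe a pairwise separation property: for any two indices $i_p < i_q$ in this list, the item $u_{i_p}$ was already present when $u_{i_q}$ arrived, so by definition of $c_{i_q}$ we have $d(u_{i_p}, u_{i_q}) \geq c_{i_q} \geq c_{(j)}$. Hence every pair among $\{u_{i_1},\ldots,u_{i_j}\}$ lies at distance at least $c_{(j)}$. To charge these distances against $mst$, I would double every edge of the minimum spanning tree of $\{u_1,\ldots,u_m\}$ to form an Eulerian multigraph of weight $2\,mst$, traverse an Eulerian circuit, and short-cut it (using the triangle inequality for the metric $d$) to a Hamilton cycle through the $j$ points $\{u_{i_1},\ldots,u_{i_j}\}$; this cycle consists of $j$ edges each of length at least $c_{(j)}$, so $j\cdot c_{(j)} \leq 2\,mst$, which is exactly the claimed per-rank bound.

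The main obstacle I anticipate is matching the constant in front of $\log m$ cleanly, since the harmonic-sum route gives $2H_{m-1}$, which is $O(\log m)$ but not tight. To squeeze it to the stated $\log m$ one can instead bucket the $c_i$'s dyadically (level $\ell$ consisting of those $i$ with $c_i \in (c_{\max}/2^{\ell+1}, c_{\max}/2^{\ell}]$), apply the same pairwise-separation/TSP-doubling argument inside each bucket to bound its cardinality, and sum a geometric series over the $O(\log m)$ non-empty levels. This is also the step where the metric property of $d$ is essential, since short-cutting an Eulerian walk to a Hamilton cycle relies on the triangle inequality; the rest of the argument is purely combinatorial.
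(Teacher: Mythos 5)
The paper does not actually prove this lemma: it is imported verbatim as Lemma~2 of Dasgupta \emph{et al.}~\cite{dasgupta2013summarization}, so there is no in-paper proof to compare against. Judged on its own, your charging argument is the standard (Rosenkrantz--Stearns--Lewis style) analysis and its core is sound. The pairwise-separation step is correct: if $i_p<i_q$ are both among the indices realising the top $j$ values, then $u_{i_p}$ is a predecessor of $u_{i_q}$, so $d(u_{i_p},u_{i_q})\ge c_{i_q}\ge c_{(j)}$. The doubled-MST/Euler-tour/shortcutting step correctly yields a Hamiltonian cycle through these $j$ points of length at most $2\,mst$, hence $j\,c_{(j)}\le 2\,mst$ for $j\ge 2$ (and $c_{(1)}\le mst$ via the MST path between the two relevant points, which you should state explicitly since the cycle argument degenerates at $j=1$). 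Summing gives $\sum_j c_{(j)}\le 2H_{m-1}\cdot mst$, a valid $O(\log m)\cdot mst$ bound.

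The genuine gap is in the final tightening step. The lemma as stated has leading constant $1$ in front of $\log m$, while $2H_{m-1}\approx 2\ln m\approx 1.39\log_2 m$, and your proposed dyadic bucketing does not close this gap --- it widens it. Within level $\ell$ the points are pairwise separated by more than $c_{\max}2^{-(\ell+1)}$, so the cycle argument gives $n_\ell< 2^{\ell+2}\,mst/c_{\max}$, and the level's contribution is at most $n_\ell\cdot c_{\max}2^{-\ell}<4\,mst$; with $\Theta(\log m)$ relevant levels this yields roughly $4\log_2 m\cdot mst$, strictly worse than the harmonic sum. (The sharper RSL grouping $\sum_{i=k+1}^{2k}c_{(i)}\le 2\,mst$ over dyadic $k$ still only gives constant $2$.) Moreover, the per-rank bound $j\,c_{(j)}\le 2\,mst$ is tight (a star with $j$ leaves at radius $\delta/2$), so no purely per-rank argument can reach constant $1$; one would need to argue that the per-rank bounds cannot all be simultaneously tight. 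Since Lemma~\ref{lemma:cardinality-mst-2} feeds directly into the constant of the $(1-1/e)/(2\log k)$ ratio in Theorem~\ref{theo-cardinality-mst}, this constant-factor shortfall is not cosmetic: your argument proves the lemma with $\log m$ replaced by $C\log m$ for some $C$ between $1.39$ and $2$, not the statement as written.
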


Let $\mathcal{P}(\bm{x})$ denote the set of all permutations of the items in $\bm{x}$. To employ the GSEMO, we reformulate the original problem
\begin{align}\label{eq-problem-cardinality-strict-1}\arg\max\nolimits_{X \subseteq V} f(X)+\lambda \cdot div(X) \quad s.t. \quad |X| = k
\end{align} as a bi-objective maximization problem,
\begin{align}\label{eq-bi-problem-3}
&\arg\max\nolimits_{\bm{x} \in \{0,1\}^n, (u_1,\ldots,u_{|\bm{x}|}) \in \mathcal{P}(\bm{x})} \quad  (f_1(\bm{x},(u_1,\ldots,u_{|\bm{x}|})),f_2(\bm{x},(u_1,\ldots,u_{|\bm{x}|}))),\\
&\text{where}\quad\begin{cases}
f_1(\bm{x},(u_1,\ldots,u_{|\bm{x}|})) = f(\bm{x})+\lambda \cdot \sum^{|\bm{x}|}_{i=2} \min_{v \in \{u_1,\ldots,u_{i-1}\}} d(u_i,v),\\
f_2(\bm{x},(u_1,\ldots,u_{|\bm{x}|})) = -|\bm{x}|.
\end{cases}
\end{align}
Note that each solution $\bm{x} \in \{0,1\}^n$ now has a corresponding permutation $(u_1,\ldots,u_{|\bm{x}|})$, and the same solution with different permutations will have different $f_1$ values.

The GSEMO in Algorithm~\ref{algo:GSEMO} starts from the empty subset $\bm{0}$. When generating a new offspring solution $\bm{x}'$ by mutating a parent solution $\bm{x}$ in line~4 of Algorithm~\ref{algo:GSEMO}, the corresponding permutation of $\bm{x}'$ is decided as follows: the newly added items in $\bm{x}' \setminus \bm{x}$ are put after the old items in $\bm{x}' \cap \bm{x}$; the ordering of the old items in $\bm{x}' \cap \bm{x}$ is kept as same as that in the permutation of $\bm{x}$; the new items in $\bm{x}' \setminus \bm{x}$ are arbitrarily ordered. Note that all the solutions with size larger than $k$ are excluded during the running of the GSEMO. After the GSEMO terminates, the solution $\bm{x}$ with the largest size in the population is output. If $|\bm{x}|<k$, $k-|\bm{x}|$ items will be arbitrarily selected from the remaining items and added into $\bm{x}$, in order to satisfy the constraint $|\bm{x}|=k$. Theorem~\ref{theo-cardinality-mst} shows that the GSEMO using at most $enk(k+1)$ expected number of iterations can achieve an approximation ratio of $(1-1/e)(1/(2\log k))\approx 0.316/\log k$, slightly worse than the best-known polynomial-time approximation ratio $1/(3\log k)$~\cite{dasgupta2013summarization}.

\begin{theorem}\label{theo-cardinality-mst}
For the result diversification problem with a cardinality constraint in Definition~\ref{def-prob-cardinality}, where the diversity measure $div$ is the mst-diversity in Definition~\ref{def-mst-div}, the expected number of iterations of the GSEMO using Eq.~(\refeq{eq-bi-problem-3}), until finding a solution $\bm{x}$ with $|\bm{x}| = k$ and $f(\bm{x})+\lambda \cdot div(\bm{x}) \geq (1-1/e)(1/(2\log k)) \cdot \mathrm{OPT}$, is at most $enk(k+1)$.
\end{theorem}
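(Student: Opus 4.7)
The plan is to extend the $J_{\max}$-style greedy induction from the proof of Theorem~\ref{theo-cardinality-sum} to the ordered surrogate $f_1(\bm{x},\sigma) = f(\bm{x}) + \lambda \cdot h(\bm{x},\sigma)$, where $h(\bm{x},\sigma)=\sum_{i=2}^{|\bm{x}|}\min_{v\in\{u_1,\ldots,u_{i-1}\}}d(u_i,v)$, and let $\bm{x}^*$ denote an optimal size-$k$ solution so that $\mathrm{OPT}=f(\bm{x}^*)+\lambda\cdot mst(\bm{x}^*)$. The structural observation that enables a greedy-style analysis is ``append-submodularity'': when a new item $u$ is appended to the end of $\sigma$, the induced marginal $(f(\bm{x}\cup\{u\})-f(\bm{x}))+\lambda\cdot\min_{v\in\bm{x}}d(u,v)$ is non-increasing in $(\bm{x},\sigma)$, since $f$ is submodular and the min is taken over a larger set. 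This makes the standard monotone-submodular greedy machinery available for $f_1$ along append-extensions, which is exactly the operation the bit-wise mutation of the GSEMO can realize.

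The technical core will be a greedy-marginal lemma: for every $(\bm{x},\sigma)$ with $1\leq|\bm{x}|<k$ there is an item $u\notin\bm{x}$ such that $(f(\bm{x}\cup\{u\})-f(\bm{x}))+\lambda\min_{v\in\bm{x}}d(u,v)\geq(\mathrm{OPT}-f_1(\bm{x},\sigma))/k$. I would prove this by averaging over $u\in\bm{x}^*\setminus\bm{x}$: the $f$-side is the standard submodular sum $\sum_u(f(\bm{x}\cup\{u\})-f(\bm{x}))\geq f(\bm{x}^*)-f(\bm{x})$; the distance side uses a Prim-style spanning tree---starting from the $mst$ of $\bm{x}$ and connecting every $u\in\bm{x}^*\setminus\bm{x}$ via its cheapest edge into $\bm{x}$ gives a spanning tree of $\bm{x}\cup\bm{x}^*$, whence $\sum_u\min_{v\in\bm{x}}d(u,v)\geq mst(\bm{x}\cup\bm{x}^*)-mst(\bm{x})\geq mst(\bm{x}^*)-h(\bm{x},\sigma)$, using $mst(\bm{x})\leq h(\bm{x},\sigma)$ because the tree induced by $\sigma$ is itself a spanning tree of $\bm{x}$. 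The boundary $|\bm{x}|=0$ is handled separately by noting that submodularity of $f$ alone already yields $\max_u f(\{u\})\geq f(\bm{x}^*)/k$, which seeds the induction before any diversity term can contribute.

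Armed with the lemma, I would mimic the $J_{\max}$ induction of Theorem~\ref{theo-cardinality-sum}: define $J_{\max}$ as the largest $j\leq k$ for which $P$ contains a solution $(\bm{x},\sigma)$ with $|\bm{x}|\leq j$ and $f_1(\bm{x},\sigma)\geq(1-(1-1/k)^j)\,\mathrm{OPT}$, observe that the GSEMO's domination rule makes $J_{\max}$ monotone non-decreasing, and show that a ``successful event''---picking the witness from $P$ and flipping only the 0-bit identified by the lemma, of probability at least $1/(en(k+1))$ since $f_2=-|\bm{x}|$ forces $|P|\leq k+1$---raises $J_{\max}$ by one via the standard recursion $(1-(1-1/k)^{j+1})\,\mathrm{OPT}$. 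Hence $J_{\max}=k$ is achieved in expectation within $enk(k+1)$ iterations, producing a solution with $|\bm{x}|=k$ and $f_1(\bm{x},\sigma)\geq(1-1/e)\,\mathrm{OPT}$.

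The final step, and the origin of the factor $1/(2\log k)$, transfers this back to the original objective $f+\lambda\cdot mst$ by a two-case split. Either $f(\bm{x})\geq(1-1/e)\,\mathrm{OPT}/2$ and then $f(\bm{x})+\lambda\cdot mst(\bm{x})\geq(1-1/e)\,\mathrm{OPT}/2\geq(1-1/e)/(2\log k)\cdot\mathrm{OPT}$; or $\lambda\cdot h(\bm{x},\sigma)\geq(1-1/e)\,\mathrm{OPT}/2$, in which case Lemma~\ref{lemma:cardinality-mst-2} yields $\lambda\cdot mst(\bm{x})\geq\lambda\cdot h(\bm{x},\sigma)/\log k\geq(1-1/e)/(2\log k)\cdot\mathrm{OPT}$. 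The main obstacle I foresee is the Prim-based distance-sum estimate in the greedy-marginal lemma, together with smoothly handling the $|\bm{x}|=0$ seed step so that the inductive $(1-1/e)$ factor survives intact into the final approximation guarantee.
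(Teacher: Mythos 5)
There is a genuine gap, and it sits exactly where you flagged your main worry: the Prim-style distance-sum estimate. Your chain $\sum_{u\in\bm{x}^*\setminus\bm{x}}\min_{v\in\bm{x}}d(u,v)\geq mst(\bm{x}\cup\bm{x}^*)-mst(\bm{x})$ is fine (attaching each $u$ by its cheapest edge into $\bm{x}$ does extend a spanning tree of $\bm{x}$ to one of $\bm{x}\cup\bm{x}^*$), but the next step $mst(\bm{x}\cup\bm{x}^*)\geq mst(\bm{x}^*)$ is false: the MST weight is \emph{not} monotone under supersets in a metric. Adding points can strictly decrease the MST weight (e.g., adding the center of an equilateral triangle turns an MST of weight $4$ into one of weight $2\sqrt{3}$); the correct relation is the Steiner-ratio bound $mst(\bm{x}^*)\leq 2\cdot mst(\bm{x}\cup\bm{x}^*)$, obtained by doubling and shortcutting an Euler tour. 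This lost factor of $2$ is precisely the content of Lemma~\ref{lemma:cardinality-mst-1} (Lemma~1(ii) of Dasgupta \textit{et al.}), which gives only $\sum_{u\in\bm{x}^*\setminus\bm{x}}\min_{v\in\bm{x}}d(u,v)\geq mst(\bm{x}^*)/2-mst(\bm{x})$, and it is the source of the $2$ in the final ratio $1/(2\log k)$. Consequently your greedy-marginal lemma with target $\mathrm{OPT}$ is too strong; only the version with target $\mathrm{OPT}/2$ holds (the paper's Lemma~\ref{lemma:cardinality-mst-3}), and the $J_{\max}$ induction can only deliver $f_1(\bm{x},\sigma)\geq(1-1/e)\cdot\mathrm{OPT}/2$, not $(1-1/e)\cdot\mathrm{OPT}$.

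The error then propagates into your transfer step. With the corrected intermediate bound $f_1\geq(1-1/e)\cdot\mathrm{OPT}/2$, your two-case split (either $f$ or $\lambda h$ captures half of $f_1$) yields only $(1-1/e)/(4\log k)$, weaker than the theorem. The repair is to drop the case split and divide the whole surrogate by $\log k$ at once, as the paper does: since $f(\bm{x})\geq f(\bm{x})/\log k$ for $k\geq 2$ and $mst(\bm{x})\geq h(\bm{x},\sigma)/\log k$ by Lemma~\ref{lemma:cardinality-mst-2}, one gets $f(\bm{x})+\lambda\cdot mst(\bm{x})\geq f_1(\bm{x},\sigma)/\log k\geq(1-1/e)\cdot\mathrm{OPT}/(2\log k)$. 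Everything else in your plan --- the ordered surrogate, the append-only permutation bookkeeping, the $mst(\bm{x})\leq h(\bm{x},\sigma)$ observation, the $|P|\leq k+1$ bound and the $enk(k+1)$ count --- matches the paper's argument and is sound.
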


The proof of Theorem~\ref{theo-cardinality-mst} relies on Lemma~\ref{lemma:cardinality-mst-3}, which shows that for any $\bm{x} \in \{0,1\}^n$ with $|\bm{x}|<k$, there always exists another item whose inclusion can bring an improvement on $f_1$ roughly proportional to the current distance to half of the optimum. The proof of Lemma~\ref{lemma:cardinality-mst-3} is inspired by that of Theorem~3 in~\cite{dasgupta2013summarization}. Lemma~\ref{lemma:cardinality-mst-1} gives a lower bound on the sum of the distances of items in $\bm{x}^*\setminus \bm{x}$ to $\bm{x}$, which will be used in the proof of Lemma~\ref{lemma:cardinality-mst-3}.

\begin{lemma}[Lemma~1(ii) in~\cite{dasgupta2013summarization}]\label{lemma:cardinality-mst-1}
For any $\bm{x} \in \{0,1\}^n$ with $|\bm{x}| < k$, it holds
\begin{align}
\sum_{u \in \bm{x}^* \setminus \bm{x}} \min_{v \in \bm{x}} d(u,v) \geq \frac{mst(\bm{x}^*)}{2}-mst(\bm{x}),
\end{align}
where $\bm{x}^*$ denotes an optimal solution of Eq.~(\refeq{eq-problem-cardinality-strict-1}), i.e., $f(\bm{x}^*) + \lambda \cdot mst(\bm{x}^*)=\mathrm{OPT}$.
\end{lemma}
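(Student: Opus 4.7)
The plan is to construct an explicit spanning tree on $\bm{x} \cup \bm{x}^*$ whose weight equals $mst(\bm{x}) + \sum_{u \in \bm{x}^* \setminus \bm{x}} \min_{v \in \bm{x}} d(u,v)$, and then sandwich $mst(\bm{x} \cup \bm{x}^*)$ between this weight and $mst(\bm{x}^*)/2$. We may assume $|\bm{x}|\geq 1$; otherwise $\min_{v \in \bm{x}} d(u,v)$ is interpreted as $+\infty$ and the inequality is vacuous. First, consider the tree $T'$ built by taking the minimum spanning tree of $\bm{x}$ and then attaching every $u \in \bm{x}^* \setminus \bm{x}$ to its nearest vertex in $\bm{x}$ via a single edge. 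Then $T'$ is a spanning tree of $\bm{x} \cup \bm{x}^*$, so
\[ mst(\bm{x} \cup \bm{x}^*) \;\leq\; w(T') \;=\; mst(\bm{x}) + \sum_{u \in \bm{x}^* \setminus \bm{x}} \min_{v \in \bm{x}} d(u,v), \]
since the minimum spanning tree has the smallest weight among all spanning trees of the same vertex set.

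The crux is to establish the complementary bound $mst(\bm{x}^*) \leq 2 \cdot mst(\bm{x} \cup \bm{x}^*)$, a classical MST-versus-Steiner-tree comparison. Let $T^\star$ be a minimum spanning tree of $\bm{x} \cup \bm{x}^*$; it may be viewed as a Steiner tree for the terminals $\bm{x}^*$, with the vertices in $\bm{x} \setminus \bm{x}^*$ serving as Steiner points. Double every edge of $T^\star$ so that the resulting multigraph is Eulerian, and take an Eulerian circuit of total length $2\cdot mst(\bm{x} \cup \bm{x}^*)$. Traverse this circuit and iteratively shortcut past vertices in $\bm{x}\setminus\bm{x}^*$ as well as repeated visits to vertices in $\bm{x}^*$; by the triangle inequality, guaranteed by the metric assumption on $d$, each shortcut does not increase total length. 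What remains is a closed tour visiting each vertex of $\bm{x}^*$ exactly once, of length at most $2\cdot mst(\bm{x} \cup \bm{x}^*)$. Deleting the heaviest edge of this tour yields a Hamiltonian path on $\bm{x}^*$, which is in particular a spanning tree, so $mst(\bm{x}^*) \leq 2\cdot mst(\bm{x}\cup\bm{x}^*)$.

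Combining the two displays gives
\[ mst(\bm{x}) + \sum_{u \in \bm{x}^* \setminus \bm{x}} \min_{v \in \bm{x}} d(u,v) \;\geq\; mst(\bm{x} \cup \bm{x}^*) \;\geq\; \frac{mst(\bm{x}^*)}{2}, \]
and rearranging yields the stated inequality. The main obstacle is the second step: the shortcut argument has to be applied carefully to a tour that may interleave Steiner points (elements of $\bm{x}\setminus\bm{x}^*$) with repeated terminal visits in an arbitrary order, and the factor of $2$ is recoverable only because $d$ satisfies the triangle inequality. Without the metric property this step would fail and the constant would degrade accordingly.
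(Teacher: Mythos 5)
Your proof is correct. The paper itself gives no proof of this lemma---it is imported by citation from~\cite{dasgupta2013summarization}---so there is nothing internal to compare against; your two-step argument (bounding $mst(\bm{x}\cup\bm{x}^*)$ above by attaching each $u\in\bm{x}^*\setminus\bm{x}$ to its nearest neighbour in $\bm{x}$, and below by $mst(\bm{x}^*)/2$ via the standard edge-doubling, Euler-tour and triangle-inequality shortcutting argument) is exactly the classical route, and correctly identifies the metric assumption as the source of the factor $2$.
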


\begin{lemma}\label{lemma:cardinality-mst-3}
For any $\bm{x} \in \{0,1\}^n$ with $|\bm{x}| < k$, and an arbitrary permutation $(u_1,\ldots,u_{|\bm{x}|})$ of the items in $\bm{x}$, there exists one item $u \notin \bm{x}$ such that
\begin{align}
& f_1(\bm{x} \cup \{u\},(u_1,\ldots,u_{|\bm{x}|},u))- f_1(\bm{x},(u_1,\ldots,u_{|\bm{x}|})) \geq \frac{1}{k}\cdot \left(\frac{\mathrm{OPT}}{2}-f_1(\bm{x},(u_1,\ldots,u_{|\bm{x}|}))\right),
\end{align}
where $f_1$ is defined in Eq.~(\refeq{eq-bi-problem-3}).
\end{lemma}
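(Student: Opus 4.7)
The plan is to mimic the standard ``averaging over the optimum'' argument used in submodular analyses, applied to the marginal gain
\[
\Delta(u) \;=\; f(\bm{x}\cup\{u\})-f(\bm{x}) + \lambda \cdot \min_{v \in \bm{x}} d(u,v),
\]
which is exactly $f_1(\bm{x}\cup\{u\},(u_1,\ldots,u_{|\bm{x}|},u)) - f_1(\bm{x},(u_1,\ldots,u_{|\bm{x}|}))$ by the way the offspring permutation is extended (the new item is appended at the end). The goal is to show that $\sum_{u \in \bm{x}^* \setminus \bm{x}} \Delta(u) \geq \mathrm{OPT}/2 - f_1(\bm{x},(u_1,\ldots,u_{|\bm{x}|}))$, and then conclude by an averaging argument using $|\bm{x}^*\setminus \bm{x}| \leq k$.

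First, I would handle the quality part. By submodularity of $f$ (Eq.~\eqref{def-submodular-2}),
\[
\sum_{u \in \bm{x}^*\setminus \bm{x}} \bigl(f(\bm{x}\cup\{u\})-f(\bm{x})\bigr) \;\geq\; f(\bm{x}\cup\bm{x}^*) - f(\bm{x}) \;\geq\; f(\bm{x}^*) - f(\bm{x}),
\]
where the last step uses monotonicity. Next I would handle the distance part. By Lemma~\ref{lemma:cardinality-mst-1},
\[
\sum_{u \in \bm{x}^*\setminus \bm{x}} \min_{v \in \bm{x}} d(u,v) \;\geq\; \frac{mst(\bm{x}^*)}{2} - mst(\bm{x}).
\]
The crucial bridge is the observation that the nearest-neighbor sum $S \coloneqq \sum_{i=2}^{|\bm{x}|} \min_{v \in \{u_1,\ldots,u_{i-1}\}} d(u_i,v)$ is the weight of some spanning tree on $\bm{x}$, hence $S \geq mst(\bm{x})$, and thus $-mst(\bm{x}) \geq -S$. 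Substituting,
\[
\sum_{u \in \bm{x}^*\setminus \bm{x}} \min_{v \in \bm{x}} d(u,v) \;\geq\; \frac{mst(\bm{x}^*)}{2} - S.
\]

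Combining the two bounds and recalling that $f_1(\bm{x},(u_1,\ldots,u_{|\bm{x}|})) = f(\bm{x}) + \lambda S$ and $\mathrm{OPT} = f(\bm{x}^*) + \lambda \cdot mst(\bm{x}^*)$, we obtain
\[
\sum_{u \in \bm{x}^*\setminus \bm{x}} \Delta(u) \;\geq\; \bigl(f(\bm{x}^*) - f(\bm{x})\bigr) + \lambda\!\left(\tfrac{mst(\bm{x}^*)}{2} - S\right) \;\geq\; \tfrac{\mathrm{OPT}}{2} - f_1(\bm{x},(u_1,\ldots,u_{|\bm{x}|})),
\]
where the last inequality follows from $f(\bm{x}^*) \geq f(\bm{x}^*)/2$. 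Finally, since $|\bm{x}^*\setminus \bm{x}| \leq |\bm{x}^*| = k$, an averaging argument yields some $u \in \bm{x}^*\setminus \bm{x}$ with $\Delta(u) \geq \tfrac{1}{k}\bigl(\tfrac{\mathrm{OPT}}{2} - f_1(\bm{x},(u_1,\ldots,u_{|\bm{x}|}))\bigr)$, giving the stated inequality.

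The main conceptual obstacle is the asymmetry between the true MST weight $mst(\bm{x})$ (which appears in Lemma~\ref{lemma:cardinality-mst-1}) and the permutation-dependent nearest-neighbor sum $S$ that actually sits inside $f_1$. Because $S$ can be strictly larger than $mst(\bm{x})$ (by up to a factor $\log|\bm{x}|$, per Lemma~\ref{lemma:cardinality-mst-2}), naively hoping for a monotone substitution looks worrying, but the direction of the inequality luckily goes the right way: $S \geq mst(\bm{x})$ translates to $-mst(\bm{x}) \geq -S$, preserving the lower bound we need. This is the only subtle step; the rest is the standard submodular averaging.
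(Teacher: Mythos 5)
Your proposal is correct and follows essentially the same route as the paper's proof: submodularity for the quality term, Lemma~\ref{lemma:cardinality-mst-1} for the distance term, the observation that the permutation-induced nearest-neighbor sum is the weight of a spanning tree of $\bm{x}$ (hence $\geq mst(\bm{x})$, so the substitution goes the right way), dropping half of $f(\bm{x}^*)$ via non-negativity to reach $\mathrm{OPT}/2$, and averaging over $\bm{x}^*\setminus\bm{x}$ with $|\bm{x}^*|=k$. The "subtle step" you flag is exactly the one the paper handles by explicitly constructing the spanning tree from the permutation.
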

\begin{proof}
Let $\bm{x}^*$ denote an optimal solution. By the definition of $f_1$ in Eq.~(\refeq{eq-bi-problem-3}), we have
\begin{align}\label{eq-mid-24}
&\sum_{u \in \bm{x}^*\setminus \bm{x}} f_1(\bm{x} \cup \{u\},(u_1,\ldots,u_{|\bm{x}|},u))- f_1(\bm{x},(u_1,\ldots,u_{|\bm{x}|}))\\
&=\sum_{u \in \bm{x}^*\setminus \bm{x}} f(\bm{x} \cup \{u\})-f(\bm{x})+\lambda \cdot \min_{v \in \bm{x}} d(u,v)\\
&\geq f(\bm{x} \cup \bm{x}^*)-f(\bm{x})+ \lambda \cdot \left(\frac{mst(\bm{x}^*)}{2}-mst(\bm{x})\right)\\
& \geq f(\bm{x}^*)+\frac{\lambda}{2}\cdot mst(\bm{x}^*) - f(\bm{x}) - \lambda \cdot mst(\bm{x})\\
& \geq \frac{\mathrm{OPT}}{2} - f(\bm{x}) - \lambda \cdot mst(\bm{x}),
\end{align}
where the first inequality holds by the submodularity (i.e., Eq.~(\refeq{def-submodular-2})) of $f$ and Lemma~\ref{lemma:cardinality-mst-1}, and the second inequality holds by the monotonicity of $f$. By connecting $u_i$ to the nearest item in $\{u_1,\ldots,u_{i-1}\}$ for each $i \in \{2,\ldots,|\bm{x}|\}$, we can construct a spanning tree of $\bm{x}$, whose weight is
\begin{align}
\sum^{|\bm{x}|}_{i=2} \min_{v \in \{u_1,\ldots,u_{i-1}\}} d(u_i,v).
\end{align}
As $mst(\bm{x})$ is the weight of the minimum spanning tree of $\bm{x}$, it holds
\begin{align}
\sum^{|\bm{x}|}_{i=2} \min_{v \in \{u_1,\ldots,u_{i-1}\}} d(u_i,v) \geq mst(\bm{x}).
\end{align}
By applying the above inequality to Eq.~(\refeq{eq-mid-24}), we have
\begin{align}
&\sum_{u \in \bm{x}^*\setminus \bm{x}} f_1(\bm{x} \cup \{u\},(u_1,\ldots,u_{|\bm{x}|},u))- f_1(\bm{x},(u_1,\ldots,u_{|\bm{x}|}))\\
&\geq \frac{\mathrm{OPT}}{2} - f(\bm{x}) - \lambda \cdot \sum^{|\bm{x}|}_{i=2} \min_{v \in \{u_1,\ldots,u_{i-1}\}} d(u_i,v)\\
&=\frac{\mathrm{OPT}}{2} - f_1(\bm{x},(u_1,\ldots,u_{|\bm{x}|})),
\end{align}
where the equality can be derived from the definition of $f_1$ in Eq.~(\refeq{eq-bi-problem-3}). Let $u^* = \arg\max_{u \in \bm{x}^*\setminus \bm{x}} f_1(\bm{x} \cup \{u\},(u_1,\ldots,u_{|\bm{x}|},u))- f_1(\bm{x},(u_1,\ldots,u_{|\bm{x}|}))$. Because $|\bm{x}^*|=k$, we have
\begin{align}
&f_1(\bm{x} \cup \{u^*\},(u_1,\ldots,u_{|\bm{x}|},u^*))- f_1(\bm{x},(u_1,\ldots,u_{|\bm{x}|}))\geq \frac{1}{k}\left(\frac{\mathrm{OPT}}{2} - f_1(\bm{x},(u_1,\ldots,u_{|\bm{x}|}))\right),
\end{align}
implying that the lemma holds.
\end{proof}

\begin{myproof}{Theorem~\ref{theo-cardinality-mst}}
We consider a quantity $J_{\max}$, which is defined as
$$
J_{\max}=\max\left\{j \in \{0,1,\ldots,k\} \mid \exists \bm{x} \in P: |\bm{x}| \leq j \wedge f_1(\bm{x},(u_1,\ldots,u_{|\bm{x}|})) \geq \left(1-\left(1-\frac{1}{k}\right)^j\right) \cdot \frac{\mathrm{OPT}}{2}\right\}.
$$
Note that each solution $\bm{x}$ in the population $P$ has a corresponding permutation $(u_1,\ldots,u_{|\bm{x}|})$ of its items. $J_{\max}=k$ implies that there exists one solution $\bm{x}$ in $P$ satisfying that $|\bm{x}| \leq k$ and
\begin{align}f_1(\bm{x},(u_1,\ldots,u_{|\bm{x}|})) \geq \left(1-\left(1-\frac{1}{k}\right)^k\right) \cdot \frac{\mathrm{OPT}}{2} \geq \left(1-\frac{1}{e}\right)\frac{1}{2} \cdot \mathrm{OPT}.\end{align}
Because the output solution $\hat{\bm{x}}$ is the solution with the largest size in $P$, and the solutions in $P$ are incomparable, $\hat{\bm{x}}$ must have the largest $f_1$ value. Let $(v_1,\ldots,v_{|\hat{\bm{x}}|})$ denote the corresponding permutation of $\hat{\bm{x}}$. Then, we have
\begin{align}\label{eq-mid-25}f_1(\hat{\bm{x}},(v_1,\ldots,v_{|\hat{\bm{x}}|})) \geq \left(1-\frac{1}{e}\right)\frac{1}{2} \cdot \mathrm{OPT}.\end{align}
Note that the solutions with size larger than $k$ are excluded during the optimization. If $|\hat{\bm{x}}|<k$, $\hat{\bm{x}}$ will be repaired by adding arbitrarily $k-|\hat{\bm{x}}|$ unselected items, and the resulting solution is denoted as $\hat{\bm{x}}'$. Let $(v_1,\ldots,v_{k})$ denote a permutation of the items in $\hat{\bm{x}}'$, generated by appending the $k-|\hat{\bm{x}}|$ supplementary items to the end of the sequence $(v_1,\ldots,v_{|\hat{\bm{x}}|})$. According to the definition of $f_1$ in Eq.~(\refeq{eq-bi-problem-3}), we have
\begin{align}\label{eq-mid-26}
f_1(\hat{\bm{x}}',(v_1,\ldots,v_{k})) &= f(\hat{\bm{x}}')+\lambda \cdot \sum^{k}_{i=2} \min_{v \in \{v_1,\ldots,v_{i-1}\}} d(v_i,v)\\
&\geq f(\hat{\bm{x}})+\lambda \cdot \sum^{|\hat{\bm{x}}|}_{i=2} \min_{v \in \{v_1,\ldots,v_{i-1}\}} d(v_i,v)\\
&=f_1(\hat{\bm{x}},(v_1,\ldots,v_{|\hat{\bm{x}}|}))\\
&\geq \left(1-\frac{1}{e}\right)\frac{1}{2} \cdot \mathrm{OPT},
\end{align}
where the first inequality holds by the monotonicity of $f$ and $|\hat{\bm{x}}| < k$, and the last inequality holds by Eq.~(\refeq{eq-mid-25}). If $|\hat{\bm{x}}|=k$, $\hat{\bm{x}}$ keeps unchanged, i.e., $\hat{\bm{x}}'=\hat{\bm{x}}$. Thus, Eq.~(\refeq{eq-mid-25}) directly implies
\begin{align}\label{eq-mid-27}
f_1(\hat{\bm{x}}',(v_1,\ldots,v_{k})) \geq \left(1-\frac{1}{e}\right)\frac{1}{2} \cdot \mathrm{OPT}.
\end{align}
By Lemma~\ref{lemma:cardinality-mst-2}, we have
\begin{align}
f(\hat{\bm{x}}')+\lambda \cdot mst(\hat{\bm{x}}') &\geq f(\hat{\bm{x}}')+\lambda \cdot \frac{1}{\log k} \sum^{k}_{i=2} \min_{v \in \{v_1,\ldots,v_{i-1}\}} d(v_i,v)\\
&\geq f_1(\hat{\bm{x}}',(v_1,\ldots,v_{k}))/\log k\\
&\geq \left(1-\frac{1}{e}\right)\frac{1}{2\log k} \cdot \mathrm{OPT},
\end{align}
where the second inequality holds by the definition of $f_1$, and the last inequality holds by Eqs.~(\refeq{eq-mid-26}) and~(\refeq{eq-mid-27}). That is, the desired approximation guarantee is reached. Note that $div(\hat{\bm{x}}')=mst(\hat{\bm{x}}')$ here. Thus, we only need to analyze the expected number of iterations until $J_{\max}=k$.

The value of $J_{\max}$ is initially 0, since the population $P$ starts from the solution $\bm{0}$, whose values on both objectives are 0. Assume that currently $J_{\max}=i <k$. Let $\bm{x}$ be a corresponding solution with the value $i$, i.e., $|\bm{x}|\leq i$ and \begin{align}\label{eq-mid-28}f_1(\bm{x},(u_1,\ldots,u_{|\bm{x}|})) \geq \left(1-\left(1-\frac{1}{k}\right)^i\right) \cdot \frac{\mathrm{OPT}}{2}.\end{align} It is easy to see that $J_{\max}$ cannot decrease because deleting $\bm{x}$ from the population $P$ (line~6 of Algorithm~\ref{algo:GSEMO}) implies that $\bm{x}$ is weakly dominated by a newly generated solution $\bm{x}'$, which must satisfy that $|\bm{x}'| \leq |\bm{x}|$ and $f_1(\bm{x}',(u'_1,\ldots,u'_{|\bm{x}'|})) \geq f_1(\bm{x},(u_1,\ldots,u_{|\bm{x}|}))$. By Lemma~\ref{lemma:cardinality-mst-3}, we know that adding a specific item $u$ into $\bm{x}$ can generate a new solution $\bm{x}'=\bm{x} \cup \{u\}$ such that
\begin{align}\label{eq-mid-29}
& f_1(\bm{x}',(u_1,\ldots,u_{|\bm{x}|},u))- f_1(\bm{x},(u_1,\ldots,u_{|\bm{x}|})) \geq \frac{1}{k}\cdot \left(\frac{\mathrm{OPT}}{2}-f_1(\bm{x},(u_1,\ldots,u_{|\bm{x}|}))\right).
\end{align}
This can be accomplished by selecting $\bm{x}$ in line~3 of Algorithm~\ref{algo:GSEMO} (occurring with probability $1/|P| \geq 1/(k+1)$) and flipping only a specific 0-bit (corresponding to the item $u$) of $\bm{x}$ in line~4 (occurring with probability $(1/n)(1-1/n)^{n-1} \geq 1/(en)$). Note that the corresponding permutation of $\bm{x}'$ is constructed by keeping the ordering of the old items in $\bm{x}' \cap \bm{x}=\bm{x}$ and putting the newly added item $u$ after these old items, and thus is just $(u_1,\ldots,u_{|\bm{x}|},u)$. Combining Eqs.~(\refeq{eq-mid-28}) and~(\refeq{eq-mid-29}) leads to
\begin{align}
f_1(\bm{x}',(u_1,\ldots,u_{|\bm{x}|},u)) \geq \left(1-\left(1-\frac{1}{k}\right)^{i+1}\right) \cdot \frac{\mathrm{OPT}}{2}.
\end{align}
Since $|\bm{x}'|=|\bm{x}|+1 \leq i+1$, $\bm{x}'$ will be included into $P$; otherwise, $\bm{x}'$ must be dominated by one solution in $P$ (line~5 of Algorithm~\ref{algo:GSEMO}), which implies that $J_{\max}$ has already been larger than $i$, contradicting the assumption $J_{\max}=i$. After including $\bm{x}'$, $J_{\max} \geq i+1$. Thus, $J_{\max}$ can increase by at least 1 in one iteration with probability at least $(1/(k+1)) \cdot (1/(en)) = 1/(en(k+1))$, implying that it needs at most $en(k+1)$ expected number of iterations to increase $J_{\max}$. To make $J_{\max}=k$, it is sufficient to increase $J_{\max}$ for $k$ times, and thus the total required number of iterations is at most $k \cdot en(k+1)$ in expectation, implying the theorem holds.
\end{myproof}

\section{MOEAs for Result Diversification with a Matroid Constraint}\label{sec-matroid}

Next, we consider the result diversification problem with a matroid constraint in Definition~\ref{def-prob-matroid}. The diversity measure $div$ is set as the sum-diversity in Definition~\ref{def-sum-div}, which is the only previously studied measure for this problem. To employ the GSEMO, the original problem
\begin{align}
\arg\max\nolimits_{X \subseteq V} f(X)+\lambda \cdot div(X) \quad s.t. \quad X \in \mathcal{F},
\end{align}
i.e., Eq.~(\refeq{eq-problem-matroid}) in Definition~\ref{def-prob-matroid}, is transformed into a bi-objective maximization problem
\begin{align}\label{eq-bi-problem-4}
&\arg\max\nolimits_{\bm{x} \in \{0,1\}^n} \quad  (f_1(\bm{x}),f_2(\bm{x})),\\
&\text{where}\quad\begin{cases}
f_1(\bm{x}) = f(\bm{x})+\lambda \cdot div(\bm{x}),\\
f_2(\bm x) = |\bm{x}|.
\end{cases}
\end{align}
That is, the GSEMO is to maximize the original objective function $f(\bm{x})+\lambda \cdot div(\bm{x})$ and the subset size $|\bm{x}|$ simultaneously. Note that the infeasible solutions, i.e., the subsets not belonging to $\mathcal{F}$, are excluded during the optimization process of the GSEMO.

Let $r$ denote the rank of the matroid $(V,\mathcal{F})$. Theorem~\ref{theo-matroid-sum} shows that the GSEMO can achieve a $(1/2-\epsilon/(4n))$-approximation ratio after running at most $O(rn^3(n+(r\log r)/\epsilon))$ expected number of iterations. Note that this approximation ratio has been asymptotically optimal, because $1/2$ is the optimal polynomial-time approximation ratio even for the special case of cardinality constraints~\cite{borodin2017max}.

\begin{theorem}\label{theo-matroid-sum}
For the result diversification problem with a matroid constraint in Definition~\ref{def-prob-matroid}, where the diversity measure $div$ is the sum-diversity in Definition~\ref{def-sum-div}, the expected number of iterations of the GSEMO using Eq.~(\refeq{eq-bi-problem-4}), until finding a solution $\bm{x}$ with $\bm{x} \in \mathcal{F}$ and $f(\bm{x})+\lambda \cdot div(\bm{x}) \geq (1/2-\epsilon/(4n)) \cdot \mathrm{OPT}$, is at most $O(rn^3(n+(r\log r)/\epsilon))$, where $\epsilon >0$.
\end{theorem}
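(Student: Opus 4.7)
The plan is to mimic and strengthen the local-search analysis of Borodin~\textit{et al.}~\cite{borodin2017max} by exploiting the ability of bit-wise mutation to realize both single-item augmentations and two-item swaps, and to decompose the running time into two conceptual phases. Because infeasible solutions are excluded and the population contains at most one solution per value of $f_2=|\bm{x}|$, we always have $|P|\leq r+1$. In Phase~1 I show that $P$ quickly comes to contain a basis: whenever the largest-size solution $\bm{x}\in P$ is not a basis, matroid augmentation supplies an item $v\notin\bm{x}$ with $\bm{x}\cup\{v\}\in\mathcal{F}$, and the event ``select $\bm{x}$ and flip only the 0-bit corresponding to $v$'' occurs with probability at least $\frac{1}{r+1}\cdot\frac{1}{en}$; the offspring is not dominated by any element of $P$. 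Hence one size-increase costs $O(rn)$ iterations in expectation and reaching a basis costs $O(r^2n)$.

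Phase~2 realizes local search on a size-$r$ solution. Let $\bm{x}_r$ denote the size-$r$ solution in $P$ of the largest $f_1$ value at a given time; the domination rule forces $f_1(\bm{x}_r)$ to be non-decreasing over iterations, since a replacement of a size-$r$ solution must weakly dominate the old one. A ``swap'' that flips off one $v\in\bm{x}_r$ and flips on one $u\notin\bm{x}_r$ with $\bm{x}_r\cup\{u\}\setminus\{v\}\in\mathcal{F}$ realizes a single local-search step and, combined with the selection of $\bm{x}_r$, is performed with probability $\Omega(1/(rn^2))$ per iteration. I then call $\bm{x}_r$ an $\epsilon$-approximate local optimum if no such swap improves $f_1$ by a multiplicative factor exceeding $1+\epsilon/(2nr)$.

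To convert $\epsilon$-local optimality into the claimed approximation, I would apply the classical matroid basis-exchange bijection (Brualdi; see also Cevallos~\textit{et al.}~\cite{cevallos2019improved}): there exists a bijection $\pi:\bm{x}_r\setminus\bm{x}^*\to\bm{x}^*\setminus\bm{x}_r$ such that $\bm{x}_r\cup\{\pi(v)\}\setminus\{v\}\in\mathcal{F}$ for every $v$. Summing the $r$ local-optimality inequalities along $\pi$, bounding the diversity cross terms $div(\bm{x}_r\cup\{\pi(v)\}\setminus\{v\})-div(\bm{x}_r)$ by the Ravi inequality (Lemma~\ref{lemma:sum-div}) in the spirit of the proof of Lemma~\ref{lemma:cardinality-sum} but for swaps instead of augmentations, and combining with the submodularity of $f$, should yield $2f_1(\bm{x}_r)(1+\epsilon/(2n))\geq \mathrm{OPT}$, equivalent to $f_1(\bm{x}_r)\geq(1/2-\epsilon/(4n))\cdot\mathrm{OPT}$. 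Since each accepted swap multiplies $f_1(\bm{x}_r)$ by at least $1+\epsilon/(2nr)$, and a standard scaling argument confines the multiplicative range of $f_1(\bm{x}_r)$ to $n^{O(1)}$ once a nontrivial value is first attained, at most $O((nr\log r)/\epsilon)$ accepted swaps can occur; multiplying by the $O(rn^2)$ expected iterations per accepted swap and adding a ``warm-up'' cost of $O(rn^4)$ to first reach a nontrivial $f_1(\bm{x}_r)$ via single-bit augmentations yields the total bound $O(rn^3(n+(r\log r)/\epsilon))$.

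The main obstacle is the swap-based counterpart of Lemma~\ref{lemma:cardinality-sum}: one must select the correct bijection $\pi$ between the two bases and aggregate the $r$ inequalities while carefully taming the cross terms of the non-submodular sum-diversity via the triangle inequality, paralleling the delicate case analysis on $|\bm{x}^*\setminus\bm{x}|$ that appeared in the cardinality proof. A secondary but technically important point is rigorously justifying that $f_1(\bm{x}_r)$ is non-decreasing under the GSEMO's replacement rule, so that the count of accepted improving swaps serves as a valid potential; this follows from the domination criterion but needs to be stated explicitly before the geometric-improvement argument can be invoked.
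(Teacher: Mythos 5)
Your proposal follows essentially the same route as the paper's proof: the same three-stage decomposition (reach a basis, warm up its objective value so the multiplicative range is polynomially bounded, then drive a $(1+\Theta(\epsilon/(rn)))$-multiplicative local-search potential on the basis of largest $f_1$), the same key lemma converting approximate local optimality into the $(1/2-\epsilon/(4n))$ ratio via the Brualdi exchange bijection and the summed swap inequalities for $f$ and $div$ (which the paper simply imports as Lemmas~\ref{lemma:matroid-mid-1} and~\ref{lemma:matroid-mid-2} from Borodin \textit{et al.} rather than re-deriving them), and the same iteration accounting. The one detail to fix is the warm-up: single-bit augmentations cannot raise a basis's objective (a basis admits no feasible augmentation), and the paper instead reaches the $(2/(r(r-1)))\cdot\mathrm{OPT}$ threshold by a mutation flipping up to two specific 0-bits and two 1-bits so as to swap the best pair $\{u^*,v^*\}\subseteq\bm{x}^*$ into the current basis — this four-bit event is exactly what produces the $O(rn^4)$ term you wrote down.
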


Lemma~\ref{lemma:matroid-property} shows that for any two bases $\bm{x}, \bm{y}$ of $V$, there exists a pairing between items in $\bm{x} \setminus \bm{y}$ and $\bm{y} \setminus \bm{x}$ such that deleting an item in $\bm{x} \setminus \bm{y}$ from $\bm{x}$ and adding its paired counterpart in $\bm{y} \setminus \bm{x}$ will still lead to a basis.

\begin{lemma}[Corollary~3 in~\cite{brualdi1969comments}]\label{lemma:matroid-property}
Let $(V,\mathcal{F})$ denote a matroid. For any two bases $\bm{x}, \bm{y}$ of $V$, there is a bijective mapping $\phi: \bm{x} \setminus \bm{y} \rightarrow  \bm{y} \setminus \bm{x}$ such that $\bm{x} \setminus \{u\} \cup \{\phi(u)\} \in \mathcal{F}$ for any $u \in \bm{x} \setminus \bm{y}$.
\end{lemma}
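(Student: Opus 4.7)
The plan is to prove this symmetric basis-exchange property via Hall's marriage theorem applied to a bipartite graph that encodes all valid single-element swaps from $\bm{x}$. Construct the bipartite graph $H$ whose two parts are $\bm{x}\setminus\bm{y}$ and $\bm{y}\setminus\bm{x}$, and put an edge between $u\in\bm{x}\setminus\bm{y}$ and $v\in\bm{y}\setminus\bm{x}$ exactly when $(\bm{x}\setminus\{u\})\cup\{v\}\in\mathcal{F}$. Since $|\bm{x}|=|\bm{y}|=r$ gives $|\bm{x}\setminus\bm{y}|=|\bm{y}\setminus\bm{x}|$, any perfect matching in $H$ is a bijection $\phi$ meeting the stated requirement. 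So it suffices to verify Hall's condition: for every $S\subseteq\bm{x}\setminus\bm{y}$, the neighborhood $N_H(S)$ satisfies $|N_H(S)|\geq|S|$.

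To verify Hall's condition, fix $S\subseteq\bm{x}\setminus\bm{y}$. The set $\bm{x}\setminus S$ is a subset of the basis $\bm{x}$, hence independent, with $|\bm{x}\setminus S|=r-|S|$. Applying the augmentation property repeatedly against the basis $\bm{y}$, one can extend $\bm{x}\setminus S$ to a basis by adding $|S|$ elements from $\bm{y}\setminus(\bm{x}\setminus S)$. Because $S\subseteq\bm{x}\setminus\bm{y}$ forces $S\cap\bm{y}=\emptyset$, those augmenting elements necessarily come from $\bm{y}\setminus\bm{x}$. Thus we obtain $P\subseteq\bm{y}\setminus\bm{x}$ with $|P|=|S|$ such that $B'=(\bm{x}\setminus S)\cup P$ is a basis. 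The crux is then to show $P\subseteq N_H(S)$, which already gives $|N_H(S)|\geq|P|=|S|$.

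For each $v\in P$, consider the fundamental circuit $C(v,\bm{x})$ of $v$ with respect to the basis $\bm{x}$: since $\bm{x}\cup\{v\}$ has $r+1$ elements, it is dependent and contains a unique circuit, which must include $v$. I claim $C(v,\bm{x})\cap S\neq\emptyset$. Indeed, if $C(v,\bm{x})\cap S=\emptyset$, then $C(v,\bm{x})\subseteq\{v\}\cup(\bm{x}\setminus S)\subseteq B'$, which would exhibit a circuit inside the independent set $B'$, a contradiction. Pick any $u\in C(v,\bm{x})\cap S$; by the standard fundamental-circuit property, removing the single element $u$ from the unique circuit of $\bm{x}\cup\{v\}$ leaves an independent set, i.e., $(\bm{x}\setminus\{u\})\cup\{v\}\in\mathcal{F}$. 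So $\{u,v\}$ is an edge of $H$, giving $v\in N_H(\{u\})\subseteq N_H(S)$. This proves $P\subseteq N_H(S)$, Hall's condition holds, and any resulting perfect matching supplies the desired bijection $\phi$.

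The main obstacle is the fundamental-circuit step: one must carefully invoke the uniqueness of the circuit of $\bm{x}\cup\{v\}$ and the fact that deleting any of its elements leaves an independent set, which is a standard consequence of the circuit/strong-exchange axioms but is the one place where we use matroid structure beyond the hereditary and augmentation axioms given in the paper. Everything else is set-theoretic bookkeeping and Hall's theorem.
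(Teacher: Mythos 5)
The paper does not prove this lemma at all; it imports it wholesale as Corollary~3 of Brualdi~\cite{brualdi1969comments}, so there is no in-paper argument to compare against. Your proof is a correct and complete derivation, and it is essentially the standard modern proof of Brualdi's symmetric basis-exchange theorem: encode the admissible single swaps as a bipartite graph, reduce the claim to Hall's condition, verify Hall's condition by augmenting $\bm{x}\setminus S$ to a basis $B'=(\bm{x}\setminus S)\cup P$ using elements of $\bm{y}$, and use fundamental circuits to show $P\subseteq N_H(S)$. The individual steps all check out: the augmenting elements land in $\bm{y}\setminus\bm{x}$ precisely because $S\subseteq\bm{x}\setminus\bm{y}$ forces $S\cap\bm{y}=\emptyset$; the circuit $C(v,\bm{x})$ exists because $\bm{x}$ is a maximal independent set; it must meet $S$ since otherwise it would sit inside the independent set $B'$; and deleting any $u\in C(v,\bm{x})\setminus\{v\}$ from $\bm{x}\cup\{v\}$ destroys the unique circuit and leaves an $r$-element independent set, so $(\bm{x}\setminus\{u\})\cup\{v\}\in\mathcal{F}$. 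Your closing caveat is fair but mild: the uniqueness of the circuit in $\bm{x}\cup\{v\}$ and the independence of the set obtained by deleting one of its elements follow from the circuit-elimination property, which is itself derivable from the hereditary and augmentation axioms the paper states, so your argument is self-contained modulo standard matroid theory rather than requiring genuinely new axioms.
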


Let $\bm{x}^*$ denote an optimal solution of Eq.~(\refeq{eq-problem-matroid}), i.e., $f(\bm{x}^*)+\lambda \cdot div(\bm{x}^*)=\mathrm{OPT}$. Because the objective function $f+\lambda \cdot div$ is monotone non-decreasing, $\bm{x}^*$ must be a basis of $V$. Note that the size of a basis of $V$ is just the rank $r$ of the matroid, that is, a basis of $V$ contains $r$ items. Let $\bm{x}$ be any basis of $V$. We use $\{u_1,\ldots,u_{|\bm{x} \setminus \bm{x}^*|}\}$ to denote $\bm{x} \setminus \bm{x}^*$. Based on Lemma~\ref{lemma:matroid-property}, we assume without loss of generality that $\bm{x}^* \setminus \bm{x}=\{v_1,\ldots,v_{|\bm{x}^* \setminus \bm{x}|}\}$, where $\forall i: \phi(u_i)=v_i$. For those bases generated by exchanging the two items $u_i$ and $v_i$, the following two lemmas give lower bounds on the sum of their $f$ and $div$ values, respectively.

\begin{lemma}[Lemma~5.5 in~\cite{borodin2017max}]\label{lemma:matroid-mid-1}
For any basis $\bm{x}$ of $V$, it holds
\begin{align}
\sum^{|\bm{x} \setminus \bm{x}^*|}_{i=1} f(\bm{x} \setminus \{u_i\} \cup \{v_i\}) \geq (|\bm{x} \setminus \bm{x}^*|-2) \cdot f(\bm{x})+f(\bm{x}^*).
\end{align}
\end{lemma}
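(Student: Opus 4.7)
My plan is to start from a single submodular inequality applied to each exchanged pair and then telescope two separate sums. For each $i$, set $A=\bm{x}\setminus\{u_i\}\cup\{v_i\}$ and $B=\bm{x}$, so $A\cup B=\bm{x}\cup\{v_i\}$ and $A\cap B=\bm{x}\setminus\{u_i\}$. Submodularity (Eq.~(\ref{def-submodular})) then gives
\begin{align}
f(\bm{x}\setminus\{u_i\}\cup\{v_i\})\;\geq\;f(\bm{x}\cup\{v_i\})+f(\bm{x}\setminus\{u_i\})-f(\bm{x}).
\end{align}
Summing over $i=1,\ldots,d$ where $d=|\bm{x}\setminus\bm{x}^*|=|\bm{x}^*\setminus\bm{x}|$ yields
\begin{align}
\sum_{i=1}^{d}f(\bm{x}\setminus\{u_i\}\cup\{v_i\})\;\geq\;\sum_{i=1}^{d}f(\bm{x}\cup\{v_i\})+\sum_{i=1}^{d}f(\bm{x}\setminus\{u_i\})-d\cdot f(\bm{x}).
\end{align}
So it remains to lower bound the two sums on the right by quantities involving $f(\bm{x}^*)$ and $f(\bm{x})$.

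For the ``addition'' sum, I would apply the equivalent submodular characterization Eq.~(\ref{def-submodular-2}) to the pair $\bm{x}\subseteq \bm{x}\cup\bm{x}^*$, which gives $f(\bm{x}\cup\bm{x}^*)-f(\bm{x})\leq \sum_{v\in\bm{x}^*\setminus\bm{x}}(f(\bm{x}\cup\{v\})-f(\bm{x}))$. Rearranging and using monotonicity ($f(\bm{x}\cup\bm{x}^*)\geq f(\bm{x}^*)$) yields
\begin{align}
\sum_{i=1}^{d}f(\bm{x}\cup\{v_i\})\;\geq\;f(\bm{x}^*)+(d-1)\cdot f(\bm{x}).
\end{align}

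For the ``deletion'' sum, the analogous telescoping argument in reverse is the main technical step. Writing $S=\bm{x}\setminus\bm{x}^*=\{u_1,\ldots,u_d\}$ and adding back the $u_j$'s one at a time starting from $\bm{x}\cap\bm{x}^*$, submodularity (in the diminishing-returns form Eq.~(\ref{def-submodular-1}), applied with $\bm{x}\setminus\{u_j\}$ as the larger set) gives $f(\bm{x}_{j-1}\cup\{u_j\})-f(\bm{x}_{j-1})\geq f(\bm{x})-f(\bm{x}\setminus\{u_j\})$ for each intermediate $\bm{x}_{j-1}$. Summing the telescope and using non-negativity of $f(\bm{x}\cap\bm{x}^*)$, I obtain
\begin{align}
\sum_{i=1}^{d}f(\bm{x}\setminus\{u_i\})\;\geq\;(d-1)\cdot f(\bm{x})+f(\bm{x}\cap\bm{x}^*)\;\geq\;(d-1)\cdot f(\bm{x}).
\end{align}
Plugging these two bounds back gives $\sum_{i}f(\bm{x}\setminus\{u_i\}\cup\{v_i\})\geq f(\bm{x}^*)+(d-2)\cdot f(\bm{x})$, which is exactly the claim.

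The main obstacle I anticipate is the deletion bound: one must apply the diminishing-returns inequality in the ``correct direction'' (so that the large-set marginal $f(\bm{x})-f(\bm{x}\setminus\{u_j\})$ is a \emph{lower} bound on each telescoped increment) and be careful that the intermediate sets $\bm{x}_{j-1}$ indeed sit between $\bm{x}\cap\bm{x}^*$ and $\bm{x}\setminus\{u_j\}$. Everything else (the per-pair submodularity and the addition-side bound) is a routine application of the two equivalent definitions of submodularity already listed in the paper.
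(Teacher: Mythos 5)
Your proof is correct. The paper itself gives no proof of this lemma---it is quoted directly as Lemma~5.5 of Borodin \emph{et al.}~\cite{borodin2017max}---and your argument (the per-pair inequality $f(\bm{x}\setminus\{u_i\}\cup\{v_i\})\geq f(\bm{x}\cup\{v_i\})+f(\bm{x}\setminus\{u_i\})-f(\bm{x})$ from Eq.~(\ref{def-submodular}), combined with the addition-side bound via Eq.~(\ref{def-submodular-2}) and monotonicity, and the deletion-side telescoping bound via Eq.~(\ref{def-submodular-1}) and non-negativity) is essentially the standard proof given in that reference, with all steps checking out.
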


\begin{lemma}[Lemma~5.7 in~\cite{borodin2017max}]\label{lemma:matroid-mid-2}
For any basis $\bm{x}$ of $V$, it holds
\begin{align}\label{eq-mid-30}
\sum^{|\bm{x} \setminus \bm{x}^*|}_{i=1} div(\bm{x} \setminus \{u_i\} \cup \{v_i\}) \geq (|\bm{x} \setminus \bm{x}^*|-2) \cdot div(\bm{x})+div(\bm{x}^*).
\end{align}
\end{lemma}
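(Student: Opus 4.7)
The plan is to reduce the claim to a single pair-distance inequality and then close it by applying Lemma~\ref{lemma:sum-div}. Let $C = \bm{x} \cap \bm{x}^*$, $U = \bm{x} \setminus \bm{x}^* = \{u_1,\ldots,u_m\}$, $W = \bm{x}^* \setminus \bm{x} = \{v_1,\ldots,v_m\}$ with $m = |\bm{x} \setminus \bm{x}^*|$, and extend the pair-sum notation by $div(X,Y) = \sum_{x \in X, y \in Y} d(x,y)$ for disjoint $X, Y$. Writing $div(\bm{x}_i) - div(\bm{x}) = d(v_i, \bm{x}\setminus\{u_i\}) - d(u_i, \bm{x}\setminus\{u_i\})$, decomposing $\bm{x}\setminus\{u_i\} = C \cup (U \setminus \{u_i\})$, and summing over $i$ yields the identity
\[
\sum_{i=1}^m div(\bm{x}_i) = m \cdot div(\bm{x}) - div(C,U) - 2\,div(U) + div(C,W) + div(U,W) - \sum_{i=1}^m d(u_i, v_i).
\]
Combining this with $div(\bm{x}) = div(C)+div(C,U)+div(U)$ and $div(\bm{x}^*) = div(C)+div(C,W)+div(W)$, the lemma's claimed inequality reduces, after cancellation, to
\[
div(C) + div(C,U) + div(U,W) - \sum_{i=1}^m d(u_i,v_i) \;\geq\; div(W). \qquad (\star)
\]

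For the principal case $m \geq 3$, I would invoke Lemma~\ref{lemma:sum-div} with $X = W \setminus \{v_i\}$ (of size $m-1$) and $Y = \{u_i\}$ to obtain $(m-2)\sum_{j\neq i} d(u_i,v_j) \geq div(W \setminus \{v_i\})$, and then sum over $i$. Since each pair $\{v_a,v_b\} \subseteq W$ is contained in exactly $m-2$ of the sets $W \setminus \{v_i\}$, the right-hand side telescopes to $(m-2)\,div(W)$, while the left-hand side equals $(m-2)(div(U,W) - \sum_i d(u_i,v_i))$; cancellation yields $div(U,W) - \sum_i d(u_i,v_i) \geq div(W)$, from which $(\star)$ follows since $div(C), div(C,U) \geq 0$. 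Notably, the bound in this regime is insensitive to the specific bijection $\phi$. The cases $m = 0$ (trivial) and $m = 1$ (where $\bm{x}_1 = \bm{x}^*$, reducing the claim to $div(\bm{x}^*) \geq -div(\bm{x}) + div(\bm{x}^*)$) are immediate.

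The main obstacle will be the boundary case $m = 2$, in which the Ravi-style averaging collapses ($m-2 = 0$) and $(\star)$ becomes genuinely sensitive to $\phi$. When $|C| \geq 1$, I plan to apply the triangle inequality $d(v_1,v_2) \leq d(v_1,u_i) + d(u_i,u_j) + d(u_j,v_2)$ through the single edge of $U$ to get $d(u_1,v_2) + d(u_2,v_1) \geq div(W) - div(U)$, and absorb the residual $div(U)$ via Lemma~\ref{lemma:sum-div} applied to $(X,Y) = (U,C)$, which yields $div(C,U) \geq |C| \cdot div(U) \geq div(U)$. When $|C| = 0$ (so the matroid has rank $2$ and $\bm{x}, \bm{x}^*$ are disjoint), $(\star)$ becomes the raw four-point statement $d(u_1,v_{\sigma(2)}) + d(u_2,v_{\sigma(1)}) \geq d(v_1,v_2)$ for some bijection $\sigma$; averaging the triangle inequalities $d(v_1,v_2) \leq d(v_1,u_k) + d(u_k,v_2)$ for $k=1,2$ gives $\tfrac{1}{2}(S_{\mathrm{id}} + S_{\mathrm{swap}}) \geq div(W)$, so the larger of the two matchings' values certifies $(\star)$, and one must then verify that the matroid-preserving bijection supplied by Lemma~\ref{lemma:matroid-property} can be chosen to be this favorable one (exploiting the freedom in the exchange).
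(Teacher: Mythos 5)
The paper never proves this lemma: it is imported verbatim as Lemma~5.7 of Borodin \textit{et al.}~\cite{borodin2017max}, so there is no internal proof to compare against and your attempt has to be judged on its own. Your reduction is correct: the identity for $\sum_i div(\bm{x}\setminus\{u_i\}\cup\{v_i\})$ checks out, and after substituting $div(\bm{x})=div(C)+div(C,U)+div(U)$ and $div(\bm{x}^*)=div(C)+div(C,W)+div(W)$ the claim is exactly your $(\star)$. The case $m\ge 3$ (summing Lemma~\ref{lemma:sum-div} applied to $X=W\setminus\{v_i\}$, $Y=\{u_i\}$, with each pair of $W$ counted $m-2$ times on the right) is correct, as are $m\le 1$ and the case $m=2$ with $C\ne\emptyset$, where routing the triangle inequality through the edge $\{u_1,u_2\}$ and absorbing $div(U)$ via $div(C,U)\ge |C|\cdot div(U)$ works. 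Your observation that for $m\ne 2$ the bound is independent of the bijection $\phi$ is also correct and worth keeping.

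The case you flagged, $m=2$ with $C=\emptyset$, is a genuine gap, and it cannot be closed the way you propose, because Lemma~\ref{lemma:matroid-property} need not offer any freedom in the choice of $\phi$. Take the partition matroid with parts $V_1=\{u_1,v_1\}$, $V_2=\{u_2,v_2\}$ and capacity one per part, so $\bm{x}=\{u_1,u_2\}$ and $\bm{x}^*=\{v_1,v_2\}$ are disjoint bases of rank $2$; the only feasible exchange bijection is $\phi(u_i)=v_i$, since $\{u_2,v_2\}$ and $\{u_1,v_1\}$ are dependent. Now place the four points on a line at $u_1=0$, $v_2=\epsilon$, $v_1=D$, $u_2=D+\epsilon$. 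Then the left-hand side of Eq.~(\refeq{eq-mid-30}) is $d(u_2,v_1)+d(u_1,v_2)=2\epsilon$ while the right-hand side is $div(\bm{x}^*)=D-\epsilon$, so the inequality fails for small $\epsilon$. In other words, in this corner case the obstacle is not that your argument is incomplete but that the lemma, as stated for the Brualdi pairing, is false; a complete treatment has to either add the hypothesis $|\bm{x}\setminus\bm{x}^*|\ne 2$ or $\bm{x}\cap\bm{x}^*\ne\emptyset$ (your proof then covers everything), or handle rank-$2$ matroids with disjoint $\bm{x},\bm{x}^*$ by a separate argument at the point where the lemma is invoked.
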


The proof of Theorem~\ref{theo-matroid-sum} relies on Lemma~\ref{lemma:matroid-mid-3}, which shows that it is always possible to improve a basis of $V$ by swapping two items until a good approximation has been achieved. This lemma is inspired from Theorem~5.1 in~\cite{borodin2017max}, and the proof will use Lemmas~\ref{lemma:matroid-mid-1} and~\ref{lemma:matroid-mid-2}.

\begin{lemma}\label{lemma:matroid-mid-3}
Let $\bm{x}$ be a basis of $V$ such that no basis $\bm{x}'$ with the objective value $f(\bm{x}')+\lambda \cdot div(\bm{x}')>(1+\epsilon/(rn))\cdot (f(\bm{x})+\lambda \cdot div(\bm{x}))$ can be achieved by deleting one item inside $\bm{x}$ and inserting one item outside into $\bm{x}$, where $\epsilon >0$. Then it holds \begin{align}\label{eq-mid-33}f(\bm{x}) +\lambda \cdot div(\bm{x}) \geq \left(\frac{1}{2}-\frac{\epsilon}{4n}\right)\cdot \mathrm{OPT}.\end{align}
\end{lemma}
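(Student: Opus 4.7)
The plan is to use the local optimality condition swap-by-swap along the bijection provided by Lemma~\ref{lemma:matroid-property}, and then invoke Lemmas~\ref{lemma:matroid-mid-1} and~\ref{lemma:matroid-mid-2} to turn the swap-wise bounds into a global comparison with $\mathrm{OPT}$. First, the trivial case $\bm{x}=\bm{x}^*$ is handled directly, so assume $m:=|\bm{x}\setminus\bm{x}^*|\geq 1$. Apply Lemma~\ref{lemma:matroid-property} to $\bm{x}$ and $\bm{x}^*$ (both are bases of $V$) to obtain the bijection $\phi$ with $\bm{x}\setminus\{u_i\}\cup\{v_i\}\in\mathcal{F}$ for each $i=1,\ldots,m$. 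Each such swapped set is again a basis of $V$ (same size as $\bm{x}$), so the local optimality hypothesis applies: $f(\bm{x}\setminus\{u_i\}\cup\{v_i\})+\lambda\cdot div(\bm{x}\setminus\{u_i\}\cup\{v_i\}) \leq (1+\epsilon/(rn))(f(\bm{x})+\lambda\cdot div(\bm{x}))$ for every $i$.

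Next, sum this inequality over $i=1,\ldots,m$ and lower-bound the left-hand side by adding together the bounds of Lemmas~\ref{lemma:matroid-mid-1} and~\ref{lemma:matroid-mid-2} (with the $\lambda$ weight on the second). This yields
\begin{align}
(m-2)(f(\bm{x})+\lambda\cdot div(\bm{x}))+\mathrm{OPT} \;\leq\; m(1+\epsilon/(rn))(f(\bm{x})+\lambda\cdot div(\bm{x})).
\end{align}
Rearranging collects the $f(\bm{x})+\lambda\cdot div(\bm{x})$ terms to give $\mathrm{OPT}\leq(2+m\epsilon/(rn))(f(\bm{x})+\lambda\cdot div(\bm{x}))$. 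Since $m\leq r$, the coefficient is at most $2+\epsilon/n$, hence $f(\bm{x})+\lambda\cdot div(\bm{x}) \geq \mathrm{OPT}/(2+\epsilon/n)$. Finally, a short calculation using $1/(1+x)\geq 1-x$ with $x=\epsilon/(2n)>0$ gives $1/(2+\epsilon/n)=(1/2)\cdot 1/(1+\epsilon/(2n))\geq 1/2-\epsilon/(4n)$, yielding Eq.~(\ref{eq-mid-33}).

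I do not expect a real obstacle here: the decomposition via $\phi$ combined with the two ``sum-over-swaps'' lemmas is exactly the standard local-search template, and the only subtlety is that the coefficient $(m-2)$ coming from Lemmas~\ref{lemma:matroid-mid-1} and~\ref{lemma:matroid-mid-2} can be zero or negative when $m\in\{1,2\}$; however, after rearrangement this only strengthens the resulting inequality, so no case split is actually needed beyond $m=0$. The main care is bookkeeping: keeping the $\lambda$ factor consistent when adding the two lemmas, and verifying that every swapped set lies in $\mathcal{F}$ (guaranteed by Lemma~\ref{lemma:matroid-property}) and is thus an admissible competitor in the local optimality assumption.
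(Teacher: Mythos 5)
Your proposal is correct and follows essentially the same route as the paper's proof: apply the local-optimality hypothesis to each swap given by the bijection of Lemma~\ref{lemma:matroid-property}, sum over the swaps, lower-bound the sum via Lemmas~\ref{lemma:matroid-mid-1} and~\ref{lemma:matroid-mid-2}, rearrange to get $(2+|\bm{x}\setminus\bm{x}^*|\cdot\epsilon/(rn))\cdot(f(\bm{x})+\lambda\cdot div(\bm{x}))\geq\mathrm{OPT}$, and use $|\bm{x}\setminus\bm{x}^*|\leq r$ plus the same elementary estimate on $1/(2+\epsilon/n)$. Your additional remarks (the trivial $m=0$ case and the observation that a nonpositive $m-2$ causes no trouble after rearrangement) are accurate but not needed beyond what the paper already does.
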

\begin{proof}
Based on Lemma~\ref{lemma:matroid-property}, let $\bm{x} \setminus \bm{x}^*=\{u_1,\ldots,u_{|\bm{x} \setminus \bm{x}^*|}\}$, and $\bm{x}^* \setminus \bm{x}=\{v_1,\ldots,v_{|\bm{x}^* \setminus \bm{x}|}\}$, where $\forall i: \phi(u_i)=v_i$. Note that $|\bm{x} \setminus \bm{x}^*|=|\bm{x}^* \setminus \bm{x}|$. Considering those bases of $V$, generated by deleting $u_i$ from $\bm{x}$ and adding $v_i$ into $\bm{x}$, we have for any $i$,
\begin{align}
\left(1+\frac{\epsilon}{rn}\right)\cdot (f(\bm{x}) +\lambda \cdot div(\bm{x})) \geq  f(\bm{x} \setminus \{u_i\} \cup \{v_i\}) +\lambda \cdot div(\bm{x} \setminus \{u_i\} \cup \{v_i\}).
\end{align}
Summing up the above inequalities over $i$ leads to
\begin{align}\label{eq-mid-31}
&|\bm{x} \setminus \bm{x}^*|\cdot \left(1+\frac{\epsilon}{rn}\right)\cdot (f(\bm{x}) +\lambda \cdot div(\bm{x}))\\
& \geq  \sum^{|\bm{x} \setminus \bm{x}^*|}_{i=1} f(\bm{x} \setminus \{u_i\} \cup \{v_i\}) +\lambda \cdot div(\bm{x} \setminus \{u_i\} \cup \{v_i\})\\
&\geq(|\bm{x} \setminus \bm{x}^*|-2) \cdot f(\bm{x})+f(\bm{x}^*) + \lambda \cdot ((|\bm{x} \setminus \bm{x}^*|-2) \cdot div(\bm{x})+div(\bm{x}^*))\\
&= (|\bm{x} \setminus \bm{x}^*|-2) \cdot (f(\bm{x})+\lambda \cdot div(\bm{x}))+\mathrm{OPT},
\end{align}
where the second inequality holds by Lemmas~\ref{lemma:matroid-mid-1} and~\ref{lemma:matroid-mid-2}. By moving $(|\bm{x} \setminus \bm{x}^*|-2) \cdot (f(\bm{x})+\lambda \cdot div(\bm{x}))$ to the left side, we have
\begin{align}
&\left(2+|\bm{x} \setminus \bm{x}^*|\cdot \frac{\epsilon}{rn}\right)\cdot (f(\bm{x}) +\lambda \cdot div(\bm{x}))\geq \mathrm{OPT}.
\end{align}
Because $|\bm{x} \setminus \bm{x}^*| \leq r$, we have
\begin{align}
&\left(2+\frac{\epsilon}{n}\right)\cdot (f(\bm{x}) +\lambda \cdot div(\bm{x}))\geq \mathrm{OPT},
\end{align}
implying
\begin{align}\label{eq-mid-32}
&f(\bm{x}) +\lambda \cdot div(\bm{x}) \geq \frac{1}{2+\frac{\epsilon}{n}} \cdot \mathrm{OPT} = \left(\frac{1}{2}-\frac{\frac{\epsilon}{2n}}{2+\frac{\epsilon}{n}}\right)\cdot \mathrm{OPT} \geq \left(\frac{1}{2}-\frac{\epsilon}{4n}\right) \cdot \mathrm{OPT}.
\end{align}
Thus, the lemma holds.
\end{proof}

\begin{myproof}{Theorem~\ref{theo-matroid-sum}}
We divide the optimization process into three phases: (1) starts from the initial solution $\bm{0}$ (i.e., $\emptyset$) and finishes after finding a basis of $V$; (2) starts after phase~(1) and finishes after finding a basis of $V$ with the objective value at least $(2/(r(r-1)))\cdot \mathrm{OPT}$; (3) starts after phase~(2) and finishes after finding a basis with the desired approximation ratio $1/2-\epsilon/(4n)$. We analyze the expected number of iterations of each phase, respectively, and then sum them up to get an upper bound on the total expected number of iterations of the GSEMO.

For phase (1), we consider the maximum number of 1-bits of the solutions in the population $P$, denoted by $J_{\max}$. That is, $J_{\max}=\max\{|\bm{x}| \mid \bm{x} \in P\}$. Note that the infeasible solutions, i.e., the solutions not belonging to $\mathcal{F}$, are excluded during the optimization. Thus, $J_{\max}$ is at most the rank $r$ of the matroid, and $J_{\max}=r$ implies that a basis of $V$ has been found. $J_{\max}$ is initially 0. Assume that currently $J_{\max}=i < r$, and let $\bm{x}$ be the corresponding solution, i.e., $|\bm{x}|=i$. $J_{\max}$ will not decrease because $\bm{x}$ cannot be weakly dominated by a solution with less 1-bits. It can be known from the definition of matroid that there are at least $r-i$ items, adding one of which into $\bm{x}$ can generate a solution $\bm{x}' \in \mathcal{F}$ with $|\bm{x}'|=|\bm{x}|+1=i+1$. Thus, by selecting $\bm{x}$ in line~3 of Algorithm~\ref{algo:GSEMO} and flipping only one of those corresponding 0-bits of $\bm{x}$ (i.e., adding one of those items into $\bm{x}$) in line~4, which occur with probability at least $(1/|P|) \cdot ((r-i)/n)(1-1/n)^{n-1} \geq (r-i)/(en|P|)$, a new solution $\bm{x}' \in \mathcal{F}$ with $|\bm{x}'|=i+1$ can be generated in one iteration of the GSEMO. Note that the size of the solution in $P$ is at most $r$, and thus the second objective $f_2$ in Eq.~(\refeq{eq-bi-problem-4}) can take values $0,1,\ldots,r$, implying $|P| \leq r+1$. In fact, $|P| \leq r$, because the all-0s solution is dominated by any other feasible solution, and will not exist in $P$ once a feasible solution with size larger than 0 has been generated. Thus, the probability of generating a new solution $\bm{x}' \in \mathcal{F}$ with $|\bm{x}'|=i+1$ in one iteration is at least $(r-i)/(ern)$. Because the newly generated solution $\bm{x}' \in \mathcal{F}$ now has the largest number of 1-bits and no solution in $P$ can dominate it, it will be included into $P$, making $J_{\max}=i+1$. This implies that the probability of increasing $J_{\max}$ by 1 in one iteration of the GSEMO is at least $(r-i)/(ern)$. We can then get that the expected number of iterations of phase (1) (i.e., to make $J_{\max}$ reach $r$) is at most
\begin{align}
\sum^{r-1}_{i=0} \frac{ern}{r-i} \leq ern(1+\log r).
\end{align}
Note that the population $P$ will always contain a basis of $V$ once generated, since it has the largest $f_2$ value $r$ and can be weakly dominated by only other bases of $V$.

Let $\hat{\bm{x}}$ denote the basis of $V$ in the population $P$. For phase (2), i.e., finding a basis of $V$ with the objective value at least $(2/(r(r-1)))\cdot \mathrm{OPT}$, we will show that it is sufficient to select $\hat{\bm{x}}$ in line~3 of Algorithm~\ref{algo:GSEMO} and flip only one specific 0-bit and 1-bit, or two specific 0-bits and 1-bits in line~4. Let $\{u^*,v^*\}=\arg \max_{\{u,v\}: u,v \in \bm{x}^*} f(\{u,v\}) + \lambda \cdot d(u,v)$, where $\bm{x}^*$ denotes an optimal solution. We have
\begin{align}
f(\bm{x}^*)+\lambda \cdot div(\bm{x}^*)& \leq \sum_{\{u,v\}: u,v \in \bm{x}^*} f(\{u,v\})+\lambda \cdot d(u,v)\\
&\leq \binom{|\bm{x}^*|}{2}\cdot (f(\{u^*,v^*\})+\lambda \cdot d(u^*,v^*))\\
&=\frac{r(r-1)}{2}\cdot (f(\{u^*,v^*\})+\lambda \cdot d(u^*,v^*)),
\end{align}
where the first inequality holds by the submodularity (i.e., Eq.~(\refeq{def-submodular})) of $f$, the second inequality holds by the definition of $\{u^*,v^*\}$, and the equality holds by $|\bm{x}^*|=r$. Thus,
\begin{align}
f(\{u^*,v^*\})+\lambda \cdot d(u^*,v^*) \geq \frac{2}{r(r-1)} \cdot (f(\bm{x}^*)+\lambda \cdot div(\bm{x}^*)) = \frac{2}{r(r-1)} \cdot \mathrm{OPT}.
\end{align}
According to Lemma~\ref{lemma:matroid-property}, we know that there is a bijective mapping $\phi: \hat{\bm{x}} \setminus \bm{x}^* \rightarrow  \bm{x}^* \setminus \hat{\bm{x}}$ such that $\hat{\bm{x}} \setminus \{u\} \cup \{\phi(u)\} \in \mathcal{F}$ for any $u \in \hat{\bm{x}} \setminus \bm{x}^*$. If $u^*$ or (and) $v^*$ is not in $\hat{\bm{x}}$, we can swap the item $\phi^{-1}(u^*)$ in $\hat{\bm{x}}$ with $u^*$ or (and) swap $\phi^{-1}(v^*)$ with $v^*$ to generate a new basis containing both $u^*$ and $v^*$. As $f$ is monotone, the objective value of this new basis is at least $f(\{u^*,v^*\})+\lambda \cdot d(u^*,v^*) \geq (2/(r(r-1))) \cdot \mathrm{OPT}$. After generating such a new basis, it will be used to update the population $P$, which makes $P$ always contain a basis with the objective value at least $(2/(r(r-1))) \cdot \mathrm{OPT}$, achieving the goal of this phase. Note that swapping one item inside $\hat{\bm{x}}$ with one item outside corresponds to flipping one specific 0-bit and 1-bit of $\hat{\bm{x}}$, and thus our claim holds. Because the probability of selecting $\hat{\bm{x}}$ in line~3 of the GSEMO is $1/|P| \geq 1/r$ and the probability of flipping one specific 0-bit and 1-bit in line~4 is lower bounded by that of flipping two specific 0-bits and 1-bits, i.e., $(1/n)^4(1-1/n)^{n-4}\geq 1/(en^4)$, the expected number of iterations of phase (2) is at most $ern^4$.

We call a basis $\bm{x}$ of $V$ a $(1+\alpha)$-approximate local optimum if \begin{align}f(\bm{x} \setminus \{u\} \cup \{v\}) +\lambda \cdot div(\bm{x} \setminus \{u\} \cup \{v\}) \leq (1+\alpha)\cdot (f(\bm{x})+\lambda \cdot div(\bm{x}))\end{align} for any $u \in \bm{x}$, $v \in V \setminus \bm{x}$ and $\bm{x} \setminus \{u\} \cup \{v\} \in \mathcal{F}$. By Lemma~\ref{lemma:matroid-mid-3}, we know that a $(1+\epsilon/(rn))$-approximate local optimum $\bm{x}$ satisfies \begin{align}f(\bm{x}) +\lambda \cdot div(\bm{x}) \geq \left(\frac{1}{2}-\frac{\epsilon}{4n}\right)\cdot \mathrm{OPT},\end{align}
achieving the desired approximation ratio. For phase~(3), we thus only need to analyze the expected number of iterations until generating a $(1+\epsilon/(rn))$-approximate local optimum. We consider the $f_1$ value of the basis $\hat{\bm{x}}$ of $V$ in the population $P$, where $f_1=f+\lambda \cdot div$ as in Eq.~(\refeq{eq-bi-problem-4}). After phase (2), \begin{align}f_1(\hat{\bm{x}}) \geq \frac{2}{r(r-1)} \cdot \mathrm{OPT}.\end{align} It is obvious that $f_1(\hat{\bm{x}})$ will not decrease, because $\hat{\bm{x}}$ can be dominated by only other bases of $V$ with larger $f_1$ values. As long as $\hat{\bm{x}}$ is not a $(1+\epsilon/(rn))$-approximate local optimum, we know that a new solution $\bm{x}'$ with \begin{align}f_1(\bm{x}')>\left(1+\frac{\epsilon}{rn}\right)\cdot f_1(\hat{\bm{x}})\end{align} can be generated through selecting $\bm{x}$ in line~3 of Algorithm~\ref{algo:GSEMO} and flipping only one specific 0-bit and 1-bit (i.e., deleting one item inside $\hat{\bm{x}}$ and inserting one item outside into $\hat{\bm{x}}$) in line~4, the probability of which is $(1/|P|)\cdot (1/n^2)(1-1/n)^{n-2}\geq 1/(ern^2)$. Since $\bm{x}'$ now has the largest $f_1$ value and no other solution in $P$ can dominate it, it will be included into $P$, and $\hat{\bm{x}}$ now becomes $\bm{x}'$. Thus, $f_1(\hat{\bm{x}})$ can increase by at least a factor of $(1+\epsilon/(rn))$ with probability at least $1/(ern^2)$ in each iteration. Such an increase on $f_1(\hat{\bm{x}})$ is called a successful step. Thus, a successful step needs at most $ern^2$ expected number of iterations. It is also easy to see that until generating a $(1+\epsilon/(rn))$-approximate local optimum, the number of successful steps is at most \begin{align}\label{eq-mid-34}\log_{1+\frac{\epsilon}{rn}} \frac{\mathrm{OPT}}{\frac{2}{r(r-1)} \cdot\mathrm{OPT}} = O\left(\frac{rn}{\epsilon}\log r\right).\end{align} Thus, the expected number of iterations of phase (3) is at most
\begin{align}
ern^2 \cdot O\left(\frac{rn}{\epsilon}\log r\right) =  O\left(\frac{r^2n^3}{\epsilon}\log r\right).
\end{align}

By combining the expected number of iterations in the above three phases, we can conclude that the GSEMO requires at most
\begin{align}
ern(1+\log r)+ern^4+O\left(\frac{r^2n^3}{\epsilon}\log r\right)=O\left(rn^3\left(n+\frac{r\log r}{\epsilon}\right)\right)
\end{align}
iterations in expectation to achieve an approximation ratio of $1/2-\epsilon/(4n)$.\vspace{0.5em}
\end{myproof}

We also consider the more general case where the metric distance function $d$ is relaxed to satisfy Eq.~(\refeq{eq-triangle-relax}). Due to this relaxation, Eq.~(\refeq{eq-mid-30}) in Lemma~\ref{lemma:matroid-mid-2} will change to
\begin{align}
\sum^{|\bm{x} \setminus \bm{x}^*|}_{i=1} div(\bm{x} \setminus \{u_i\} \cup \{v_i\}) \geq (|\bm{x} \setminus \bm{x}^*|-2) \cdot div(\bm{x})+div(\bm{x}^*)/\alpha^2
\end{align}
accordingly, which leads to a change on Eq.~(\refeq{eq-mid-31}) in the proof of Lemma~\ref{lemma:matroid-mid-3} as
\begin{align}
&|\bm{x} \setminus \bm{x}^*|\cdot \left(1+\frac{\epsilon}{rn}\right)\cdot (f(\bm{x}) +\lambda \cdot div(\bm{x}))\\
&\geq(|\bm{x} \setminus \bm{x}^*|-2) \cdot f(\bm{x})+f(\bm{x}^*) + \lambda \cdot ((|\bm{x} \setminus \bm{x}^*|-2) \cdot div(\bm{x})+div(\bm{x}^*)/\alpha^2)\\
&\geq (|\bm{x} \setminus \bm{x}^*|-2) \cdot (f(\bm{x})+\lambda \cdot div(\bm{x}))+\mathrm{OPT}/\alpha^2.
\end{align}
Note that $\alpha \geq 1$. Thus, Eq.~(\refeq{eq-mid-33}) in Lemma~\ref{lemma:matroid-mid-3} changes to
\begin{align}f(\bm{x}) +\lambda \cdot div(\bm{x}) \geq \left(\frac{1}{2}-\frac{\epsilon}{4n}\right)\cdot \frac{1}{\alpha^2}\cdot \mathrm{OPT}.\end{align}
Following the proof of Theorem~\ref{theo-matroid-sum}, we have

\begin{corollary}\label{coro:matroid}
For the result diversification problem with a matroid constraint in Definition~\ref{def-prob-matroid}, where the diversity measure $div$ is the sum-diversity in Definition~\ref{def-sum-div} and the distance function $d$ satisfies Eq.~(\refeq{eq-triangle-relax}), the expected number of iterations of the GSEMO using Eq.~(\refeq{eq-bi-problem-4}), until finding a solution $\bm{x}$ with $\bm{x} \in \mathcal{F}$ and $f(\bm{x})+\lambda \cdot div(\bm{x}) \geq (1/2-\epsilon/(4n)) \cdot \mathrm{OPT}/\alpha^2$, is at most $O(rn^3(n+(r\log r)/\epsilon))$, where $\epsilon >0$.
\end{corollary}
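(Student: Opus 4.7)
The plan is to follow the proof of Theorem~\ref{theo-matroid-sum} almost verbatim, tracking the places where the metric triangle inequality for $d$ is invoked and replacing it with the relaxed inequality (\ref{eq-triangle-relax}) so that $\alpha$ appears only in the diversity estimates. Among the three supporting lemmas, only Lemma~\ref{lemma:matroid-mid-2} depends on $d$ being a metric; Lemma~\ref{lemma:matroid-property} is a pure matroid fact, and Lemma~\ref{lemma:matroid-mid-1} uses only submodularity and monotonicity of $f$. Both therefore carry over unchanged, and the job reduces to a careful re-derivation of Lemma~\ref{lemma:matroid-mid-2} together with a re-examination of the three-phase running-time argument.

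The key step is the relaxed version of Lemma~\ref{lemma:matroid-mid-2}. In the original pairing argument, the contribution of a pair $(u_i,v_i)$ to $\sum_i div(\bm{x}\setminus\{u_i\}\cup\{v_i\})$ is bounded from below by rewriting distances between points of $\bm{x}^*$ in terms of distances anchored at points of $\bm{x}$, which requires two applications of the triangle inequality per pair (one on each side of the swap). Each application now costs a factor $\alpha$ by (\ref{eq-triangle-relax}), so the bound on the $div(\bm{x}^*)$ term loses a factor $\alpha^2$ while the $div(\bm{x})$ term is unaffected, yielding
\begin{align}
\sum_{i=1}^{|\bm{x}\setminus\bm{x}^*|} div(\bm{x}\setminus\{u_i\}\cup\{v_i\}) \;\geq\; (|\bm{x}\setminus\bm{x}^*|-2)\cdot div(\bm{x}) + \frac{div(\bm{x}^*)}{\alpha^2}.
\end{align}
This is the place I expect the main technical obstacle to lie, since one must verify by tracking each distance rewrite that the exponent on $\alpha$ really is $2$ and no larger.

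Feeding this weaker inequality into the proof of Lemma~\ref{lemma:matroid-mid-3} changes only one thing: the $\mathrm{OPT}$ term on the right-hand side of Eq.~(\ref{eq-mid-31}) becomes $\mathrm{OPT}/\alpha^2$. The subsequent algebra (using $|\bm{x}\setminus\bm{x}^*|\leq r$ and simplifying $1/(2+\epsilon/n)$) is unchanged, so Eq.~(\ref{eq-mid-33}) weakens to
\begin{align}
f(\bm{x}) + \lambda\cdot div(\bm{x}) \;\geq\; \left(\frac{1}{2}-\frac{\epsilon}{4n}\right)\cdot \frac{\mathrm{OPT}}{\alpha^2}
\end{align}
whenever $\bm{x}$ is a $(1+\epsilon/(rn))$-approximate local optimum.

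Finally, the three-phase running-time analysis is reused as-is. Phase~(1), reaching a basis of $V$, depends only on the matroid structure. Phase~(2)'s lower bound $(2/(r(r-1)))\cdot\mathrm{OPT}$ comes from submodularity of $f$ and a sum-vs-max comparison on $d$, neither of which invokes the triangle inequality, so it still holds. Phase~(3)'s successful-step count in Eq.~(\ref{eq-mid-34}) is governed by the ratio $\mathrm{OPT}/((2/(r(r-1)))\mathrm{OPT}) = r(r-1)/2$, which does not involve $\alpha$ and still gives $O((rn/\epsilon)\log r)$ successful steps. Summing the three phases therefore yields the same $O(rn^3(n+(r\log r)/\epsilon))$ expected-iterations bound, while the weakened local-optimum guarantee delivers the claimed approximation ratio $(1/2-\epsilon/(4n))/\alpha^2$.
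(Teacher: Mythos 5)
Your proposal is correct and follows essentially the same route as the paper: the paper also isolates Lemma~\ref{lemma:matroid-mid-2} as the only place the metric assumption enters, weakens its conclusion to $(|\bm{x}\setminus\bm{x}^*|-2)\cdot div(\bm{x})+div(\bm{x}^*)/\alpha^2$, propagates this through Eq.~(\refeq{eq-mid-31}) in Lemma~\ref{lemma:matroid-mid-3} (using $\alpha\geq 1$ to absorb the unaffected $f(\bm{x}^*)$ term into $\mathrm{OPT}/\alpha^2$), and reuses the three-phase running-time analysis unchanged. The only nuance worth noting is that step of bundling $f(\bm{x}^*)+\lambda\cdot div(\bm{x}^*)/\alpha^2 \geq \mathrm{OPT}/\alpha^2$, which your write-up glosses over but which holds precisely because $\alpha\geq 1$.
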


In Section~\ref{sec-cardinality}, we have proved in Theorem~\ref{theo-cardinality-sum} that for the result diversification problem with a cardinality constraint and the sum-diversity, the GSEMO using Eq.~(\refeq{eq-bi-problem-1}) (i.e., maximizing $(1+|\bm{x}|/k)f(\bm{x})/2+\lambda \cdot div(\bm{x})$ and minimizing $|\bm{x}|$ simultaneously) can achieve an approximation ratio of $1/2$ in polynomial time. Because a cardinality constraint $|X| \leq k$ is a specific matroid, i.e., a uniform matroid $(V,\mathcal{F})$ with $\mathcal{F}=\{X \subseteq V \mid |X| \leq k\}$, we can also apply Theorem~\ref{theo-matroid-sum} directly to get that a polynomial-time approximation ratio of $1/2-\epsilon/(4n)$ can be achieved by the GSEMO using Eq.~(\refeq{eq-bi-problem-4}), i.e., maximizing the original objective function $f(\bm{x})+\lambda \cdot div(\bm{x})$ and the subset size $|\bm{x}|$ simultaneously.

In fact, for the most straightforward bi-objective reformulation Eq.~(\refeq{eq-bi-problem-1-v}), i.e., maximizing $f(\bm{x})+\lambda \cdot div(\bm{x})$ and minimizing $|\bm{x}|$ simultaneously, the GSEMO can still achieve an approximation ratio of $1/2-\epsilon/(4n)$ in polynomial time, as shown in Theorem~\ref{theo-cardinality-sum-v}. The proof is accomplished by following the behavior of both local search (which swaps two items) and greedy (which adds a single item greedily) operations, instead of only local search. By maximizing $f(\bm{x})+\lambda \cdot div(\bm{x})$ and $|\bm{x}|$ simultaneously, the GSEMO can always keep a basis in the population and then perform local search on the basis to achieve the desired approximation ratio, as shown in the proof of Theorem~\ref{theo-matroid-sum}. When maximizing $|\bm{x}|$ is changed to minimizing $|\bm{x}|$, a basis (i.e., a solution with size $k$) may, however, be dominated by a non-basis (i.e., a solution with size smaller than $k$), implying that using the proof of Theorem~\ref{theo-matroid-sum} directly is insufficient. In this case, the non-basis with the largest value of $f+\lambda \cdot div$ in the population can be improved by adding a single item greedily. If we use $\hat{\bm{x}}$ to denote the solution with the largest value of $f+\lambda \cdot div$ in the population, the situation of $\hat{\bm{x}}$ will thus change between $|\hat{\bm{x}}|=k$ (i.e., a basis) and $|\hat{\bm{x}}|<k$ (i.e., a non-basis). Fortunately, in these two situations, a single local search and greedy operation can lead to a sufficient improvement on the objective $f(\bm{x})+\lambda \cdot div(\bm{x})$, respectively, making the total required number of iterations polynomially upper bounded for an approximation ratio of $1/2-\epsilon/(4n)$. Note that for a cardinality constraint, the corresponding rank $r$ (i.e., the size of a basis) is just $k$.

\begin{myproof}{Theorem~\ref{theo-cardinality-sum-v}}
We consider the solution (denoted as $\hat{\bm{x}}$) with the largest $f_1$ value in the population, where $f_1=f+\lambda \cdot div$. Note that the size of $\hat{\bm{x}}$ must be the largest in the population, because the solutions in the population are incomparable. Let $\bm{x}^*$ denote an optimal solution, i.e., $f(\bm{x}^*)+\lambda \cdot div(\bm{x}^*)=\mathrm{OPT}$. We divide the optimization process into two phases: (1) starts from the initial solution $\bm{0}$ and finishes when $f_1(\hat{\bm{x}}) \geq \max\{(1-1/e)\cdot f(\bm{x}^*), (2/(k(k-1)))\cdot \mathrm{OPT}\}$; (2) starts after phase~(1) and finishes when achieving the desired approximation ratio, i.e., $f_1(\hat{\bm{x}}) \geq (1/2-\epsilon/(4n))\cdot \mathrm{OPT}$. Note that $f_1(\hat{\bm{x}})$ will never be decreased because $\hat{\bm{x}}$ cannot be weakly dominated by a solution with smaller $f_1$ value.

We first consider phase~(1). Following the proof of Lemma~\ref{lemma:cardinality-sum} and deleting the factor $1/2$ appearing before $f$, we can directly get that for any $\bm{x} \in \{0,1\}^n$ with $|\bm{x}| < k$, there exists one item $v \notin \bm{x}$ such that
\begin{align}\label{eq-mid-37}
& f(\bm{x} \cup \{v\})+\lambda \cdot div(\bm{x} \cup \{v\})-f(\bm{x})-\lambda \cdot div(\bm{x}) \geq \frac{f(\bm{x}\cup \bm{x}^*)-f(\bm{x})}{k} +\frac{\lambda |\bm{x}|}{k(k-1)}\cdot div(\bm{x}^*).
\end{align}
Note that we have kept the term $f(\bm{x}\cup \bm{x}^*)$ in the right-hand side of Eq.~(\refeq{eq-mid-37}), rather than using its lower bound $f(\bm{x}^*)$ as in the proof Lemma~\ref{lemma:cardinality-sum}. Because $div$ is non-negative, $f$ is monotone and $f_1=f+\lambda \cdot div$, Eq.~(\refeq{eq-mid-37}) implies
\begin{align}\label{eq-mid-38}
& f_1(\bm{x} \cup \{v\})-f_1(\bm{x}) \geq \frac{f(\bm{x}^*)-f_1(\bm{x})}{k}.
\end{align}
By induction with Eq.~(\refeq{eq-mid-38}), we can prove that after the GSEMO optimizing Eq.~(\ref{eq-bi-problem-1-v}) (i.e., maximizing $f_1(\bm{x})$ and minimizing $|\bm{x}|$ simultaneously) for at most $enk(k+1)$ expected number of iterations, it holds that
\begin{align}\label{eq-mid-39}
f_1(\hat{\bm{x}}) \geq \left(1-\left(1-\frac{1}{k}\right)^k\right)\cdot f(\bm{x}^*)\geq  \left(1-\frac{1}{e}\right)\cdot f(\bm{x}^*).
\end{align}
The proof idea is actually to follow the behavior of the greedy algorithm maximizing $f_1$, and for the full proof we refer to that of Theorem~2 in~\cite{friedrich2015maximizing}. We pessimistically assume that $f_1(\hat{\bm{x}})$ is currently less than $(2/(k(k-1)))\cdot \mathrm{OPT}$; otherwise, phase~(1) has been finished. According to the analysis of phase~(2) in the proof of Theorem~\ref{theo-matroid-sum}, we know that to make $f_1(\hat{\bm{x}}) \geq (2/(k(k-1)))\cdot \mathrm{OPT}$, it is sufficient to flip at most two specific 0-bits and 1-bits of $\hat{\bm{x}}$, occurring with probability at least $(1/(k+1))\cdot (1/n)^4(1-1/n)^{n-4}\geq 1/(e(k+1)n^4)$ in each iteration. Note that the population size here is upper bounded by $k+1$. Thus, the expected number of iterations of phase~(1) is at most $enk(k+1)+e(k+1)n^4=O(kn^4)$.

After phase~(1), it holds that $f_1(\hat{\bm{x}}) \geq \max\{(1-1/e)\cdot f(\bm{x}^*), (2/(k(k-1)))\cdot \mathrm{OPT}\}$. We then consider phase~(2), where there are two situations for $\hat{\bm{x}}$: $|\hat{\bm{x}}|=k$ and $|\hat{\bm{x}}|<k$. When $|\hat{\bm{x}}|=k$, i.e., $\hat{\bm{x}}$ is a basis, as $f_1(\hat{\bm{x}}) \geq (2/(k(k-1)))\cdot \mathrm{OPT}$, the analysis of phase~(3) in the proof of Theorem~\ref{theo-matroid-sum} has shown that it needs at most $O(k^2n^3(\log k)/\epsilon)$ expected number of iterations to generate a $(1+\epsilon/(kn))$-approximate local optimum, i.e., achieve an approximation ratio of $1/2-\epsilon/(4n)$. When $|\hat{\bm{x}}|<k$, we know from Eq.~(\refeq{eq-mid-37}) that there exists one item $v \notin \hat{\bm{x}}$ such that
\begin{align}\label{eq-mid-40}
& f_1(\hat{\bm{x}} \cup \{v\})-f_1(\hat{\bm{x}}) \geq \frac{\lambda}{k(k-1)}\cdot div(\bm{x}^*).
\end{align}
That is, by selecting $\hat{\bm{x}}$ in line~3 of Algorithm~\ref{algo:GSEMO} and flipping only one specific 0-bit (corresponding to the above item $v$) in line~4, occurring with probability $(1/|P|)\cdot (1/n)(1-1/n)^{n-1} \geq 1/(en(k+1))$ in one iteration, $f_1(\hat{\bm{x}})$ will increase by at least $(\lambda/(k(k-1)))\cdot div(\bm{x}^*)$. Such an event is called ``a successful event". Because $f_1(\hat{\bm{x}}) \geq (1-1/e)\cdot f(\bm{x}^*)$ after phase~(1), it is sufficient to further increase $f_1(\hat{\bm{x}})$ by $(\lambda/2) \cdot div(\bm{x}^*)$ for achieving an approximation ratio of $1/2-\epsilon/(4n)$, i.e., $f_1(\hat{\bm{x}}) \geq (1/2-\epsilon/(4n))\cdot (f(\bm{x}^*)+\lambda \cdot div(\bm{x}^*))$. This implies that the required number of successful events is at most
\begin{align}
\frac{(\lambda/2) \cdot div(\bm{x}^*)}{(\lambda/(k(k-1)))\cdot div(\bm{x}^*)}=\frac{k(k-1)}{2}.
\end{align}
Thus, the expected number of iterations is at most $en(k+1) \cdot (k(k-1)/2)=O(k^3n)$. Combining the two situations of $|\hat{\bm{x}}|=k$ and $|\hat{\bm{x}}|<k$, the expected number of iterations of phase~(2) is $O(k^2n^3(\log k)/\epsilon)+O(k^3n)=O(k^2n^3(\log k)/\epsilon)$.

By combining the expected number of iterations in the above two phases, the GSEMO using Eq.~(\ref{eq-bi-problem-1-v}) requires at most $O(kn^4)+O(k^2n^3(\log k)/\epsilon)=O(kn^3(n+(k\log k)/\epsilon))$ iterations in expectation to achieve an approximation ratio of $1/2-\epsilon/(4n)$.
\end{myproof}

The alert reader may have noted that the derived upper bound $O(kn^3(n+(k\log k)/\epsilon))$ on the expected number of iterations in Theorem~\ref{theo-cardinality-sum-v} is the same as that in Theorem~\ref{theo-matroid-sum}, whose proof only follows the behavior of local search. We can find from the proof of Theorem~\ref{theo-cardinality-sum-v} that this is because the required number of iterations to follow the greedy behavior is dominated by that to follow local search.

\section{Analysis under Dynamic Environments}

In real-world scenarios, the quality function $f$ or the distance $d$ may change over time, which results in the dynamic change of the objective function $f+\lambda \cdot div$. Thus, in this section, we consider the dynamic version of the result diversification problem with a matroid constraint in Definition~\ref{def-prob-matroid}. Note that after each change, we can view the problem as a static problem with a new objective function, and run an algorithm from scratch, which, however, may lead to a significantly different solution. Thus, we usually want to maintain the solution quality by modifying the current solution without completely recomputing it. As in~\cite{borodin2017max,neumann2020analysis}, our main focus is the ability of an algorithm adapting to the change of the objective. That is, starting from a solution with a good approximation ratio for the old objective, we concern the running time of an algorithm until regaining a solution with the same approximation ratio for the new objective. Note that after dynamic change, we assume that the new quality is still monotone submodular and the new distance is still a metric.

For the result diversification problem with a matroid constraint, we have introduced in Section~\ref{sec-problem} that the local search algorithm in Algorithm~\ref{alg:local-search} can achieve a $(1/2)$-approximation ratio~\cite{borodin2017max}. Thus, a natural question is whether it can maintain a $(1/2)$-approximation ratio efficiently after the objective $f+\lambda \cdot div$ changes. Borodin \textit{et al.}~\cite{borodin2017max} have showed that for the specific case where $f$ is a modular function and the constraint is a cardinality constraint, if the magnitude of dynamic change is restricted, the local search algorithm can maintain a $(1/3)$-approximation ratio by only a single greedy swap operation. But for the general problem under dynamic environments, it is not yet clear whether the local search algorithm can maintain a $(1/2)$-approximation ratio in polynomial running time. In fact, it is still an open question whether there exists an algorithm which can maintain the optimal approximation ratio of $1/2$ in polynomial running time~\cite{borodin2017max}.

In this section, we will analyze the performance of the GSEMO for the dynamic result diversification problem. EAs are inspired from natural phenomena which have been successfully processed in dynamic natural environments, and hence the algorithmic simulations are also likely to be able to adapt the dynamic changes. The good performance of EAs has been theoretically shown on diverse dynamic optimization problems, including some artificial problems~\cite{droste2002analysis,droste2003analysis,kotzing20151+,rohlfshagen2009dynamic,shi2019reoptimization}, shortest path~\cite{lissovoi2015runtime}, makespan scheduling~\cite{neumann2015runtime}, vertex cover~\cite{pourhassan2015maintaining,pourhassan2017improved,shi2018runtime}, graph coloring~\cite{bossek2019runtime}, chance-constrained knapsack~\cite{assimi2020evolutionary} and subset selection~\cite{bian2021dynamic,aaai2021dynamic,aaai2019dynamic}. We prove in Theorem~\ref{theo-dynamic} that once seeing a change, the GSEMO can regain an asymptotically optimal approximation ratio of $1/2-\epsilon/(4n)$ after running $O(rn^3(n+(r\log r)/\epsilon))$ iterations in expectation.

\begin{theorem}\label{theo-dynamic}
For the result diversification problem with a matroid constraint in Definition~\ref{def-prob-matroid}, where the diversity measure $div$ is the sum-diversity in Definition~\ref{def-sum-div}, let $\bm{x}_{\mathrm{old}}$ denote a basis of $V$ with a $(1/2-\epsilon/(4n))$-approximation ratio, where $\epsilon >0$. After the quality function $f$ or the distance $d$ changes, the GSEMO using Eq.~(\refeq{eq-bi-problem-4}) and starting from $\bm{x}_{\mathrm{old}}$, finds a basis $\bm{x}_{\mathrm{new}}$ of $\,V$ with a $(1/2-\epsilon/(4n))$-approximation ratio for the new objective function by running $O(rn^3(n+(r\log r)/\epsilon))$ expected number of iterations.
\end{theorem}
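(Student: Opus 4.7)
The plan is to mirror the three-phase argument in the proof of Theorem~\ref{theo-matroid-sum}, using the fact that starting from the basis $\bm{x}_{\mathrm{old}}$ lets us skip Phase~(1) entirely. First I would establish the invariant that the population $P$, initialized to $\{\bm{x}_{\mathrm{old}}\}$, always contains a basis of $V$: since $|\bm{x}_{\mathrm{old}}|=r$ is the maximum feasible size, a basis can only be removed from $P$ if a newly generated solution weakly dominates it, and any such dominator must itself have $f_2$-value $r$ and lie in $\mathcal{F}$, hence also be a basis. There is therefore always a unique basis $\hat{\bm{x}}\in P$ (uniqueness follows because any two bases share $f_2=r$ and are comparable via $f_1$), and the all-zeros solution can never enter $P$, giving $|P|\leq r$. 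Because Lemmas~\ref{lemma:matroid-property}, \ref{lemma:matroid-mid-1}, \ref{lemma:matroid-mid-2} and \ref{lemma:matroid-mid-3} rely only on monotone submodularity of $f$, the metric property of $d$, and the matroid structure, they transplant verbatim to the new objective.

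Next I would rerun the Phase~(2) argument on the new objective. Let $\bm{x}^*$ denote an optimal solution for the new objective and set $\{u^*,v^*\}=\arg\max_{u,v\in\bm{x}^*} f(\{u,v\})+\lambda\cdot d(u,v)$. Submodularity together with $|\bm{x}^*|=r$ yields $f(\{u^*,v^*\})+\lambda\cdot d(u^*,v^*)\geq (2/(r(r-1)))\cdot\mathrm{OPT}$. By Lemma~\ref{lemma:matroid-property}, at most two coordinated 0-bit/1-bit swaps on $\hat{\bm{x}}$ produce a basis containing both $u^*$ and $v^*$, whose objective value is at least $(2/(r(r-1)))\cdot\mathrm{OPT}$ by monotonicity. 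Such a mutation has probability at least $(1/r)\cdot(1/n)^4(1-1/n)^{n-4}\geq 1/(ern^4)$ per iteration, so this phase finishes in $O(rn^4)$ expected iterations.

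Then I would rerun the Phase~(3) local-search argument. As long as $\hat{\bm{x}}$ is not a $(1+\epsilon/(rn))$-approximate local optimum, a single swap on $\hat{\bm{x}}$ (one specific 0-bit and one specific 1-bit flipped, probability at least $1/(ern^2)$) produces a basis with $f_1$-value improved by a factor $1+\epsilon/(rn)$ that replaces $\hat{\bm{x}}$ in $P$. Since $f_1(\hat{\bm{x}})\geq (2/(r(r-1)))\cdot\mathrm{OPT}$ after Phase~(2) and is bounded above by $\mathrm{OPT}$, the number of such improvements is at most $\log_{1+\epsilon/(rn)}(r(r-1)/2)=O((rn\log r)/\epsilon)$, so Phase~(3) needs $O(r^2 n^3(\log r)/\epsilon)$ expected iterations. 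When the local-optimality condition holds, Lemma~\ref{lemma:matroid-mid-3} delivers $f_1(\hat{\bm{x}})\geq(1/2-\epsilon/(4n))\cdot\mathrm{OPT}$, the claimed guarantee. Summing the two phases gives $O(rn^4)+O(r^2 n^3(\log r)/\epsilon)=O(rn^3(n+(r\log r)/\epsilon))$.

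The main obstacle, and the conceptually interesting point, is to justify carefully that the argument really does transplant without any hidden dependence on the pre-change objective: the Phase~(2) and Phase~(3) analyses must invoke nothing about $\bm{x}_{\mathrm{old}}$ beyond its being a basis, and in particular must not use its approximation ratio for the old objective anywhere. Once this is in hand, the bound is the same as in the static analysis and is independent of the magnitude of the dynamic change. This is precisely what makes the GSEMO robust to dynamic changes via its bit-wise mutation operator, and it stands in contrast to local search, which from $\bm{x}_{\mathrm{old}}$ may be trapped at a local optimum for the old objective that requires multiple simultaneous swaps to escape under the new objective.
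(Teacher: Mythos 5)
Your proposal is correct and follows essentially the same route as the paper: the paper's proof simply observes that $\bm{x}_{\mathrm{old}}$ is still a basis of $V$ under the unchanged matroid, so phase (1) of the proof of Theorem~\ref{theo-matroid-sum} is already complete, and then reuses phases (2) and (3) verbatim for the new objective to obtain the bound $ern^4+O(r^2n^3(\log r)/\epsilon)=O(rn^3(n+(r\log r)/\epsilon))$. The extra care you take (the basis-retention invariant, $|P|\leq r$, and checking that the supporting lemmas depend only on the post-change monotone submodularity, metric property, and matroid structure) is consistent with, and slightly more explicit than, what the paper writes.
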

\begin{proof}
It can be proved by directly following the phases (2) and (3) in the proof of Theorem~\ref{theo-matroid-sum}.

To prove a $(1/2-\epsilon/(4n))$-approximation ratio of the GSEMO when starting from the all-0s solution, the optimization process has been divided into three phases in the proof of Theorem~\ref{theo-matroid-sum}. The goal of phase (1) is to find a basis of $V$. After that, phase (2) is to find a basis of $V$ with the objective value at least $(2/(r(r-1)))\cdot \mathrm{OPT}$. Finally, phase (3) is to find a basis with the desired approximation ratio of $1/2-\epsilon/(4n)$.

Now, the GSEMO starts from $\bm{x}_{\mathrm{old}}$, which is a basis of $V$ with a $(1/2-\epsilon/(4n))$-approximation ratio for the old objective. Though the objective value of $\bm{x}_{\mathrm{old}}$ may be arbitrarily bad for the new objective, it is still a basis of $V$ for the problem with the new objective, because the matroid constraint does not change. Thus, the goal of phase (1) in the proof of Theorem~\ref{theo-matroid-sum} has already been reached, and we only need to reuse the processes of phases (2) and (3) to derive the expected number of iterations of the GSEMO, required to make the objective value of the basis be at least $(2/(r(r-1)))\cdot \mathrm{OPT}$ and $(1/2-\epsilon/(4n))\cdot \mathrm{OPT}$, respectively. Based on the proof of Theorem~\ref{theo-matroid-sum}, we know that the expected number of iterations for phases (2) and (3) is at most $ern^4$ and $O(r^2n^3\log r/\epsilon)$, respectively, and thus, the expected number of iterations for the GSEMO regaining a $(1/2-\epsilon/(4n))$-approximation ratio for the new objective is at most $ern^4+O(r^2n^3\log r/\epsilon)=O(rn^3(n+(r\log r)/\epsilon))$.
\end{proof}

We can also find why the local search algorithm cannot use the phases (2) and (3) in the proof of Theorem~\ref{theo-matroid-sum} to maintain a $(1/2-\epsilon/(4n))$-approximation ratio in polynomial running time. The goal of phase (2), i.e., making the objective value of the basis of $V$ lower bounded, is to ensure that the number of successful steps required in phase (3) can be polynomially upper bounded, i.e., Eq.~(\refeq{eq-mid-34}). Based on the analysis of phase (2), we know that it may require deleting two existing items in the current basis and adding two new items, which, however, cannot be accomplished by the local search algorithm. The local search algorithm in Algorithm~\ref{alg:local-search} can only perform a single swap operation (i.e., delete an old item and add a new item) in each iteration. As shown in the proof of Theorem~\ref{theo-matroid-sum}, such a behavior can be accomplished by the bit-wise mutation operator of the GSEMO, disclosing the robustness of the mutation operator of EAs against dynamic changes.

Based on the above finding, we can also modify the local search algorithm to allow at most two swaps, instead of only one swap, in each iteration. This will obviously make the local search algorithm able to follow the phases (2) and (3) in the proof of Theorem~\ref{theo-matroid-sum}, and then maintain a $(1/2-\epsilon/(4n))$-approximation ratio. We know from the proof of Theorem~\ref{theo-matroid-sum} that phase (2) requires only one local search step, while phase (3) requires $O(rn(\log r)/\epsilon)$ steps. As a local search step of allowing at most two swaps is performed in time $O(r^2(n-r)^2)$, the total running time is $O(r^3n(n-r)^2(\log r)/\epsilon)$. Note that it is incomparable with the expected time, i.e., $O(rn^3(n+(r\log r)/\epsilon))$, of the GSEMO. For example, when both $r$ and $\epsilon$ are constants, the running time upper bound of local search will be smaller; when $r=n/2$, that of the GSEMO will be smaller.

\section{Experiments}

In this section, we will examine the practical performance of the GSEMO on the applications, i.e., web-based search~\cite{borodin2017max}, multi-label feature selection~\cite{ghadiri2019distributed} and document summarization~\cite{dasgupta2013summarization}, of the result diversification problem with a cardinality constraint in Definition~\ref{def-prob-cardinality}. Note that the diversity $div$ is set as the commonly used sum-diversity in Definition~\ref{def-sum-div}, and the bi-objective reformulation in Eq.~(\refeq{eq-bi-problem-1}) is used for the GSEMO. We compare the GSEMO with the previous best algorithm, i.e., the greedy algorithm~\cite{borodin2017max}. As a cardinality constraint is a specific matroid constraint, local search~\cite{borodin2017max} can also be used for comparison. Here we even use an improved version of local search, which starts from the output solution of the greedy algorithm, and iteratively improves it by swapping two items until reaching a local maximum. The number of iterations of the GSEMO is set to $enk^3/2$ as suggested by Theorem~\ref{theo-cardinality-sum}. Note that besides the objective value $f+\lambda \cdot div$, there can be other evaluation measures (e.g., subset accuracy for multi-label feature selection and ROUGE score for document summarization) for the solution generated by each algorithm. But we will compare the objective value $f+\lambda \cdot div$ of the solutions only, which corresponds to the optimization performance of each algorithm and is the main focus of this paper. The codes and data sets are provided in \url{https://github.com/paper-submission-rafa/diversification-code}.

\subsection{Web-based Search}

Let $V=\{v_1,v_2,\ldots,v_n\}$ denote a set of documents. The goal is to find a small non-redundant subset of $V$ which is relevant to a query as much as possible. As in~\cite{borodin2017max}, we assume that the relevance of one document to a query can be treated independently from the relevance of other documents. That is, the quality $f(X)=\sum_{v \in X} f(v)$ is a modular function. The tradeoff parameter $\lambda$ between quality and diversity is set to $1.0$. We use a synthetic data set and a real-world data set \emph{letor} as in~\cite{borodin2017max}. For the synthetic one, the number $n$ of documents is set to 500, the relevance $f(v)$ of each document is randomly sampled from $[0,1]$, and the distance $d(v_i,v_j)$ between two documents is randomly sampled from $[1,2]$, which must be a metric. The data set \emph{letor} is widely used for learning to rank~\cite{qin2010letor}, where the relevance $f(v)$ of each document to a query belongs to $\{0,1,2,3,4\}$, and the distance $d(v_i,v_j)$ is the cosine similarity between the feature vectors of $v_i$ and $v_j$.

We generate 50 synthetic data sets by sampling $f(v)$ and $d(v_i,v_j)$ randomly. For \emph{letor}, we randomly sample 50 queries, and then randomly select 370 documents (i.e., $n=370$) for each query, thus also generating 50 data sets. The budget $k$ is set as $\{15,20,\ldots,50\}$. The average results of the algorithms over the 50 data sets are reported in Table~\ref{tab-web}. As expected, the objective value achieved by each algorithm increases with $k$ due to the monotonicity of the objective. Local search can improve the solution generated by the greedy algorithm except for $k=15$ on \emph{letor}, while the GSEMO always achieves the best average objective value. Moreover, the GSEMO is always significantly better than the runner-up local search by the Wilcoxon signed-rank test~\cite{wilcoxon1945individual} with confidence level $0.05$, except for $k\in\{15,20,25,40\}$ on \emph{letor}. For the Wilcoxon test, we use a two-sided test by default. Note that the standard deviation of each algorithm on \emph{letor} is much larger than that on the synthetic data set, which may be because the difference among the 50 generated data sets of \emph{letor} is larger. We can also see that the standard deviation of the GSEMO is the smallest on the same data set, implying its relatively good stability.

\begin{table*}[ht!]\caption{The objective $f+\lambda \cdot div$ value (mean$\pm$std) of the algorithms on the synthetic and \emph{letor} data sets for $\lambda=1.0$ and $k \in \{15,20,\ldots,50\}$. For each $k$, the largest objective value is bolded, and `$\bullet$' denotes that the GSEMO is significantly better than the corresponding algorithm by the Wilcoxon signed-rank test with confidence level $0.05$.}\label{tab-web}
\scriptsize
\begin{center}
\setlength{\tabcolsep}{0.75mm}{
\begin{tabular}{c|llllllll}
\hline\noalign{\vspace{0.2em}}
\multicolumn{9}{c}{synthetic data set ($n=500$)}\\
\hline
$k$  & \multicolumn{1}{c}{$15$} & \multicolumn{1}{c}{$20$} & \multicolumn{1}{c}{$25$} & \multicolumn{1}{c}{$30$} &\multicolumn{1}{c}{$35$} & \multicolumn{1}{c}{$40$} & \multicolumn{1}{c}{$45$} & \multicolumn{1}{c}{$50$}\\
\hline
Greedy & 193.9$\pm$1.40$\bullet$ & 338.1$\pm$1.86$\bullet$ & 521.2$\pm$2.37$\bullet$ & 742.6$\pm$2.93$\bullet$ & 1002.3$\pm$3.59$\bullet$ & 1299.9$\pm$4.35$\bullet$ & 1635.7$\pm$5.07$\bullet$ & 2009.5$\pm$5.85$\bullet$ \\
Local Search  & 194.7$\pm$1.25$\bullet$ & 339.4$\pm$1.59$\bullet$ & 523.0$\pm$2.12$\bullet$ & 744.7$\pm$3.08$\bullet$ & 1005.6$\pm$3.19$\bullet$ & 1303.4$\pm$3.95$\bullet$ & 1640.7$\pm$5.01$\bullet$ & 2014.4$\pm$5.63$\bullet$ \\
GSEMO  & \bf{195.5$\pm$0.86} & \bf{340.5$\pm$1.46} & \bf{524.8$\pm$1.90} & \bf{747.0$\pm$2.30} & \bf{1008.1$\pm$3.10} & \bf{1306.8$\pm$2.78} & \bf{1644.7$\pm$3.23} & \bf{2020.2$\pm$4.09} \\
\hline\hline\noalign{\vspace{0.2em}}
\multicolumn{9}{c}{\emph{letor} data set ($n=370$)}\\
\hline
Greedy & 131.9$\pm$9.69$\bullet$ & 221.9$\pm$14.33$\bullet$ & 335.4$\pm$20.88$\bullet$ & 472.2$\pm$29.60$\bullet$ & 633.0$\pm$39.59$\bullet$ & 818.2$\pm$50.35$\bullet$ & 1027.7$\pm$61.97$\bullet$ & 1261.6$\pm$73.71$\bullet$\\
Local Search & 131.9$\pm$9.71 & 223.9$\pm$12.11 & 338.3$\pm$17.55 & 478.8$\pm$20.33 $\bullet$ & 644.8$\pm$18.39$\bullet$ & 833.3$\pm$20.82 & 1046.2$\pm$23.33$\bullet$ & 1283.5$\pm$25.93$\bullet$\\
GSEMO & \bf{134.3$\pm$8.12} & \bf{226.3$\pm$10.50} & \bf{342.3$\pm$13.15} & \bf{482.2$\pm$15.79} & \bf{646.3$\pm$18.30} & \bf{835.1$\pm$20.37} & \bf{1048.4$\pm$22.70} & \bf{1286.2$\pm$24.88}\\
\hline
\end{tabular}}
\end{center}
\end{table*}

The results by fixing the budget $k=20$ and varying the tradeoff parameter $\lambda \in \{0.1,0.2,\ldots,1.0\}$ are shown in Table~\ref{tab-web-lambda}. As expected, the objective value $f+\lambda \cdot div$ achieved by each algorithm increases with $\lambda$. We can observe that local search can always improve the solution generated by the greedy algorithm, and the GSEMO can still always achieve the best average objective value. By the Wilcoxon signed-rank test with confidence level $0.05$, the GSEMO is significantly better than the runner-up local search in all cases of the synthetic data set, and four cases (i.e., $\lambda\in\{0.5,0.6,0.8,0.9\}$) of the \emph{letor} data set.

\begin{table*}[ht!]\centering\caption{The objective $f+\lambda \cdot div$ value (mean$\pm$std) of the algorithms on the synthetic and \emph{letor} data sets for $k=20$ and $\lambda \in \{0.1,0.2,\ldots,1.0\}$. For each $\lambda$, the largest objective value is bolded, and `$\bullet$' denotes that the GSEMO is significantly better than the corresponding algorithm by the Wilcoxon signed-rank test with confidence level $0.05$.}\label{tab-web-lambda}\vspace{1em}
\scriptsize
\newsavebox{\tablebox}
\begin{lrbox}{\tablebox}
\setlength{\tabcolsep}{0.3mm}{
\begin{tabular}{c|llllllllll}
\hline\noalign{\vspace{0.2em}}
\multicolumn{11}{c}{synthetic data set ($n=500$)}\\
\hline
$\lambda$  & \multicolumn{1}{c}{$0.1$} & \multicolumn{1}{c}{$0.2$} & \multicolumn{1}{c}{$0.3$} & \multicolumn{1}{c}{$0.4$} &\multicolumn{1}{c}{$0.5$} & \multicolumn{1}{c}{$0.6$} & \multicolumn{1}{c}{$0.7$} & \multicolumn{1}{c}{$0.8$} & \multicolumn{1}{c}{$0.9$} & \multicolumn{1}{c}{$1.0$}\\
\hline
Greedy & 49.8$\pm$0.35$\bullet$ & 80.9$\pm$0.66$\bullet$ & 112.4$\pm$0.77$\bullet$ & 144.5$\pm$0.92$\bullet$ & 176.7$\pm$1.19$\bullet$ & 208.9$\pm$1.28$\bullet$ & 241.2$\pm$1.44$\bullet$ & 273.4$\pm$1.47$\bullet$ & 305.7$\pm$1.59$\bullet$ & 338.1$\pm$1.86$\bullet$\\
Local Search & 50.0$\pm$0.29$\bullet$ & 81.4$\pm$0.47$\bullet$ & 113.0$\pm$0.73$\bullet$ & 145.2$\pm$0.76$\bullet$ & 177.5$\pm$0.95$\bullet$ & 209.9$\pm$1.25$\bullet$ & 242.2$\pm$1.33$\bullet$ & 274.3$\pm$1.40$\bullet$ & 307.0$\pm$1.51$\bullet$ & 339.4$\pm$1.59$\bullet$\\
GSEMO  & \bf{50.2$\pm$0.21} & \bf{81.8$\pm$0.38} & \bf{113.9$\pm$0.54} & \bf{145.9$\pm$0.67} & \bf{178.2$\pm$0.76} & \bf{210.6$\pm$0.92} & \bf{243.0$\pm$1.07} & 2\bf{75.4$\pm$0.99} & \bf{308.3$\pm$1.35} & \bf{340.5$\pm$1.46}\\
\hline\hline\noalign{\vspace{0.2em}}
\multicolumn{11}{c}{\emph{letor} data set ($n=370$)}\\
\hline
Greedy & 59.1$\pm$11.43$\bullet$ & 76.8$\pm$11.22$\bullet$ & 94.7$\pm$11.21$\bullet$ & 112.7$\pm$11.30$\bullet$ & 130.4$\pm$12.24$\bullet$ & 148.1$\pm$13.00$\bullet$ & 166.6$\pm$13.24$\bullet$ & 185.1$\pm$13.51$\bullet$ & 203.4$\pm$13.90$\bullet$ & 221.9$\pm$14.33$\bullet$\\
Local Search & \bf{59.3$\pm$11.43}  & \bf{77.4$\pm$11.14} & \bf{95.7$\pm$11.00} & \bf{114.2$\pm$10.92}  & 132.4$\pm$10.80 $\bullet$ & 150.3$\pm$11.40 $\bullet$ & 169.0$\pm$11.44 & 186.9$\pm$11.85 $\bullet$ & 205.3$\pm$11.89$\bullet$ & 223.9$\pm$12.11\\
GSEMO & \bf{59.3$\pm$11.43} & \bf{77.4$\pm$11.14} & \bf{95.7$\pm$11.01} & \bf{114.2$\pm$10.92} & \bf{132.8$\pm$10.84} & \bf{151.4$\pm$10.78} & \bf{170.1$\pm$10.70} & \bf{188.8$\pm$10.62} & \bf{207.5$\pm$10.58} & \bf{226.3$\pm$10.50}\\
\hline
\end{tabular}}
\end{lrbox}
\scalebox{0.9}{\usebox{\tablebox}}
\end{table*}

\subsection{Multi-label Feature Selection}

Let $V$ denote a set of features, and $L$ denote a set of labels. The goal is to find a small non-redundant subset of $V$ which can predict labels in $L$ accurately. The predicted quality of a subset $X$ of features is measured by $f(X)=\sum_{l\in L} \mathrm{top}^p_{v\in X} \{\mathit{MI}(v,l)\}$, where $\mathrm{top}^p_{v\in X} \{\mathit{MI}(v,l)\}$ is the sum of the $p$ largest values in $\{\mathit{MI}(v,l)\mid v \in X\}$, $\mathit{MI}(v,l)=I(v,l)/\sqrt{H(v)H(l)}$, $I(\cdot,\cdot)$ is the mutual information and $H(\cdot)$ is the entropy. It satisfies the monotone submodular property~\cite{ghadiri2019distributed}. The distance $d(v_i,v_j)$ between features is measured by $1-I(v_i,v_j)/H(v_i,v_j)$, which is a metric~\cite{vinh2010information}. $p$ is set to $10$, and the tradeoff parameter $\lambda$ is set to 0.5.

We use two multi-label data sets \emph{enron} (1001 \#feat, 53 \#labels) and \emph{medical} (1449 \#feat, 45 \#labels) from \url{http://mulan.sourceforge.net/datasets-mlc.html}. The results are shown in Table~\ref{tab-feature}. As the data set is fixed, the greedy algorithm and local search have only one output objective value. But for the GSEMO, which is a randomized algorithm, we repeat its run ten times independently, and report the average objective value and the standard deviation. Note that the standard deviation of the GSEMO is 0 sometimes, which is because the same good solutions are found in the ten runs. We can see that local search brings little improvement on the solution generated by the greedy algorithm, while the GSEMO can achieve much larger objective values, and is always significantly better than the greedy algorithm and local search.

\begin{table*}[ht!]\caption{The objective $f+\lambda \cdot div$ value (mean$\pm$std) of the algorithms on the \emph{enron} and \emph{medical} data sets for $\lambda=0.5$ and $k \in \{15,20,\ldots,50\}$. For each $k$, the largest objective value is bolded, and `$\bullet$' denotes that the GSEMO is significantly better than the corresponding algorithm by the Wilcoxon rank-sum test with confidence level $0.05$.}\label{tab-feature}
\scriptsize
\begin{center}
\setlength{\tabcolsep}{1mm}{
\begin{tabular}{c|llllllll}
\hline\noalign{\vspace{0.2em}}
\multicolumn{9}{c}{\emph{enron} data set ($n=1001$)}\\
\hline
$k$  & \multicolumn{1}{c}{$15$} & \multicolumn{1}{c}{$20$} & \multicolumn{1}{c}{$25$} & \multicolumn{1}{c}{$30$} &\multicolumn{1}{c}{$35$} & \multicolumn{1}{c}{$40$} & \multicolumn{1}{c}{$45$} & \multicolumn{1}{c}{$50$}\\
\hline
Greedy &58.9$\bullet$ & 102.3$\bullet$ & 158.1$\bullet$ & 226.3$\bullet$ & 307.7$\bullet$ & 400.9$\bullet$ & 505.7$\bullet$ & 622.7$\bullet$\\
Local Search & 59.0$\bullet$ & 102.4$\bullet$ & 158.2$\bullet$ & 226.4$\bullet$ & 307.8$\bullet$ & 401.0$\bullet$ & 505.9$\bullet$ & 622.9$\bullet$\\
GSEMO & \bf{64.7$\pm$0.00} & \bf{108.0$\pm$0.01} & \bf{163.2$\pm$0.02} & \bf{230.7$\pm$0.00} & \bf{310.5$\pm$0.02} & \bf{402.6$\pm$0.02} & \bf{507.1$\pm$0.01} & \bf{624.0$\pm$0.01}\\
\hline\hline\noalign{\vspace{0.2em}}
\multicolumn{9}{c}{\emph{medical} data set ($n=1449$)}\\
\hline
Greedy & 79.4$\bullet$ & 122.4$\bullet$ & 179.1$\bullet$ & 246.5$\bullet$ & 326.4$\bullet$ & 424.1$\bullet$ & 528.8$\bullet$ & 645.9$\bullet$\\
Local Search & 79.5$\bullet$ & 122.4$\bullet$ & 179.2$\bullet$ & 246.6$\bullet$ & 326.5$\bullet$ & 424.2$\bullet$ & 528.9$\bullet$ & 646.0$\bullet$\\
GSEMO &\bf{95.6$\pm$0.00} & \bf{141.9$\pm$0.04} & \bf{199.8$\pm$0.04} & \bf{269.5$\pm$0.04} & \bf{351.2$\pm$0.05} & \bf{444.8$\pm$0.00} & \bf{550.5$\pm$0.01} & \bf{668.2$\pm$0.01}\\
\hline
\end{tabular}}
\end{center}
\end{table*}

Table~\ref{tab-feature-lambda} shows the results when the budget $k=20$ and the tradeoff parameter $\lambda$ is varied from $0.1$ to $1$ with a gap of $0.1$. As we have observed in Table~\ref{tab-feature}, local search brings little improvement on the solution generated by the greedy algorithm, while the GSEMO can achieve much larger objective values. Furthermore, the GSEMO is always significantly better than the greedy algorithm and local search by the Wilcoxon rank-sum test~\cite{wilcoxon1945individual} with confidence level $0.05$.

\begin{table*}[ht!]\caption{The objective $f+\lambda \cdot div$ value (mean$\pm$std) of the algorithms on the \emph{enron} and \emph{medical} data sets for $k=20$ and $\lambda \in \{0.1,0.2,\ldots,1.0\}$. For each $\lambda$, the largest objective value is bolded, and `$\bullet$' denotes that the GSEMO is significantly better than the corresponding algorithm by the Wilcoxon rank-sum test with confidence level $0.05$.}\label{tab-feature-lambda}
\scriptsize
\begin{center}
\setlength{\tabcolsep}{0.5mm}{
\begin{tabular}{c|llllllllll}
\hline\noalign{\vspace{0.2em}}
\multicolumn{11}{c}{\emph{enron} data set ($n=1001$)}\\
\hline
$\lambda$  & \multicolumn{1}{c}{$0.1$} & \multicolumn{1}{c}{$0.2$} & \multicolumn{1}{c}{$0.3$} & \multicolumn{1}{c}{$0.4$} &\multicolumn{1}{c}{$0.5$} & \multicolumn{1}{c}{$0.6$} & \multicolumn{1}{c}{$0.7$} & \multicolumn{1}{c}{$0.8$} & \multicolumn{1}{c}{$0.9$} & \multicolumn{1}{c}{$1.0$}\\
\hline
Greedy & 26.6$\bullet$ & 45.5$\bullet$ & 64.4$\bullet$ & 83.4$\bullet$ & 102.3$\bullet$ & 121.3$\bullet$ & 140.2$\bullet$ & 159.1$\bullet$ & 178.1$\bullet$ & 197.0$\bullet$\\
Local Search & 26.6$\bullet$& 45.5$\bullet$& 64.5$\bullet$& 83.4$\bullet$& 102.4$\bullet$& 121.3$\bullet$& 140.3$\bullet$& 159.2$\bullet$& 178.2$\bullet$& 197.1$\bullet$\\
GSEMO  & \bf{35.3$\pm$0.00} & \bf{53.1$\pm$0.00} & \bf{71.2$\pm$0.01} & \bf{89.6$\pm$0.01} & \bf{108.0$\pm$0.01} & \bf{126.5$\pm$0.01} & \bf{145.0$\pm$0.02} & \bf{163.7$\pm$0.02} & \bf{182.4$\pm$0.02} & \bf{201.1$\pm$0.02}\\
\hline\hline\noalign{\vspace{0.2em}}
\multicolumn{11}{c}{\emph{medical} data set ($n=1449$)}\\
\hline
Greedy &46.7$\bullet$ & 65.6$\bullet$ & 84.5$\bullet$ & 103.4$\bullet$ & 122.4$\bullet$ & 141.3$\bullet$ & 160.2$\bullet$ & 179.1$\bullet$ & 198.0$\bullet$ & 216.9$\bullet$\\
Local Search & 46.7$\bullet$& 65.6$\bullet$& 84.6$\bullet$& 103.5$\bullet$& 122.4$\bullet$& 141.4$\bullet$& 160.3$\bullet$& 179.2$\bullet$& 198.1$\bullet$& 217.1$\bullet$\\
GSEMO & \bf{68.3$\pm$0.00} & \bf{86.6$\pm$0.02} & \bf{104.9$\pm$0.01} & \bf{123.4$\pm$0.05} & \bf{141.9$\pm$0.04} & \bf{160.4$\pm$0.03} & \bf{179.0$\pm$0.05} & \bf{197.5$\pm$0.05} & \bf{216.1$\pm$0.06} & \bf{234.7$\pm$0.00}
\\
\hline
\end{tabular}}
\end{center}
\end{table*}

\subsection{Document Summarization}

Let $\mathcal{C}$ denote a collection of documents, and $V=\{v_1,v_2,\ldots,v_n\}$ denote the set of sentences contained by $\mathcal{C}$. The goal is to find a small non-redundant subset of $V$ which can make a good summary. As in~\cite{dasgupta2013summarization}, each sentence is represented by a set of dependency relations present in the sentence. A dependency relation is denoted by $(rel:a,b)$, where $rel$ is a relation type (e.g., nsubj) and $a$, $b$ are the two arguments present in the dependency relation. For example, the sentence ``We like sports" is represented by the two dependency relations (nsubj: like, We) and (dobj: like, sports). Based on this structured representation, a graph can be constructed by treating each sentence (i.e., a set of dependency relations) as a node and using the semantic similarity between two sentences as the edge weight. For two sentences $v_i$ and $v_j$, the semantic similarity $s(v_i,v_j)$ is calculated by $\sum_{(rel:a,b)\in v_i, (rel:a',b')\in v_j} \mathit{WN}(a,a')\times \mathit{WN}(b,b')$, where $\mathit{WN}(\cdot,\cdot)$ denotes the WordNet similarity score~\cite{patwardhan2006using} between two words. The quality function $f$ and the diversity can then be defined based on this graph.

The summary quality of a subset $X$ of sentences can be measured by a linear combination of three terms: popularity, cluster contribution and cover contribution~\cite{dasgupta2013summarization}. Popularity requires the selected subset $X$ of sentences to include popular views expressed across multiple documents. For each sentence (i.e., node on the graph) $v$, let $w(v)$ denote the number of documents such that at least one relation $rel$ in $v$ appears in a sentence of the document. Then, the popularity of $X$ is measured by $\sum_{v\in X} w(v)$. Cluster contribution requires the selected subset $X$ of sentences not to include too many sentences from the same document, and is measured by $\sum_{C \in \mathcal{C}} |X \cap C|^{1/2}$. For a sentence (i.e., node) $v$ and a subset $S$ of sentences, let $cov(v,S)=\sum_{u \in S} s(v,u)$, i.e., the sum of similarity between $v$ and the sentences in $S$. Cover contribution of the selected subset $X$ of sentences is defined as $\sum_{v \in X} \min\{cov(v,X),0.25\cdot cov(v,V)\}$. Finally, the summary quality $f$ of a subset $X$ of sentences is defined as $f(X)=\sum_{v\in X} w(v)+\alpha \cdot \sum_{C \in \mathcal{C}} |X \cap C|^{1/2} +\beta \cdot \sum_{v \in X} \min\{cov(v,X),0.25\cdot cov(v,V)\}$, which is monotone submodular, because each item (i.e., popularity, cluster contribution or cover contribution) is monotone submodular~\cite{dasgupta2013summarization}. In the experiments, we set $\alpha=0.2$ and $\beta=0.8$.

As the semantic similarity $s(v_i,v_j)$ between two sentences has been calculated, their distance can be naturally measured by $d'(v_i,v_j)=1-s(v_i,v_j)$, which is, however, not a metric. In~\cite{dasgupta2013summarization}, the edge weight $s(v_i,v_j)$ between $v_i$ and $v_j$ on the graph is replaced by $d'(v_i,v_j)$, and the weight of the shortest path from $v_i$ to $v_j$ in the resulting graph is used as the distance $d(v_i,v_j)$, which is now a metric.

We use two real-world corpora for experiments. One is the \emph{DUC 2004} corpus that comprises 50 clusters, which can be downloaded from \url{https://duc.nist.gov/duc2004/tasks.html}. Each cluster corresponds to one summarization task, which contains 10 documents. For the resulting 50 tasks, the number $n$ of sentences is varied from 141 to 494, and the average number is 235.7. For each task, the goal is to select a subset of sentences to make a good summary of the contained 10 documents. The other corpus is \emph{New York Times Comments} (briefly denoted as \emph{NYT Comments}), which can be downloaded from \url{https://www.kaggle.com/aashita/nyt-comments}. We extract 50 articles, each of which is associated with anywhere from 100-200 comments. A comment contains several sentences, and can be viewed as a document. Now each article corresponds to one summarization task, and the goal is to select a subset of sentences to make a good summary of the associated comments. For these 50 tasks, the number $n$ of sentences is varied from 218 to 771, and is 410.7 on average.

Table~\ref{tab-documment} shows the results by fixing the tradeoff parameter $\lambda$ between quality and diversity as $1.0$ and varying the budget $k$ among $\{15,20,\ldots,50\}$. Table~\ref{tab-documment-lambda} shows the results by fixing $k=20$ and varying $\lambda$ among $\{0.1,0.2,\ldots,1.0\}$. We can observe that local search is better than the greedy algorithm, and the GSEMO always achieves the best average objective value. By the Wilcoxon signed-rank test with confidence level $0.05$, the GSEMO is significantly better than local search in most cases, except for $k\in\{30,40,45,50\}$ on \emph{DUC 2004} and $k\in\{20,40,50\}$ on \emph{NYT Comments} in Table~\ref{tab-documment}, and $\lambda=1.0$ on \emph{NYT Comments} in Table~\ref{tab-documment-lambda}. We also note that for \emph{NYT Comments} in Table~\ref{tab-documment-lambda}, the average objective value obtained by the greedy algorithm even decreases when $\lambda$ increases (e.g., from $0.4$ to $0.5$), disclosing the limited performance of the greedy algorithm in practice. By simply selecting the same subset of sentences, increasing $\lambda$ obviously will lead to a larger objective value. However, a larger $\lambda$ may actually lead to a very different search behavior of the greedy algorithm, which may even generate a subset with a smaller objective value as we have observed.

\begin{table*}[h!]\caption{The objective $f+\lambda \cdot div$ value (mean$\pm$std) of the algorithms on the \emph{DUC2004} and \emph{NYT Comments} data sets for $\lambda=1.0$ and $k \in \{15,20,\ldots,50\}$. For each $k$, the largest objective value is bolded, and `$\bullet$' denotes that the GSEMO is significantly better than the corresponding algorithm by the Wilcoxon signed-rank test with confidence level $0.05$.}\label{tab-documment}
\scriptsize
\begin{center}
\setlength{\tabcolsep}{0.75mm}{
\begin{tabular}{c|llllllll}
\hline\noalign{\vspace{0.2em}}
\multicolumn{9}{c}{\emph{DUC 2004} ($n=235.7$ on average)}\\
\hline
$k$  & \multicolumn{1}{c}{$15$} & \multicolumn{1}{c}{$20$} & \multicolumn{1}{c}{$25$} & \multicolumn{1}{c}{$30$} &\multicolumn{1}{c}{$35$} & \multicolumn{1}{c}{$40$} & \multicolumn{1}{c}{$45$} & \multicolumn{1}{c}{$50$}\\
\hline
Greedy &117.0$\pm$9.79$\bullet$ & 200.4$\pm$16.76$\bullet$ & 302.7$\pm$26.81$\bullet$ & 419.3$\pm$37.42$\bullet$ & 546.1$\pm$54.05$\bullet$ & 682.2$\pm$74.92$\bullet$ & 826.8$\pm$102.72$\bullet$ & 978.6$\pm$133.58$\bullet$ \\
Local Search & 140.8$\pm$7.77$\bullet$ & 236.0$\pm$15.12$\bullet$ & 345.9$\pm$27.93$\bullet$ & 467.4$\pm$45.53 & 596.3$\pm$67.55 $\bullet$ & 734.0$\pm$95.35 & 878.2$\pm$126.01 & 1030.8$\pm$162.46 \\
GSEMO &\bf{146.7$\pm$8.97} & \bf{243.3$\pm$18.59} & \bf{353.1$\pm$32.75} & \bf{469.8$\pm$48.94} & \bf{598.6$\pm$69.25} & \bf{737.3$\pm$100.72} & \bf{880.7$\pm$130.46} & \bf{1031.2$\pm$163.16}\\
\hline\hline\noalign{\vspace{0.2em}}
\multicolumn{9}{c}{\emph{NYT Comments} ($n=410.7$ on average)}\\
\hline
$k$  & \multicolumn{1}{c}{$15$} & \multicolumn{1}{c}{$20$} & \multicolumn{1}{c}{$25$} & \multicolumn{1}{c}{$30$} &\multicolumn{1}{c}{$35$} & \multicolumn{1}{c}{$40$} & \multicolumn{1}{c}{$45$} & \multicolumn{1}{c}{$50$}\\
\hline
Greedy & 114.4$\pm$6.09$\bullet$ & 198.9$\pm$11.32$\bullet$ & 303.4$\pm$18.22$\bullet$ & 426.5$\pm$27.38$\bullet$ & 567.0$\pm$39.12$\bullet$ & 722.2$\pm$51.43$\bullet$ & 892.2$\pm$66.61$\bullet$ & 1075.4$\pm$82.26$\bullet$ \\
Local Search & 136.3$\pm$6.78$\bullet$ & 226.9$\pm$13.01 & 342.1$\pm$19.42$\bullet$ & 476.8$\pm$26.03$\bullet$ & 628.0$\pm$36.32$\bullet$ & 793.9$\pm$50.07 & 975.5$\pm$67.28 $\bullet$ & 1170.1$\pm$90.09 \\
GSEMO &\bf{138.6$\pm$6.55} & \bf{231.8$\pm$10.77} & \bf{346.2$\pm$17.75} & \bf{480.4$\pm$25.26} & \bf{632.3$\pm$35.82} & \bf{798.7$\pm$49.87} & \bf{978.5$\pm$67.66} & \bf{1171.7$\pm$89.97}\\
\hline
\end{tabular}}
\end{center}
\end{table*}

\begin{table*}[h!]\centering\caption{The objective $f+\lambda \cdot div$ value (mean$\pm$std) of the algorithms on the \emph{DUC2004} and \emph{NYT Comments} data sets for $k=20$ and $\lambda \in \{0.1,0.2,\ldots,1.0\}$. For each $\lambda$, the largest objective value is bolded, and `$\bullet$' denotes that the GSEMO is significantly better than the corresponding algorithm by the Wilcoxon signed-rank test with confidence level $0.05$.}\label{tab-documment-lambda}\vspace{1em}
\scriptsize
\begin{lrbox}{\tablebox}
\setlength{\tabcolsep}{0.6mm}{
\begin{tabular}{c|llllllllll}
\hline\noalign{\vspace{0.2em}}
\multicolumn{11}{c}{\emph{DUC 2004} ($n=235.7$ on average)}\\
\hline
$\lambda$  & \multicolumn{1}{c}{$0.1$} & \multicolumn{1}{c}{$0.2$} & \multicolumn{1}{c}{$0.3$} & \multicolumn{1}{c}{$0.4$} &\multicolumn{1}{c}{$0.5$} & \multicolumn{1}{c}{$0.6$} & \multicolumn{1}{c}{$0.7$} & \multicolumn{1}{c}{$0.8$} & \multicolumn{1}{c}{$0.9$} & \multicolumn{1}{c}{$1.0$}\\
\hline
Greedy & 175.4$\pm$25.08$\bullet$ & 177.8$\pm$24.28$\bullet$ & 180.4$\pm$23.64$\bullet$ & 183.6$\pm$23.09$\bullet$ & 185.6$\pm$24.58$\bullet$ & 187.7$\pm$24.49$\bullet$ & 189.1$\pm$21.95$\bullet$ & 191.0$\pm$19.40$\bullet$ & 194.7$\pm$18.41$\bullet$ & 200.4$\pm$16.76$\bullet$ \\
Local Search & 208.9$\pm$24.29$\bullet$ & 212.0$\pm$23.88$\bullet$ & 212.6$\pm$21.61$\bullet$ & 216.2$\pm$20.88$\bullet$ & 220.6$\pm$21.93$\bullet$ & 223.0$\pm$21.05$\bullet$ & 225.0$\pm$19.19$\bullet$ & 226.6$\pm$17.44$\bullet$ & 230.9$\pm$16.20$\bullet$ & 236.0$\pm$15.12$\bullet$ \\
GSEMO  & \bf{217.0$\pm$27.54} & \bf{219.9$\pm$25.95} & \bf{222.1$\pm$25.79} & \bf{226.0$\pm$25.34} & \bf{227.6$\pm$23.80} & \bf{230.2$\pm$23.70} & \bf{232.1$\pm$22.76} & \bf{238.0$\pm$23.06} & \bf{239.3$\pm$20.05} & \bf{243.3$\pm$18.59}\\
\hline\hline\noalign{\vspace{0.2em}}
\multicolumn{11}{c}{\emph{NYT Comments} ($n=410.7$ on average)}\\
\hline
$\lambda$  & \multicolumn{1}{c}{$0.1$} & \multicolumn{1}{c}{$0.2$} & \multicolumn{1}{c}{$0.3$} & \multicolumn{1}{c}{$0.4$} &\multicolumn{1}{c}{$0.5$} & \multicolumn{1}{c}{$0.6$} & \multicolumn{1}{c}{$0.7$} & \multicolumn{1}{c}{$0.8$} & \multicolumn{1}{c}{$0.9$} & \multicolumn{1}{c}{$1.0$}\\
\hline
Greedy & 192.3$\pm$18.09$\bullet$ & 194.1$\pm$17.43$\bullet$ & 195.8$\pm$17.21$\bullet$ & 196.5$\pm$18.55$\bullet$ & 189.0$\pm$25.97$\bullet$ & 178.2$\pm$17.68$\bullet$ & 181.1$\pm$15.65$\bullet$ & 186.0$\pm$15.07$\bullet$ & 189.8$\pm$13.13$\bullet$ & 198.9$\pm$11.32$\bullet$ \\
Local Search & 210.2$\pm$14.76$\bullet$ & 211.7$\pm$14.49$\bullet$ & 213.2$\pm$15.91$\bullet$ & 214.3$\pm$16.75$\bullet$ & 216.6$\pm$16.17$\bullet$ & 219.4$\pm$14.37$\bullet$ & 220.6$\pm$13.97$\bullet$ & 223.1$\pm$13.22$\bullet$ & 223.5$\pm$13.68$\bullet$ & 226.9$\pm$13.01\\
GSEMO & \bf{213.8$\pm$14.97} & \bf{215.2$\pm$14.43}  & \bf{216.9$\pm$14.83}  & \bf{218.9$\pm$13.68} & \bf{220.0$\pm$14.43}  & \bf{222.0$\pm$14.20}  & \bf{224.9$\pm$13.39}  & \bf{226.6$\pm$13.60}  & \bf{227.5$\pm$12.44}  & \bf{231.8$\pm$10.77} \\
\hline
\end{tabular}}
\end{lrbox}
\scalebox{0.85}{\usebox{\tablebox}}
\end{table*}

\subsection{Running Time}\label{sec-runtime}

In the previous subsections, we have shown the superior optimization performance of the GSEMO over the greedy algorithm and local search. Next, we consider the running time, measured in the number of objective function evaluations. The greedy algorithm takes nearly $kn$ time. Local search starts from the output of the greedy algorithm, and repeatedly performs the best local swap operation (each costs $k(n-k)$ evaluations) until convergence. For the GSEMO, the number of iterations has been set to $enk^3/2$ derived in theoretical analysis, and each iteration costs one evaluation. We want to examine how efficient the GSEMO can be in practice. When $k=20$, we plot the curve of the average objective value over the running time for these algorithms on the tested six data sets. In Figure~\ref{fig-time}, one unit on the $x$-axis corresponds to $kn$ objective evaluations. The GSEMO uses more time to achieve a better performance in Figure~\ref{fig-time}(a--b), while can be faster and better in Figure~\ref{fig-time}(c--f). Compared with the theoretical running time $enk^3/2\approx 543 kn$, the GSEMO is much more efficient. This is expected, because we have used a theoretical worst-case upper bound on the time for the GSEMO to achieve a good approximation. Note that the curves of the greedy algorithm and local search in Figure~\ref{fig-time}(c--d) are almost overlapped, which is because the improvement of local search over the greedy algorithm is very little in these two cases.

\begin{figure*}[ht!]\centering
\begin{minipage}[c]{0.32\linewidth}\centering
        \includegraphics[width=1\linewidth]{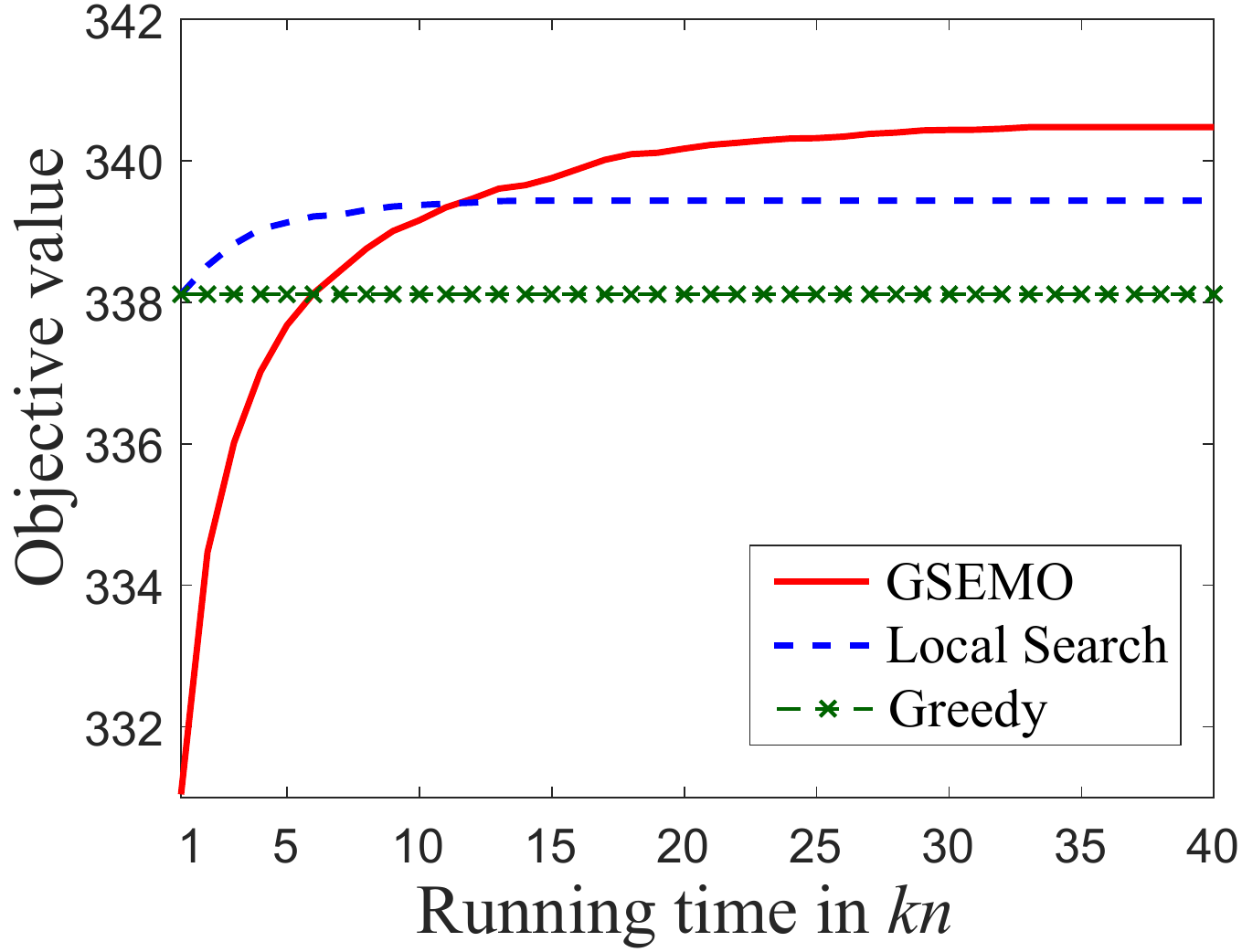}
\end{minipage}
\begin{minipage}[c]{0.32\linewidth}\centering
        \includegraphics[width=1\linewidth]{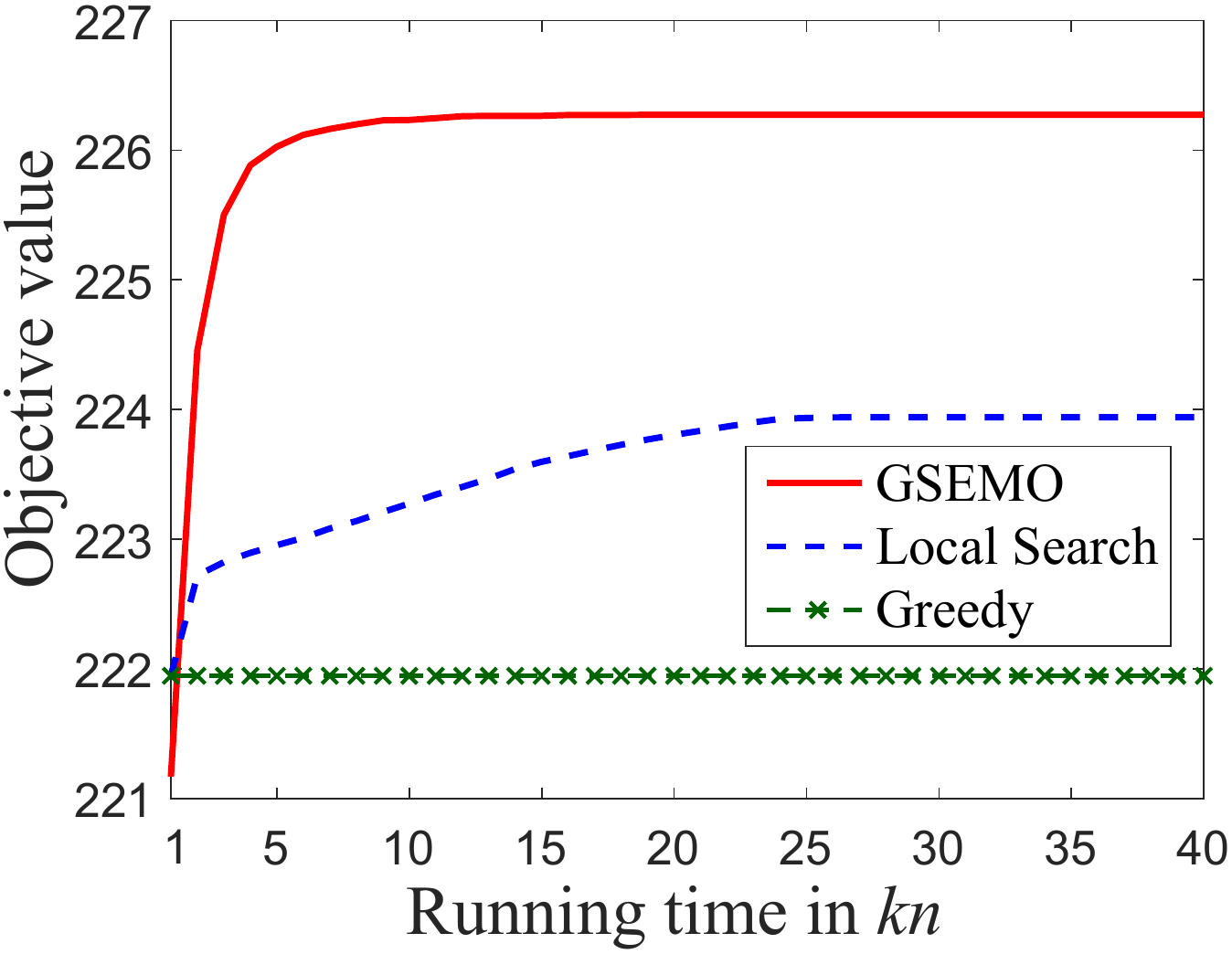}
\end{minipage}
\begin{minipage}[c]{0.32\linewidth}\centering
        \includegraphics[width=1\linewidth]{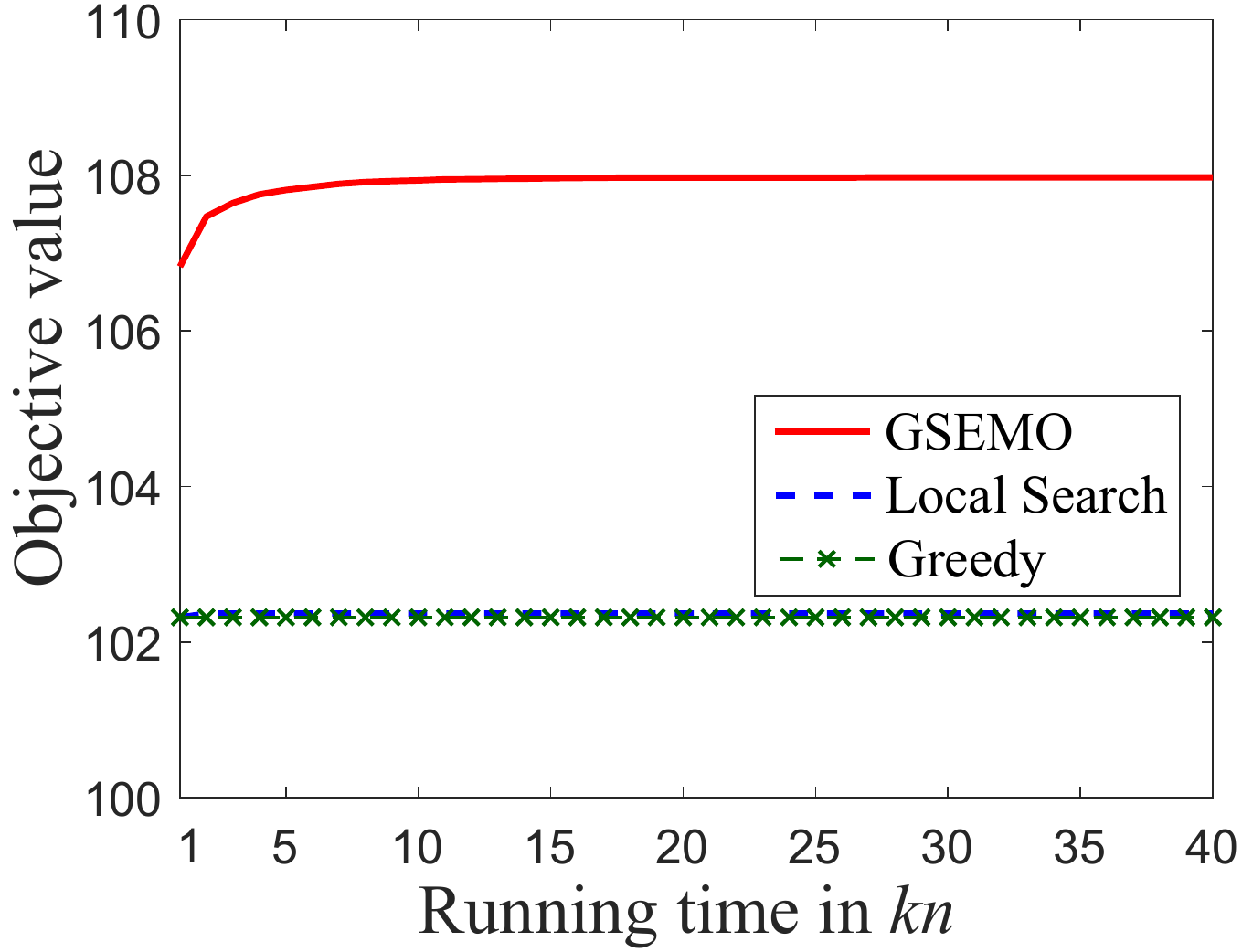}
\end{minipage}\\\vspace{0.5em}
\begin{minipage}[c]{0.32\linewidth}\centering
    \small(a) synthetic
\end{minipage}
\begin{minipage}[c]{0.32\linewidth}\centering
    \small(b) \emph{letor}
\end{minipage}
\begin{minipage}[c]{0.32\linewidth}\centering
    \small(c) \emph{enron}
\end{minipage}
\\\vspace{0.8em}
\begin{minipage}[c]{0.32\linewidth}\centering
        \includegraphics[width=1\linewidth]{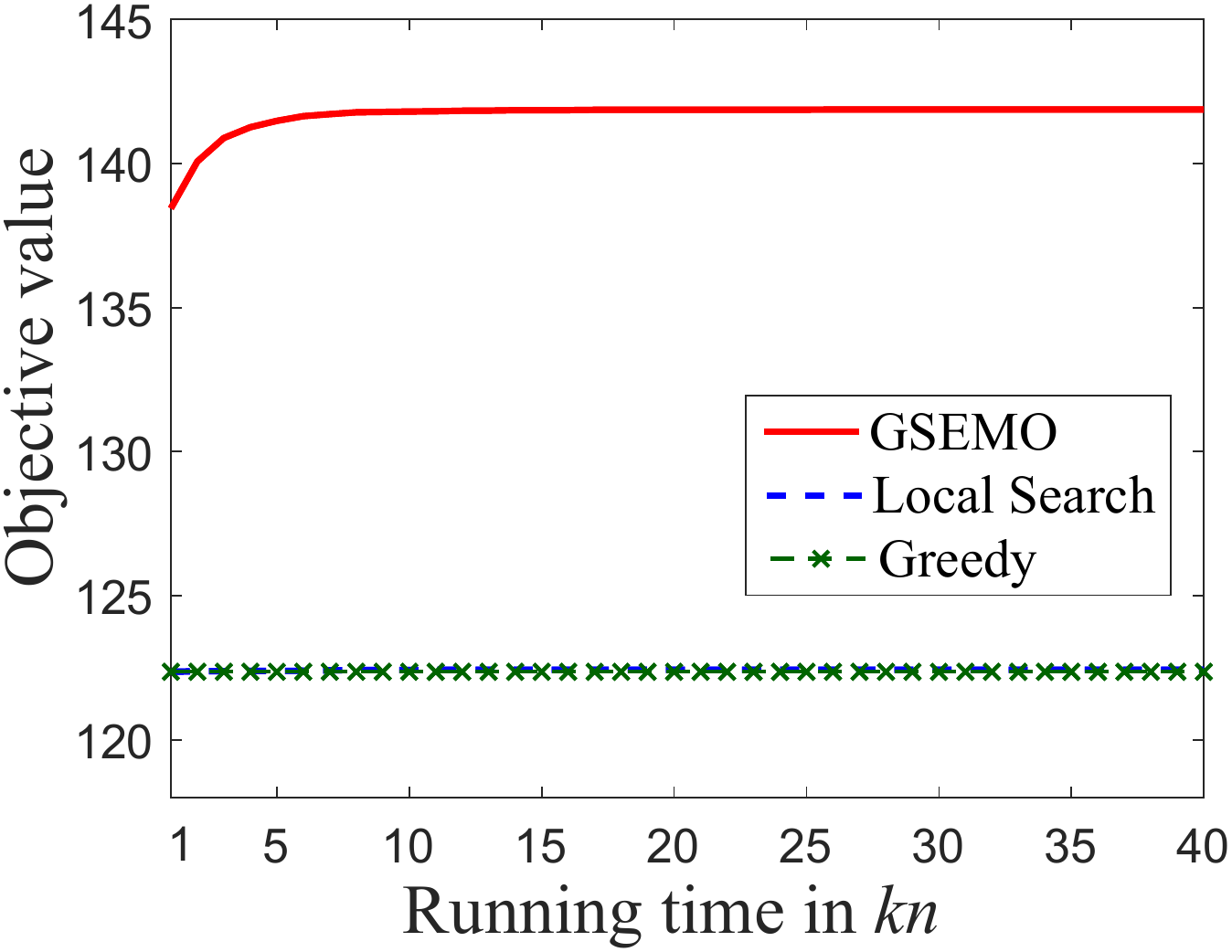}
\end{minipage}
\begin{minipage}[c]{0.32\linewidth}\centering
        \includegraphics[width=1\linewidth]{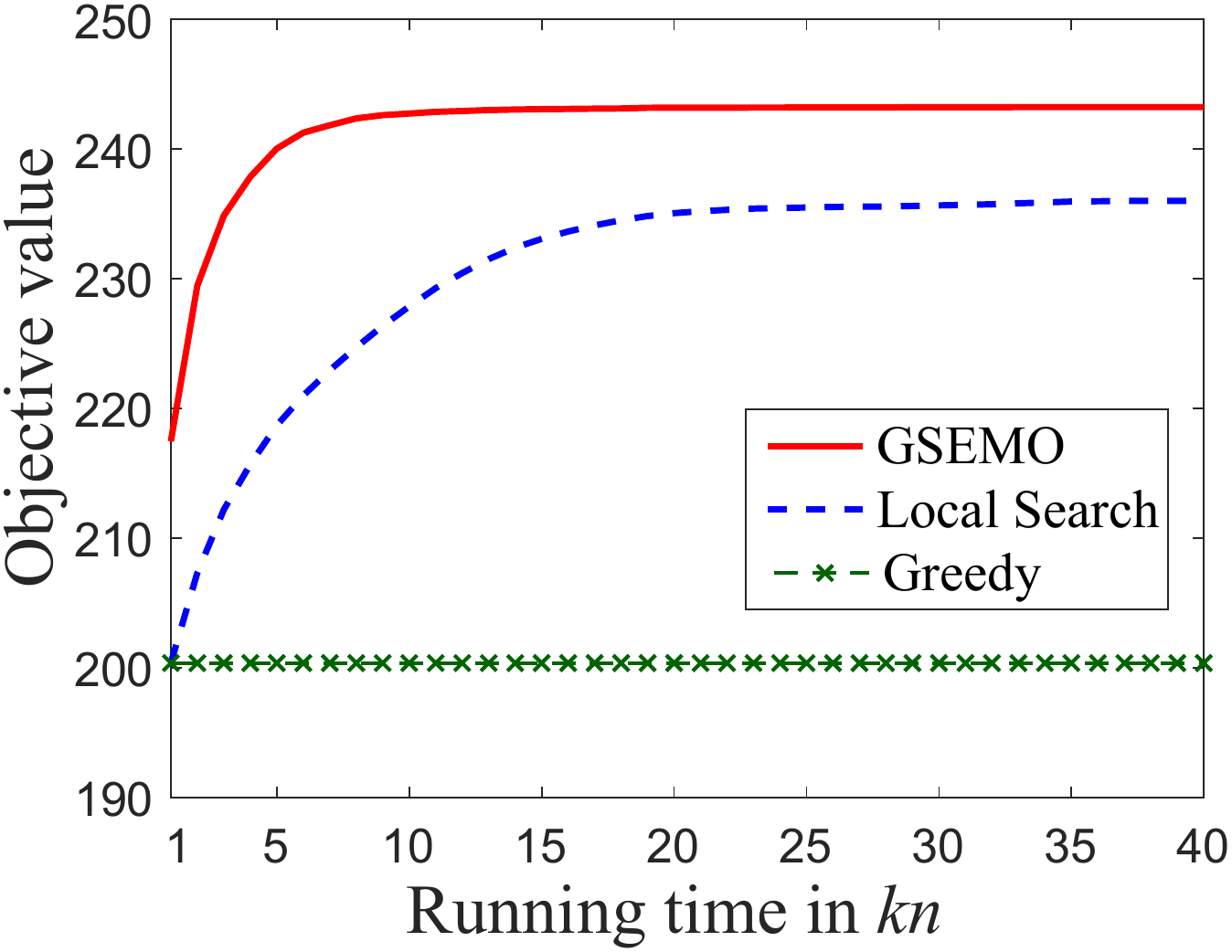}
\end{minipage}
\begin{minipage}[c]{0.32\linewidth}\centering
        \includegraphics[width=1\linewidth]{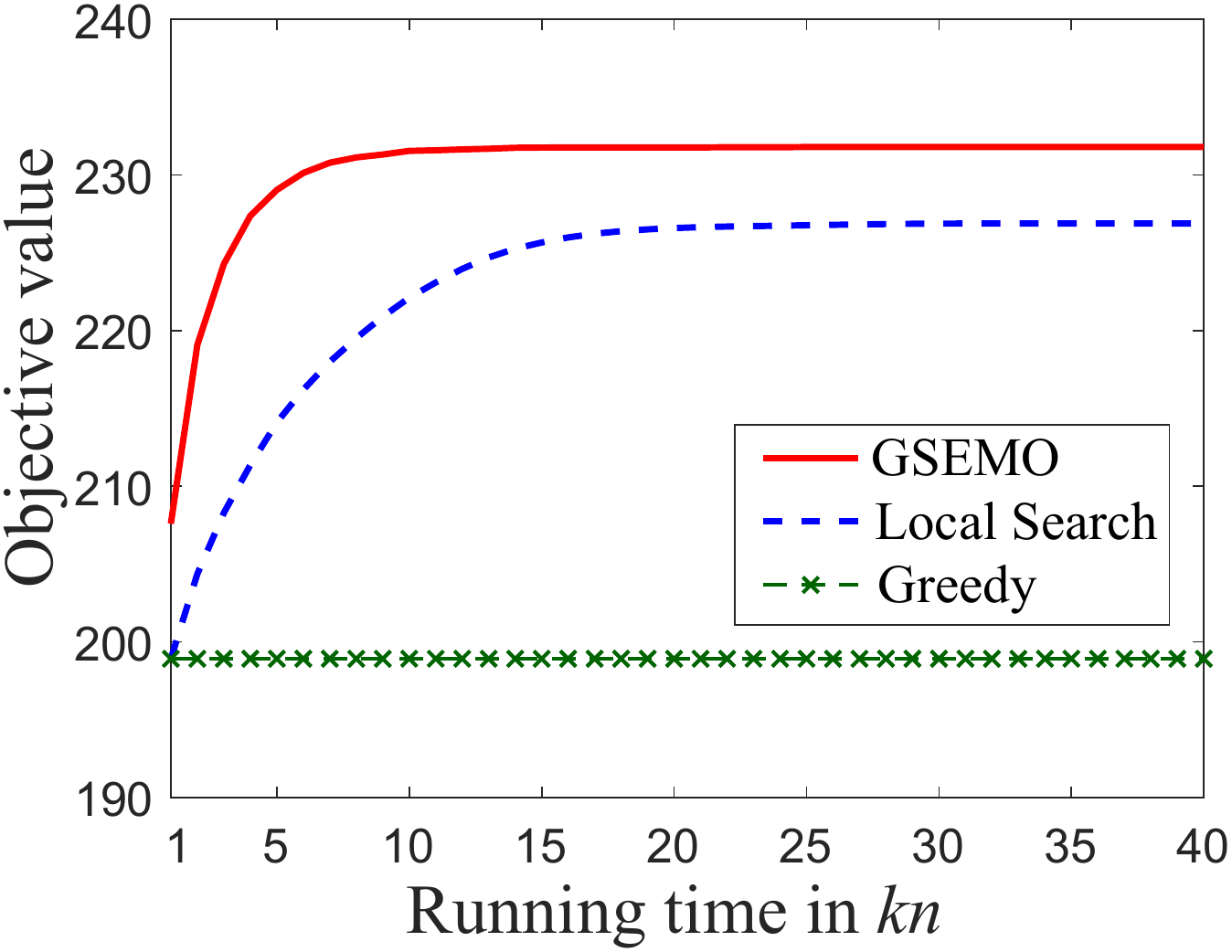}
\end{minipage}\\\vspace{0.5em}
\begin{minipage}[c]{0.32\linewidth}\centering
    \small(d) \emph{medical}
\end{minipage}
\begin{minipage}[c]{0.32\linewidth}\centering
    \small(e) \emph{DUC 2004}
\end{minipage}
\begin{minipage}[c]{0.32\linewidth}\centering
    \small(f) \emph{NYT Comments}
\end{minipage}\\
\caption{The average objective value vs. running time (i.e., number of objective evaluations), where $k=20$.}\label{fig-time}
\end{figure*}

\subsection{Dynamic Environments}

Finally, we examine the performance of the GSEMO under dynamic environments, by using the synthetic data set on the application of web-based search with $\lambda=1.0$ and $k=20$. Each dynamic change on the objective perturbs the relevance of an item or the distance between two items uniformly at random (i.e., each with probability $1/2$) and repeats this process $m$ times independently. If making a perturbation on the relevance, an item $v$ is randomly chosen, and its relevance $f(v)$ is reset from $[0,1]$ randomly. If making a perturbation on the distance, two items $v_i,v_j$ are randomly chosen, and their distance $d(v_i,v_j)$ is reset from $[1,2]$ randomly. $m$ is set to 50.

Note that the greedy algorithm cannot be applied under dynamic environments. To make fair comparison, both local search and the GSEMO start from the solution generated by the greedy algorithm, and run with the same number $t$ of objective function evaluations once seeing a dynamic change on the objective. $t$ is set to $10kn$. The average results over 50 synthetic data sets are shown in Figure~\ref{fig-dynamic}(a), where the objective changes 50 times. It can be clearly observed that the GSEMO always performs better than local search. By setting $t=5kn$ (i.e., a shorter running time allowed after each dynamic change) or $m=300$ (i.e., a larger magnitude of dynamic change), the GSEMO is still better as shown in Figure~\ref{fig-dynamic}(b--c).

\begin{figure*}[ht!]\centering
\begin{minipage}[c]{0.32\linewidth}\centering
        \includegraphics[width=1\linewidth]{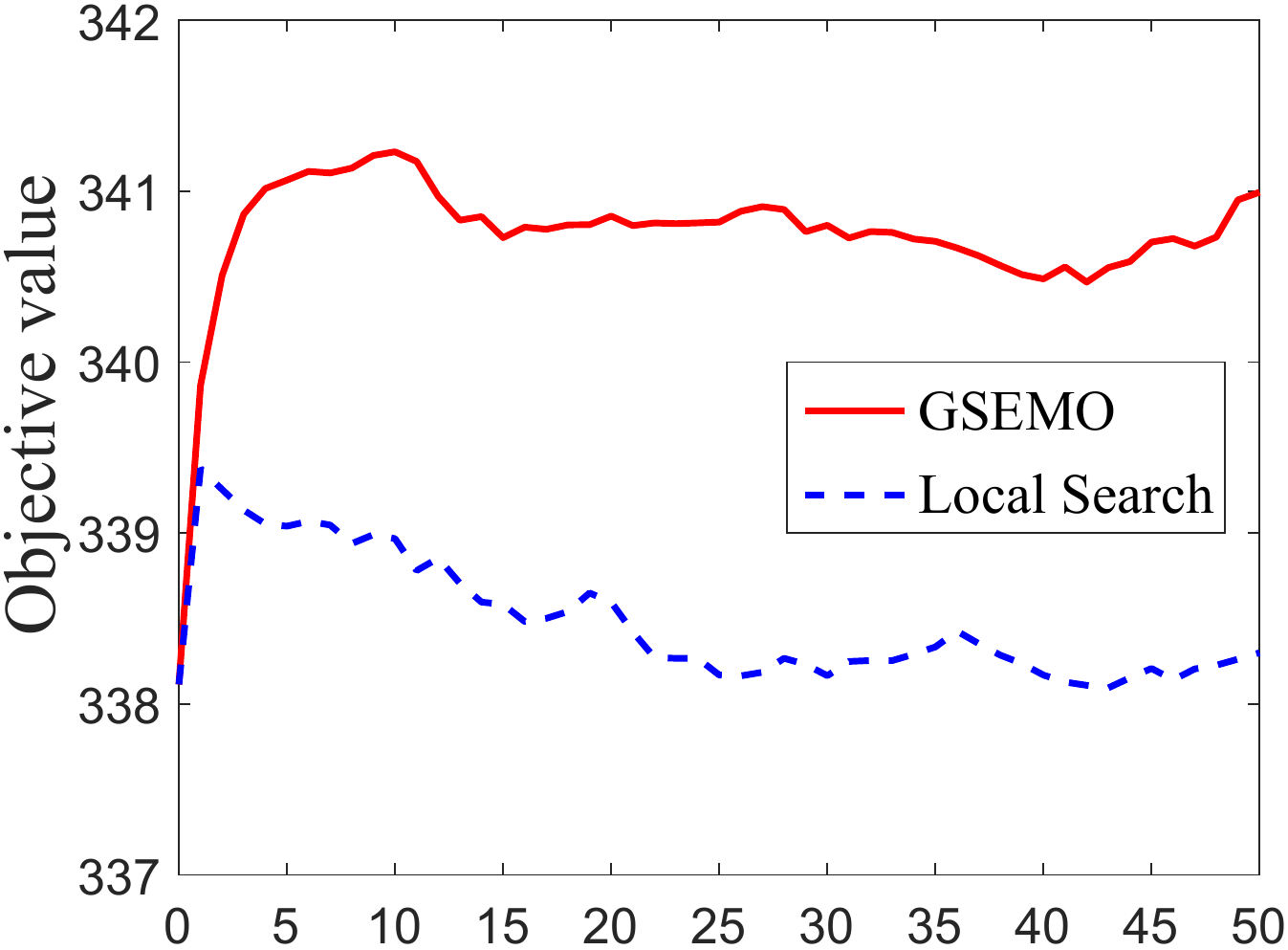}
\end{minipage}
\begin{minipage}[c]{0.32\linewidth}\centering
        \includegraphics[width=1\linewidth]{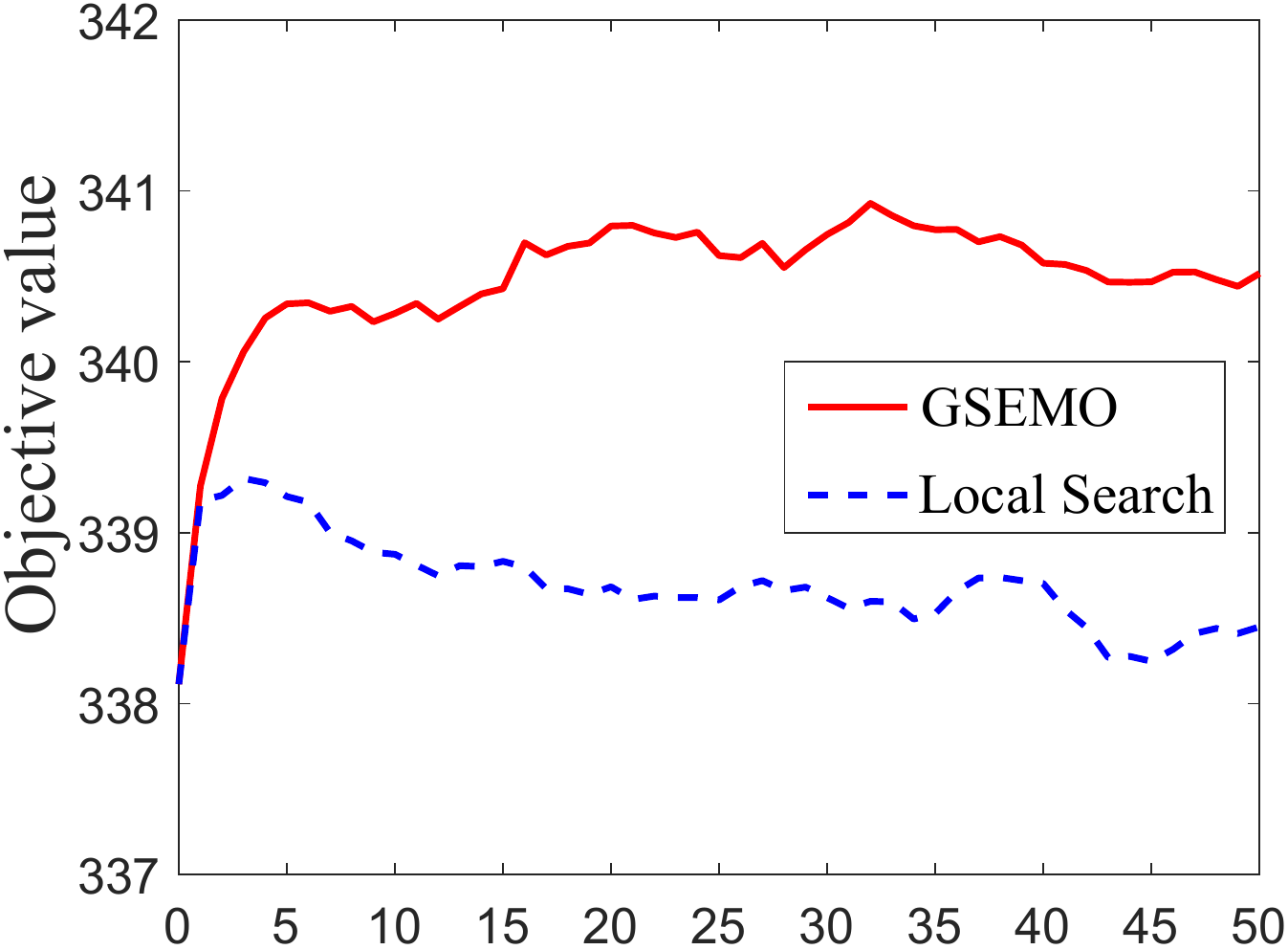}
\end{minipage}
\begin{minipage}[c]{0.32\linewidth}\centering
        \includegraphics[width=1\linewidth]{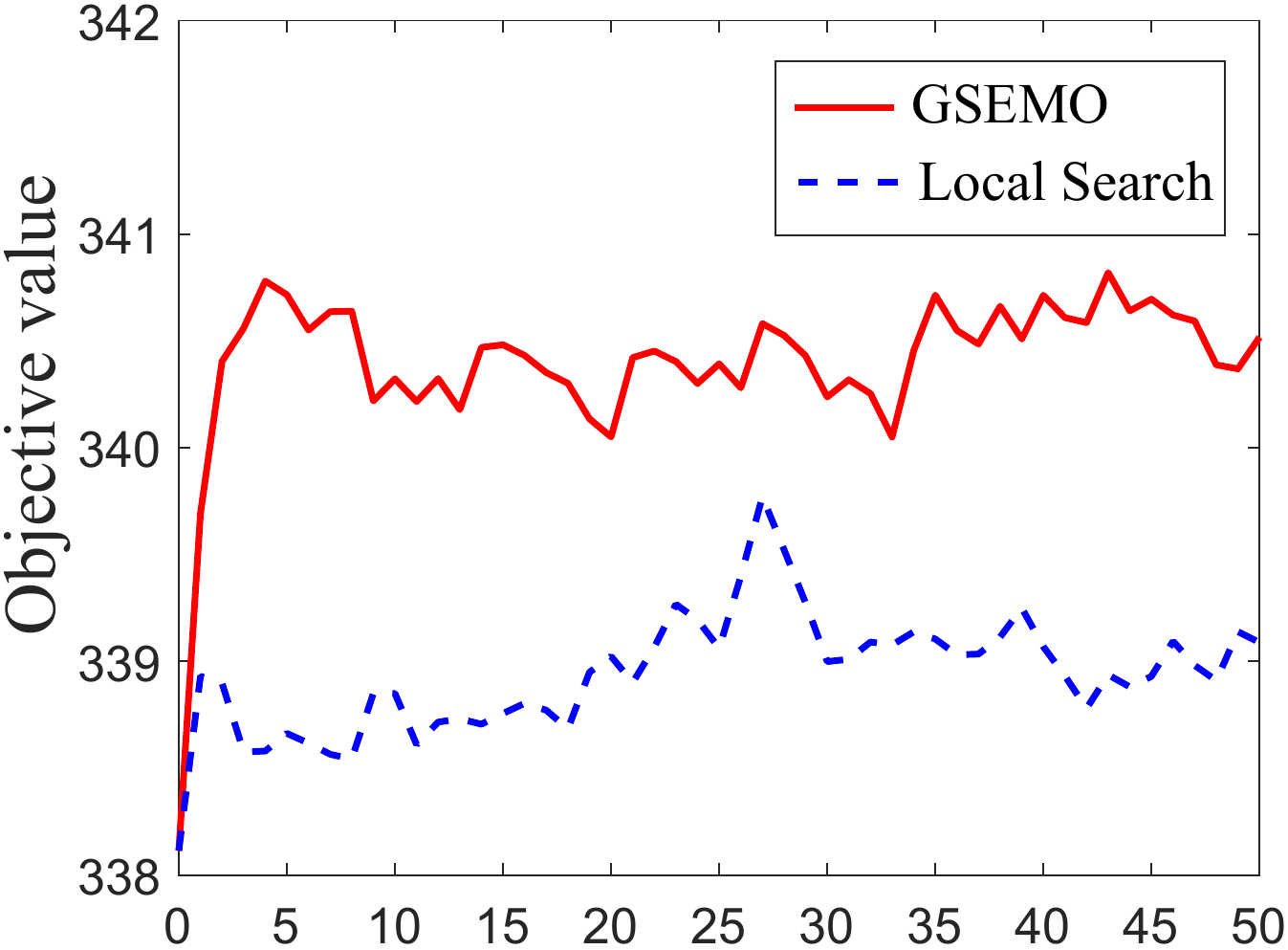}
\end{minipage}\\\vspace{0.5em}
\begin{minipage}[c]{0.32\linewidth}\centering
    \small(a) $m=50$, $t=10kn$
\end{minipage}
\begin{minipage}[c]{0.32\linewidth}\centering
    \small(b) $m=50$, $t=5kn$
\end{minipage}
\begin{minipage}[c]{0.32\linewidth}\centering
    \small(c) $m=300$, $t=10kn$
\end{minipage}
\caption{The average objective values achieved by the GSEMO and local search on the 50 synthetic data sets of web-based search under dynamic environments, where the objective changes 50 times.}\label{fig-dynamic}
\end{figure*}

\section{Conclusion}

This paper applies MOEAs to solve the result diversification problem with wide applications in diverse areas. In particular, the original result diversification problem is transformed as a bi-objective maximization problem, which is then solved by a simple MOEA, i.e., the GSEMO. We prove that the GSEMO can achieve the optimal polynomial-time approximation ratio of $1/2$ for the problem with a cardinality constraint. While for the more general matroid constraint, it can achieve an asymptotically optimal polynomial-time approximation ratio of $1/2-\epsilon/(4n)$, and can maintain this ratio in polynomial running time when the objective changes dynamically. Note that this is the first time that a $(1/2-\epsilon/(4n))$-approximation ratio has been proved under dynamic environments. Experimental results on the applications of web-based search, multi-label feature selection and document summarization clearly show the superiority of the GSEMO over the previous best algorithms, i.e., the greedy algorithm and local search, under both static and dynamic environments.

The diversity of a subset of items has been the sum-diversity, measured by the sum of distances between items. Though we have proved that the GSEMO can still achieve good approximation ratios for the min-diversity (measured by the minimum distance) and mst-diversity (measured by the weight of the minimum spanning tree), it would be still interesting to examine the approximation performance of the GSEMO under other diversity measures~\cite{chandra2001approximation}. Our analyses require that the distance function is a metric or satisfies the parameterized triangle inequality. It is expected to study the performance of the GSEMO under other types of distances, e.g., distances of negative type~\cite{cevallos2019improved}.

For the result diversification problem with a cardinality constraint and the sum-diversity, the GSEMO has achieved the optimal polynomial-time approximation ratio of $1/2$ by maximizing a carefully designed objective function $(1+|\bm{x}|/k)f(\bm{x})/2+\lambda \cdot div(\bm{x})$ and minimizing the subset size $|\bm{x}|$ simultaneously. We have also shown that if maximizing the original objective function $f(\bm{x})+\lambda \cdot div(\bm{x})$, the GSEMO achieves an approximation ratio of $1/2-\epsilon/(4n)$. Thus, an interesting future work is to examine whether an exact approximation ratio of $1/2$ can be achieved using the original objective function. Note that the greedy algorithm can also achieve an approximation ratio of $1/2$ by maximizing $f(\bm{x})/2+\lambda \cdot div(\bm{x})$, but its approximation performance by maximizing the original objective function $f(\bm{x})+\lambda \cdot div(\bm{x})$ has not yet been known, which is worth to be studied. If we can construct an example where the approximation ratio of the greedy algorithm using the original objective function is worse than $1/2-\epsilon/(4n)$, it implies that without using the knowledge of the problem, the GSEMO is better than the greedy algorithm, reflecting the general-purpose property of EAs to some extent.

Under dynamic environments, local search fails to maintain an asymptotically optimal polynomial-time approximation ratio of $1/2-\epsilon/(4n)$, while the GSEMO can. This is just a first step towards solving dynamic result diversification problems. In the future, an important work is to develop faster dynamic algorithms, which, however, may require some limitations on the dynamic change.

The current MOEA employed to solve the transformed bi-objective optimization problem is the GSEMO, which, though very simple, has already shown good approximation performance. In fact, it can be replaced by more complicated MOEAs (e.g., NSGA-II~\cite{deb2002fast}), which may further bring performance improvement, but the theoretical analysis will be a challenging task. The original objective function of result diversification is a linear combination of the quality function and the diversity, which is not separated by the bi-objective transformation. Thus, it would be interesting to treat the quality and diversity as two objectives, and optimize them simultaneously~\cite{ahmed2016discovering}. In the experiments, it has been shown that the GSEMO is much more efficient in practice than in theory, and even can be both faster and better than local search and the greedy algorithm sometimes. But to tackle huge data sets, it may be needed to develop distributed version of the GSEMO.

\section{Acknowledgments}

The authors want to thank the associate editor and anonymous reviewers for their helpful comments and suggestions. This work was supported by the National Science Foundation of China (62022039, 61921006), the project of HUAWEI-LAMDA Joint Laboratory of Artificial Intelligence, and the Collaborative Innovation Center of Novel Software Technology and Industrialization.

\bibliography{aij-diversification-moea}
\bibliographystyle{abbrvnat}

\end{document}